\definecolor{dkgreen}{rgb}{0,0.6,0}
\definecolor{gray}{rgb}{0.5,0.5,0.5}
\definecolor{mauve}{rgb}{0.58,0,0.82}
\tiny\color{gray},
\title{Masked Completion via Structured Diffusion with White-Box Transformers}
\author{%
    Druv Pai\thanks{Correspondence to: Druv Pai, \texttt{druvpai@berkeley.edu}.} \\ UC Berkeley
    \And
    Ziyang Wu \\ UC Berkeley
    \And
    Sam Buchanan \\ TTIC
    \And
    Yaodong Yu \\ UC Berkeley
    \And
    Yi Ma \\ UC Berkeley \& HKU %
}
\begin{document}

\maketitle

\begin{abstract}
    Modern learning frameworks often train deep neural networks with massive amounts of unlabeled data to learn representations by solving simple pretext tasks, then use the representations as foundations for downstream tasks. These networks are empirically designed; as such, they are usually not interpretable, their representations are not structured, and their designs are potentially redundant. White-box deep networks, in which each layer explicitly identifies and transforms structures in the data, present a promising alternative. However, existing white-box architectures have only been shown to work at scale in supervised settings with labeled data, such as classification. In this work, we provide the first instantiation of the white-box design paradigm that can be applied to large-scale unsupervised representation learning. We do this by exploiting a fundamental connection between diffusion, compression, and (masked) completion, deriving a deep transformer-like masked autoencoder architecture, called \ours{}, in which the role of each layer is mathematically fully interpretable: they transform the data distribution to and from a structured representation. Extensive empirical evaluations confirm our analytical insights. \ours{} demonstrates highly promising performance on large-scale imagery datasets while using only \(\sim\)30\% of the parameters compared to the standard masked autoencoder with the same model configuration. The representations learned by \ours{} have explicit structure and also contain semantic meaning. Code is available on \href{https://github.com/Ma-Lab-Berkeley/CRATE}{GitHub}.
\end{abstract}

\vspace{-0.25in}
\section{Introduction}\label{sec:intro}

In recent years, deep learning has been called upon to process continually larger quantities of high-dimensional, noisy, and unlabeled data. A key property which makes these ever-larger tasks tractable is that the high-dimensional data tends to have \textit{low-dimensional geometric and statistical structure}. Modern deep networks tend to learn (implicit or explicit) representations of this structure, which are then used to efficiently perform downstream tasks. Learning these representations is thus of central importance in machine learning, and there are so far several common methodologies for this task. 
We focus our attention below on approaches that incrementally transform the data towards the end representation with \textit{simple, mathematically-interpretable primitives}. Discussion of popular alternatives is postponed to \Cref{app:related_work}.

\textbf{Denoising-diffusion models for high-dimensional data.} A popular method for learning implicit representations of high-dimensional data is \textit{learning to denoise}: a model that can \textit{denoise}, i.e., remove noise from a corrupted 
observation from the data distribution (to the extent information-theoretically possible), can be chained across noise levels to transform the data distribution to and from certain highly structured distributions, such as an isotropic Gaussian,
enabling efficient sampling
\citep{Hyvarinen2005-fi,Vincent2011-dr,Sohl-Dickstein2015-kz,ho2020denoising,NEURIPS2021_6e289439,Song2020-xo,Song2023-rs}.
Crucially, in the case of data with low-dimensional structure---including the highly nonlinear structure characteristic of natural images---these models can be learned efficiently
\citep{chen2023score,oko2023diffusion,Moitra2020-lb}, and as a result this
framework has significant practical impact \citep{Rombach2022-he}. Despite this
progress, these techniques have been largely limited to use in generative
modeling; a key reason is the unstructured nature of the final `noisy' state of
the diffusion process, which makes it challenging to control and interpret the
model's learned implicit representation of the data.

\textbf{White-box models and structured representation learning.} In contrast,
\textit{white-box} models are designed to produce explicit and structured
representations of the data distribution according to a desired parsimonious
configuration, such as sparsity \citep{gregor2010learning,zhai2020complete} or
(piecewise) linearity \citep{chan2022redunet}. Recent work
\citep{chan2022redunet,Yu2023-ig} has built white-box deep networks via unrolled
optimization: namely, to obtain representations with a desired set of
properties, one constructs an objective function which encourages these
desiderata, then constructs a deep network where each layer is designed to
iteratively optimize the objective. This builds deep networks as a chain of
operators, representing well-understood optimization primitives, which
iteratively transform the representations to the desired structure. For example,
\citet{Yu2023-ig} uses an information-theoretic objective promoting lossy
compression of the data towards a fixed statistical structure to build
a transformer-like architecture named \oursbase{} in the above manner. However,
such-obtained deep networks have yet to be constructed for most unsupervised
contexts. The fundamental difficulty here is that decoder networks must map from
representations to data, and hence \textit{invert} (in a distributional sense)
the transformations made to the data distribution by the encoder. This renders
the unrolled optimization approach used to construct white-box encoders such as
\oursbase{} infeasible for constructing decoders, and instead demands
a fine-grained understanding of the operators that implement the encoder, and
their (distributional) inverses.

\textbf{Our contributions.} To overcome this difficulty and extend
the applicability of white-box models to unsupervised settings, we demonstrate
in this work that these two
paradigms have more in common than previously appreciated. First, we show quantitatively that under certain natural regimes, \textit{denoising} and \textit{compression} are highly similar primitive data processing operations: when the target distribution has low-dimensional structure, both operations implement a projection operation onto this structure. Second, using this insight, we demonstrate a quantitative connection between unrolled discretized diffusion models and unrolled optimization-constructed deep networks.
This leads to a significant expansion of the existing conceptual toolkit for developing white-box neural network architectures, which we use to derive white-box transformer-like encoder and decoder architectures that together form an autoencoding model that we call \ours{}, illustrated in  Fig.\ref{fig:crate_mae_pipeline}. We evaluate \ours{} on the challenging masked autoencoding task \citep{he2022masked} and demonstrate promising performance with large parameter savings over traditional masked autoencoders, along with many side benefits such as emergence of semantic meaning in the representations.

\section{Approach}\label{sec:approach}
\subsection{Setup and Notation}\label{sub:setup}
We use the same notation and basic problem setup as in \cite{Yu2023-ig}. Namely, we have some matrix-valued random variable \(\vX = \mat{\vx_{1},\dots,\vx_{N}} \in \R^{D \times N}\) representing the data, where the \(\vx_{i} \in \R^{D}\) are called ``tokens'' and may be arbitrarily correlated. To obtain representations of the input, we learn an \textit{encoder} \(f \colon \R^{D \times N} \to \R^{d \times N}\); our representations are denoted by the random variable \(\vZ = f(\vX) = \mat{\vz_{1}, \dots, \vz_{N}} \in \R^{d \times N}\), where the token representations are \(\vz_{i} \in \R^{d}\). In the autoencoding setup, we also learn a \textit{decoder} \(g \colon \R^{d \times N} \to \R^{D \times N}\), such that \(\vX \approx \vXh = [\vxh_{1}, \dots, \vxh_{N}] \doteq g(\vZ)\).

Our encoder and decoder will be deep neural networks, and as such they will be composed of several, say \(L\), \textit{layers} each. Write \(f = f^{L} \circ \cdots \circ f^{1} \circ f^{\pre}\) and \(g = g^{\post} \circ g^{L - 1} \circ \cdots \circ g^{0}\), where \(f^{\ell} \colon \R^{d \times N} \to \R^{d \times N}\) and \(g^{\ell} \colon \R^{d \times N} \to \R^{d \times N}\) are the \(\ell\th\) layer of the encoder and decoder respectively, and \(f^{\pre} \colon \R^{D \times N} \to \R^{d \times N}\) and \(g^{\post} \colon \R^{d \times N} \to \R^{D \times N}\) are the pre-~and post-processing layers respectively. The \textit{input} to the \(\ell\th\) layer of the encoder is denoted \(\vZ^{\ell} \doteq \mat{\vz_{1}^{\ell}, \dots, \vz_{N}^{\ell}} \in \R^{d \times N}\), and the \textit{input} to the \(\ell\th\) layer of the decoder is denoted \(\vY^{\ell} \doteq \mat{\vy_{1}^{\ell}, \dots, \vy_{N}^{\ell}} \in \R^{d \times N}\). 

\begin{figure}[t!]
    \begin{center}
        \includegraphics[width=0.95\textwidth]{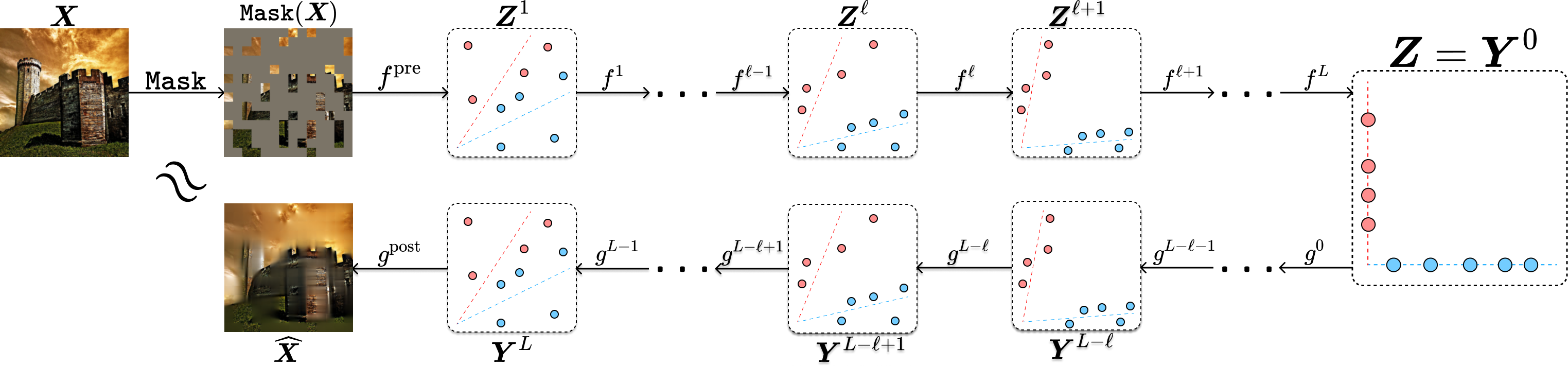}
    \end{center}
    \caption{\small \textbf{Diagram of the overall  white-box \ours{} pipeline, illustrating the end-to-end (masked) autoencoding process.} The token representations are transformed iteratively towards a parsimonious (e.g., compressed and sparse) representation by each encoder layer \(f^{\ell}\). Furthermore, such representations are transformed back to the original image by the decoder layers \(g^{\ell}\). Each encoder layer \(f^{\ell}\) is meant to be (partially) inverted by a corresponding decoder layer \(g^{L - \ell}\).}
    \label{fig:crate_mae_pipeline}
\end{figure}

\subsection{Desiderata, Objective, and Optimization}\label{sub:objective}
Our goal is to use the encoder \(f\) and decoder \(g\) to learn \textit{representations} \(\vZ\) which are \textit{parsimonious} \citep{ma2022principles} and \textit{invertible}; namely, they have \textit{low-dimensional, sparse, (piecewise) linear} geometric and statistical structure, and are (approximately) bijective with the original data \(\vX\).
\citet{Yu2023-ig} proposes to implement these desiderata by positing a \textit{signal model} for the representations:

\begin{codebook}
    Let \(\vZ = \mat{\vz_{1}, \dots, \vz_{N}} \in \R^{d \times N}\) be a random matrix. We impose the following statistical model on \({\vZ}\), parameterized by orthonormal bases \(\vU_{[K]} = (\vU_{k})_{k \in [K]} \in (\R^{d \times p})^{K}\): each token \({\vz}_{i}\) has marginal distribution given by
    \begin{equation}\label{model:gaussian_tokens}
        \vz_{i} \stackrel{d}{=} \vU_{s_{i}}\valpha_{i}, \quad \forall i \in [N]
    \end{equation}
    where \((s_{i})_{i \in [N]} \in [K]^{N}\) are random variables corresponding to the subspace indices, and \((\valpha_{i})_{i \in [N]} \in (\R^{p})^{N}\) are zero-mean Gaussian variables. If we optionally specify a noise parameter \(\sigma \geq 0\), we mean that we ``diffuse'' the tokens with Gaussian noise: by an abuse of notation, each token \({\vz}_{i}\) has marginal distribution given by
    \begin{equation}\label{model:gaussian_tokens_noise}
        \vz_{i} \stackrel{d}{=} \vU_{s_{i}}\valpha_{i} + \sigma \vw_{i}, \quad
        \forall i \in [N]
    \end{equation}
    where \((\vw_{i})_{i \in [N]} \in (\R^{d})^{N}\) are i.i.d.~standard Gaussian variables, independent of \(s_{i}\) and \(\valpha_{i}\).
\end{codebook}

If the \(\vU_{k}\) are sufficiently incoherent and axis-aligned, we expect such representations to maximize the \textit{sparse rate reduction} objective function \citep{Yu2023-ig}:
\begin{equation}\label{eq:sparse_rr}
    \mathbb{E}_{\vZ}[\Delta R(\vZ \mid \vU_{[K]}) - \lambda \norm{\vZ}_{0}] = \mathbb{E}_{\vZ}[R(\vZ) - R^{c}(\vZ \mid \vU_{[K]}) - \lambda \norm{\vZ}_{0}],
\end{equation}
where \(R\) and \(R^{c}\) are \textit{lossy coding rates}, or \textit{rate distortions} \citep{cover1999elements}, which are estimates for the number of bits required to encode the sample up to precision \(\epsilon > 0\) using a Gaussian codebook, both unconditionally (for \(R\)), and conditioned on the samples being drawn from \(\vU_{k}\) summed over all \(k\) (for \(R^{c}\)). Closed-form estimates \citep{ma2007segmentation,Yu2023-ig} for such rate distortions are:
\begin{align}
    R(\vZ) \label{eq:r_def}
    &= \frac{1}{2}\log\det\rp{\vI_{N} + \alpha \vZ^{\top}\vZ}, \qquad \alpha \doteq \frac{d}{N\epsilon^{2}} \\
    R^{c}(\vZ \mid \vU_{[K]}) \label{eq:rc_def}
    &= \frac{1}{2}\sum_{k = 1}^{K}\log\det\rp{\vI_{N} + \beta (\vU_{k}^{\top}\vZ)^{\top}(\vU_{k}^{\top}\vZ)}, \qquad \beta \doteq \frac{p}{N\epsilon^{2}}.
\end{align}
Notably, \(R^{c}\) is a measure of \textit{compression against our statistical
structure} --- it measures how closely the overall distribution of tokens in
\(\vZ\) fit a Gaussian mixture on \(\vU_{[K]}\). Meanwhile, the other two terms
\(R\) and \(\norm{\,\cdot\,}_{0}\) ensure non-collapse and sparsity of the representations, respectively.

Following \cite{Yu2023-ig}, one then constructs a deep network that
\textit{incrementally optimizes the sparse rate reduction} in order to transform
the data distribution towards the desired parsimonious configuration
\Cref{model:gaussian_tokens}.
Specifically, \cite{Yu2023-ig} proposed to construct the deep neural network \(f\)
as a two-step alternating optimization procedure which compresses the input against the (learned) local signal model \(\vU_{[K]}^{\ell}\) at layer \(\ell\), by taking a step of gradient descent on \(R^{c}(\vZ \mid \vU_{[K]}^{\ell})\), and subsequently taking a step of proximal gradient descent on a LASSO objective \citep{tibshirani1996regression,wright2022high} to sparsify the data in a (learned) dictionary \(\vD^{\ell} \in \R^{d \times d}\):
\begin{align}\label{eq:grad_rc_mssa}
    \vZ^{\ell + 1/2}
    &= \vZ^{\ell} + \MSSA(\vZ^{\ell} \mid \vU_{[K]}^{\ell}) \approx \vZ^{\ell} - \kappa \nabla_{\vZ}R^{c}(\vZ^{\ell} \mid \vU_{[K]}^{\ell})  \\
    \vZ^{\ell + 1}
    &= \ISTA(\vZ^{\ell + 1/2} \mid \vD^{\ell}) \approx \argmin_{\vZ \geq \vZero}\left[\frac{1}{2}\norm{\vZ^{\ell + 1/2} - \vD^{\ell}\vZ}_{2}^{2} + \lambda \norm{\vZ}_{1}\right],
\end{align}
where \(\MSSA(\cdot)\), the \textbf{M}ulti-head \textbf{S}ubspace \textbf{S}elf-\textbf{A}ttention block~\citep{Yu2023-ig}, is defined as
\begin{equation}\label{eq:mssa}
    \MSSA(\vZ \mid \vU_{[K]}) \doteq \frac{p}{N\epsilon^{2}}\mat{\vU_{1} & \cdots & \vU_{K}}\mat{(\vU_{1}^{\top}\vZ)\softmax((\vU_{1}^{\top}\vZ)^{\top}(\vU_{1}^{\top}\vZ)) \\ \vdots \\ (\vU_{K}^{\top}\vZ)\softmax((\vU_{K}^{\top}\vZ)^{\top}(\vU_{K}^{\top}\vZ))},
\end{equation}
and \(\ISTA(\cdot)\), the \textbf{I}terative \textbf{S}hrinkage-\textbf{T}hresholding \textbf{A}lgorithm block~\citep{Yu2023-ig}, is defined as 
\begin{equation}\label{eq:ista}
    \ISTA(\vZ \mid \vD) \doteq \ReLU(\vZ - \eta \vD^{\top}(\vD\vZ - \vZ) - \eta \lambda \bm{1}).
\end{equation}
The \(\MSSA\) block is exactly the same as a multi-head self-attention block in
a transformer, with the changes that the
\(\bm{Q}_{k}\)/\(\bm{K}_{k}\)/\(\bm{V}_{k}\) blocks are replaced by a single
matrix \(\vU_{k}\) in each head \(k\). The resulting layer \(f^{\ell}\) thus
bears significant resemblance to a transformer-like block, and so the
\oursbase{} model is a white-box transformer-like architecture constructed via
unrolled optimization. Such \oursbase{} models obtain competitive performance on
standard tasks while enjoying many side benefits
\citep{Yu2023-ig,yu2023emergence}, yet they have so far only been trained for
supervised learning. In the sequel, we introduce a paradigm to obtain fully white-box
networks for unsupervised learning, such as autoencoding, through
a novel understanding of the \oursbase{} model's distributional layerwise
inverse.

\begin{figure}
    \begin{center}
    \includegraphics[width=0.8\textwidth]{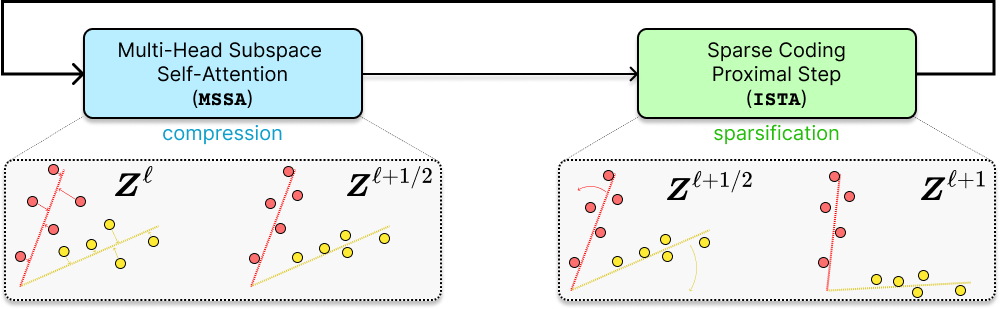}
    \end{center}
    \caption{\small \textbf{The compression-sparsification iteration implemented by each layer of \oursbase{}, and each encoder layer of \ours{}.} The compression step, implemented by the \(\MSSA\) operator, projects the tokens \(\vZ^{\ell}\) towards the subspace model \(\vU_{[K]}^{\ell}\) to form \(\vZ^{\ell + 1/2}\). The sparsification step, implemented by the \(\ISTA\) operator, rotates the tokens in \(\vZ^{\ell + 1/2}\) towards the coordinate axes, using the sparsifying dictionary \(\vD^{\ell}\), to get \(\vZ^{\ell + 1}\). The steps are performed in sequence and comprise a single of the \ours{} encoder.}
    \label{fig:crate_compress_sparsification}
\end{figure}

\subsection{Unifying Compression and Denoising}\label{sub:unification}

To transform our representations to the idealized signal model given by
\Cref{model:gaussian_tokens}, we seek to iteratively remove the disturbances or
deviations of each sample from this signal model. One way to perform this task
is to perform \textit{lossy data compression}
\citep{ma2007segmentation,yu2020learning,psenka2023representation,Yu2023-ig}:
compressed versions of the data, without ancillary disturbances, form the
representations. This approach has been favored in the construction of previous
white-box deep networks, such as \oursbase{} described above, due to the
existence of explicit information-theoretic criteria for compression. In this
case, the term \(R^{c}(\vZ \mid \vU_{[K]})\), defined in \Cref{eq:rc_def},
measures the lossy compression of the representations \(\vZ\) against the class
of statistical models given by \Cref{model:gaussian_tokens}. Thus, an operation
to minimize \(R^{c}\), such as \Cref{eq:grad_rc_mssa}, implements a step of
compression to learn better representations.

Another way to remove disturbances from the signal model
\Cref{model:gaussian_tokens}, especially if the perturbed model has the noisy
structure given in \Cref{model:gaussian_tokens_noise}, is to \textit{denoise}.
When the data is highly structured or low-dimensional, one-step denoising
becomes statistically and computationally difficult
\citep{pedregosa2023convergence}. Hence the modern solution to this problem is
via \textit{denoising diffusion models}, which take many small denoising steps
towards the data distribution at progressively decreasing noise levels
\citep{ho2020denoising,Song2020-xo,karras2022elucidating}.
Such models use estimates of the so-called \textit{score function} \(\nabla\log
p_{\sigma}\) \citep{Hyvarinen2005-fi}, where \(p_{\sigma}\) is the
probability density function of the noised input when the noise has standard
deviation \(\sigma > 0\). At all sufficiently small values of $\sigma$, the score function \(\nabla \log
p_{\sigma}(\widetilde{\vZ})\) for a particular noised input \(\widetilde{\vZ}\)
points towards the closest point to \(\widetilde{\vZ}\) on
the data distribution support
\citep{chen2023score,lu2023mathematical,Yu2023-ig}, or more generally the modes
of the true data distribution, which guides the denoising diffusion model to
project \(\vZ\) onto the support of the data distribution and diffuse it within
this support.\footnote{A more mathematical exposition of diffusion models may be
found in \Cref{app:diffusion}.}

\begin{figure}
    \centering 
    \includegraphics[width=0.7\textwidth]{./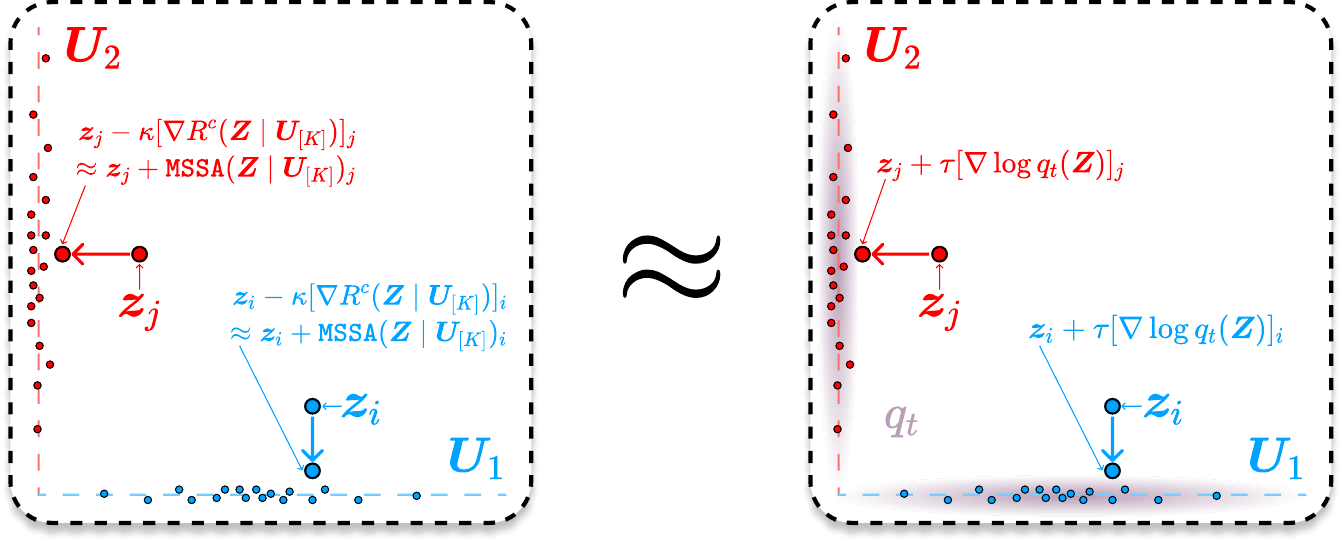}
    \caption{\small \textbf{Compression and denoising against the low-dimensional Gaussian mixture token model \Cref{model:gaussian_tokens} are equivalent.} \textit{Left:} the effect of compression against the low-dimensional Gaussian mixture model for tokens \Cref{model:gaussian_tokens}, i.e., taking gradient steps on the coding rate \(R^{c}(\cdot \mid \vU_{[K]})\) --- or equivalently, using the \(\MSSA(\cdot \mid \vU_{[K]})\) operator --- which is shown in \Cref{thm:informal_rate_score} to be equivalent to projecting onto the \(\vU_{[K]}\). \textit{Right:} the effect of denoising against \Cref{model:gaussian_tokens}, i.e., taking gradient steps on the score function of the noisy model \Cref{model:gaussian_tokens_noise} at small noise levels \(\sigma\), or equivalently small times \(t\). Up to scaling factors (not pictured), these two operations are equivalent, and have similar geometric and statistical interpretations as a projection onto the support of the data distribution. This connection motivates our structured denoising-diffusion framework, as elaborated in \Cref{sub:unification}.}
    \label{fig:structured_denoising}
\end{figure}

In the context of \Cref{model:gaussian_tokens,model:gaussian_tokens_noise}, both
denoising and compression operations conceptually remove additive disturbances
from the data, as visualized in \Cref{fig:structured_denoising}. In the
following result, we make this qualitative observation mathematically precise:
we show that under a simplified version of the signal model
\Cref{model:gaussian_tokens}, taking a gradient step on \(R^{c}\), a compression
primitive, acts as a projection onto the local signal model \(\vU_{[K]}\),
just as with the denoising primitive of taking a gradient step on \(\log
p_{\sigma}\).

\begin{theorem}[Informal version of \Cref{lem:inverse-term} in \Cref{app:computations_rr_gradient}]\label{thm:informal_rate_score}
    Suppose \(\vZ\) follows the noisy Gaussian codebook model \Cref{model:gaussian_tokens_noise}, with infinitesimal noise level \(\sigma^{\ell} > 0\) and subspace memberships \(s_{i}\) distributed as i.i.d.\ categorical random variables on the set of subspace indices \(\{1, \dots, K\}\), independently of all other sources of randomness. Suppose in addition that the number of tokens \(N\), the representation dimension \(d\), the number of subspaces \(K\), and the subspace dimensions \(p\) have relative sizes matching those of practical transformer architectures including the \ours{} encoder (specified in detail in \Cref{ass:parameter_config}). Then the negative compression gradient \(-\nabla_{\vz_{i}}R^{c}(\vZ^{\ell} \mid \vU_{[K]}^{\ell})\) points from \(\vz_{i}^{\ell}\) to the nearest \(\vU_{k}^{\ell}\).
\end{theorem}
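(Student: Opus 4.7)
The plan is to compute $\nabla_{\vz_i} R^c$ in closed form, then use the push-through identity to make the matrix inverse act on a $p \times p$ matrix, then exploit the signal model to show that this inverse concentrates to a scalar multiple of $\vI_p$, and finally read off the geometric direction. Differentiating each log-det term in \eqref{eq:rc_def} via the standard $\nabla \log\det$ identity gives
\begin{equation*}
    \nabla_{\vZ} R^{c}(\vZ \mid \vU_{[K]}) \;=\; \beta \sum_{k=1}^{K} \vU_k \vU_k^{\top} \vZ \bigl(\vI_N + \beta \vZ^{\top} \vU_k \vU_k^{\top} \vZ\bigr)^{-1}.
\end{equation*}
Applying the push-through identity $\vA(\vI_N + \beta \vA^{\top}\vA)^{-1} = (\vI_p + \beta \vA\vA^{\top})^{-1}\vA$ with $\vA = \vU_k^{\top}\vZ$ and absorbing the outer $\beta$ yields the equivalent form
\begin{equation*}
    \nabla_{\vZ} R^{c}(\vZ \mid \vU_{[K]}) \;=\; \sum_{k=1}^{K} \vU_k \bigl(\beta^{-1}\vI_p + \vU_k^{\top}\vZ\vZ^{\top}\vU_k\bigr)^{-1} \vU_k^{\top}\vZ,
\end{equation*}
so that $\nabla_{\vz_i} R^c$ is the $i$-th column of this sum.

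\textbf{Concentration.} Next, under \Cref{model:gaussian_tokens_noise} with infinitesimal $\sigma^\ell$, I would decompose $\vU_k^{\top}\vZ\vZ^{\top}\vU_k = \sum_{j=1}^{N}(\vU_k^{\top}\vz_j)(\vU_k^{\top}\vz_j)^{\top}$ by the subspace indices $s_j$: summands with $s_j = k$ contribute $\valpha_j\valpha_j^{\top} + O(\sigma^\ell)$, while those with $s_j \neq k$ are lower order under near-orthogonality of the $\vU_k$. Since the $s_j$ are i.i.d.\ categorical, about $N/K$ tokens land in each subspace and each $\valpha_j\valpha_j^{\top}$ has mean $\vI_p$; the parameter scalings of \Cref{ass:parameter_config} make the law-of-large-numbers fluctuations negligible, so $\vU_k^{\top}\vZ\vZ^{\top}\vU_k \approx (N/K)\vI_p$ uniformly in $k$. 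Consequently, the inner inverse collapses to a common scalar matrix, $(\beta^{-1}\vI_p + \vU_k^{\top}\vZ\vZ^{\top}\vU_k)^{-1} \approx c\,\vI_p$, and the gradient simplifies to $\nabla_{\vz_i} R^c \approx c \sum_{k=1}^{K} \vU_k\vU_k^{\top}\vz_i$.

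\textbf{Geometric conclusion.} In the same infinitesimal-noise limit, $\vU_k^{\top}\vz_i$ is leading order only when $k = s_i$ (the nearest subspace index), while other terms are suppressed by near-orthogonality. The sum collapses to $c\,\vU_{s_i}\vU_{s_i}^{\top}\vz_i$, a positive scalar multiple of the projection of $\vz_i$ onto $\vU_{s_i}$; the negative gradient $-\nabla_{\vz_i} R^c \approx -c\,\vU_{s_i}\vU_{s_i}^{\top}\vz_i$ therefore lies along $\vU_{s_i}$. Geometrically, taking a gradient step $\vz_i + \kappa(-\nabla_{\vz_i} R^c)$ acts strictly within $\vU_{s_i}$ and, with an appropriate $\kappa$, cancels precisely the components of $\vz_i$ lying in other subspaces $\vU_k$ while leaving any strictly orthogonal residual intact, thereby moving $\vz_i$ toward its nearest $\vU_k$ as claimed.

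\textbf{Main obstacle.} The technical crux is the concentration step: obtaining an error bound for $\vU_k^{\top}\vZ\vZ^{\top}\vU_k \approx (N/K)\vI_p$ that survives the subsequent matrix inversion in the specific regime of \Cref{ass:parameter_config}, while also controlling the small but nonzero cross-subspace correlations $\vU_k^{\top}\vU_{k'}$ for $k \neq k'$ under incoherence rather than strict orthogonality. Once these errors are confirmed to propagate cleanly through the inverse and into the final summation without swamping the dominant $\vU_{s_i}$-aligned term, the geometric conclusion follows directly.
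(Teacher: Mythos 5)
There is a genuine gap in the concentration step, and it propagates into a sign/direction error in the geometric conclusion. Your claim that \(\vU_k^{\top}\vZ\vZ^{\top}\vU_k \approx (N/K)\vI_p\) cannot hold in the regime of \Cref{ass:parameter_config}: that assumption forces \(d \geq N\) and \(Kp = d\), hence \(p \geq N/K\), so the within-subspace empirical second moment \(\vA_k\vA_k^{\top}\) is a \(p \times p\) matrix of rank at most \(K_k \approx N/K \leq p\) (for \ours{}-Base, rank \(\approx 16\) inside a \(64\times 64\) matrix). A rank-deficient matrix cannot concentrate around a positive multiple of \(\vI_p\); the law-of-large-numbers picture you invoke requires the opposite regime \(N/K \gg p\). (Separately, with the paper's normalization \(\valpha_i \sim \sN(\vZero, \tfrac{1}{p}\vI)\), each \(\valpha_i\valpha_i^{\top}\) has mean \(\tfrac{1}{p}\vI_p\), not \(\vI_p\).) The paper's proof instead concentrates the small Gram matrix \(\vA_k^{\top}\vA_k \approx \vI_{K_k}\) and keeps the inverse \((\beta^{-1}\vI_p + \vA_k\vA_k^{\top})^{-1}\) anisotropic: it acts as \(\approx \beta\vI\) on \(\im(\vA_k)^{\perp}\) and as \(\approx (1+\beta^{-1})^{-1}\vI\) on \(\im(\vA_k)\). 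This anisotropy is not a nuisance term --- it is the mechanism of the result, as reflected in the projector \(\proj{\im(\vA_{k'})^{\perp}}\) appearing in the paper's limiting operator \Cref{eq:proj-defn-1}.

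The downstream consequence is that your final direction is wrong. If the inner inverse really were \(c\,\vI_p\), then by your own leading-order argument only \(k = s_i\) survives and \(-\nabla_{\vz_i}R^{c} \approx -c\,\vU_{s_i}\vU_{s_i}^{\top}\vz_i\), which is \emph{antiparallel to the in-subspace component} of \(\vz_i\): a shrinkage of the signal, not a motion toward \(\vU_{s_i}\). "Pointing from \(\vz_i\) to the nearest subspace" requires the gradient to remove the component of \(\vz_i\) \emph{orthogonal} to \(\vU_{s_i}\), i.e., the noise \(\vdelta_i\). In the paper that term survives precisely because the \(O(\sigma)\) cross-terms \(\vU_k^{\top}\vdelta_i\) (for \(k \neq s_i\)) are hit by the \(\beta\)-scaled part of the inverse, and the regime \(\beta\sigma \lesssim 1\) makes \(\beta \cdot O(\sigma)\) comparable to the \(O(1)\) signal term --- this is the \(\sP_{\vU_{[K]}}(\beta\vDelta\vPi^{\top})\vPi\) term in \Cref{lem:inverse-term}. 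By discarding all \(O(\sigma)\) contributions before accounting for the \(\beta\) amplification, you discard the entire denoising effect, and the attempted patch in your last sentence (cancelling "components in other subspaces") contradicts your own earlier simplification that only \(k=s_i\) contributes. Your push-through reformulation of the gradient is algebraically correct and could in principle support a proof, but the analysis of the \(p \times p\) inverse must respect its rank structure rather than replace it by a scalar.
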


\Cref{thm:informal_rate_score} establishes in a representative special case of the Gaussian codebook model \Cref{model:gaussian_tokens} that at low noise levels, \textit{compression against the signal model \Cref{model:gaussian_tokens} is equivalent to denoising against \Cref{model:gaussian_tokens}}. In the sequel, we use this connection to understand the \(\MSSA\) operators of the \ours{} encoder, derived in \Cref{sub:objective} from a different perspective, as realizing an incremental transformation of the data distribution towards the signal model \Cref{model:gaussian_tokens} via approximate denoising. This important property guarantees that a corresponding deterministic diffusion process---namely, the time reversal of the denoising process---implies an inverse operator for the compression operation implemented by \(\MSSA\). Because these approximate denoising processes transform the data towards a parametric structure, we call them \textit{structured denoising-diffusion processes}.

\subsection{Constructing a Distributionally-Invertible Transformer Layer}\label{sub:crate_mae_layer}

In \Cref{sub:setup}, we described a method to construct a white-box transformer-like encoder network via unrolled optimization meant to compress the data against learned geometric and statistical structures, say against a distribution of tokens where each token is marginally distributed as a Gaussian mixture supported on \(\vU_{[K]}\). In \Cref{sub:unification}, we described in general terms an approach that relates denoising and compression to yield a conceptually similar network using the formalism of diffusion models, this time trainable via autoencoding. In this section, we carry out this procedure concretely to obtain an encoder and decoder layer with similarly interpretable operational characteristics.

To measure compression, we use the \(R^{c}\) function defined in \Cref{eq:rc_def}. By using a standard (reverse-time) diffusion process with a scaling of \(R^{c}\) as a drop-in replacement for the score (see \Cref{app:discretization} for  details), we obtain that such a denoising diffusion process may be described by the following stochastic differential equation (SDE) \citep{Song2020-xo}.
\begin{equation}\label{eq:compression_sde}
    \odif{\vZ(t)} = -\frac{1}{T - t}\nabla R^{c}(\vZ(t) \mid \vU_{[K]})\odif{t} + \sqrt{2}\odif{\vB(t)}, \qquad \forall t \in [0, T],
\end{equation}
where \((\vB(t))_{t \in [0, T]}\) is a Brownian motion. \textit{As a design choice}, we wish to assert that our encoder and decoder ought to be deterministic, in particular preferring that our encoder-decoder architecture achieves \textit{sample-wise autoencoding} as opposed to \textit{distribution-wise autoencoding} or \textit{generation}. Thus we need to construct some ordinary differential equation (ODE) which transports the input probability distribution in the same way as \Cref{eq:compression_sde}. Such an equation is readily obtained as the \textit{probability flow ODE} \citep{Song2020-xo}, which itself is commonly used for denoising and sampling \citep{Song2020-xo,lu2022dpm,Song2023-rs} and has the form
\begin{equation}\label{eq:compression_ode}
    \odif{\vZ(t)} = -\frac{1}{2(T - t)}\nabla R^{c}(\vZ(t) \mid \vU_{[K]})\odif{t}, \qquad \forall t \in [0, T].
\end{equation}
In particular, the \(\vZ(t)\) generated by \Cref{eq:compression_sde,eq:compression_ode} have the same law. A first-order discretization (see \Cref{app:discretization}) of \Cref{eq:compression_ode} with step size \(\kappa\) obtains the iteration:
\begin{equation}\label{eq:compression}
    \vZ^{\ell + 1/2} =  \vZ^{\ell} + \MSSA(\vZ^{\ell} \mid \vU_{[K]}^{\ell}) \approx \vZ^{\ell} - \kappa \nabla R^{c}(\vZ^{\ell} \mid \vU_{[K]}^{\ell}),
\end{equation}
where \(\MSSA(\cdot)\) was defined in \Cref{eq:mssa}. Similar to \cite{Yu2023-ig}, in order to optimize the sparse rate reduction of the features, and in particular to sparsify them, we instantiate a learnable dictionary \(\vD^{\ell} \in \R^{d \times d}\) and sparsify against it, obtaining 
\begin{equation}\label{eq:encoder_sparsification}
    \vZ^{\ell + 1} = \ISTA(\vZ^{\ell + 1/2} \mid \vD^{\ell}),
\end{equation}
where \(\ISTA(\cdot)\) was defined in \Cref{eq:ista}. Thus, we obtain a two step iteration for the \(\ell\th\) encoder layer \(f^{\ell}\), where \(\vZ^{\ell + 1} = f^{\ell}(\vZ^{\ell})\):
\begin{equation}\label{eq:encoder_layer}
    \vZ^{\ell + 1/2} = \vZ^{\ell} + \MSSA(\vZ^{\ell} \mid \vU_{[K]}^{\ell}), \qquad \vZ^{\ell + 1} = \ISTA(\vZ^{\ell + 1/2} \mid \vD^{\ell}).
\end{equation}
This is the same layer as in \oursbase{}, whose conceptual behavior is illustrated in \Cref{fig:crate_compress_sparsification}. This equivalence stems from the fact that the diffusion probability flow \Cref{eq:compression_ode} is conceptually and mechanically similar to gradient flow on the compression objective in certain regimes, and so it demonstrates a useful conceptual connection between discretized diffusion and unrolled optimization as iteratively compressing or denoising the signal against the learned data structures.

Note that we parameterized a \textit{different} local signal model \(\vU_{[K]}^{\ell}\) and dictionary \(\vD^{\ell}\) at each layer, despite the continuous-time flows in \Cref{eq:compression_ode} using only one (i.e., the final) local signal model. This is because the sparsification step \Cref{eq:encoder_sparsification} transforms the data distribution, and so we require a different local signal model at each layer to model the new (more sparse) data distribution; see \Cref{fig:crate_mae_pipeline} for intuition on the iterative transformations. Also, having a different signal model at each layer may allow for more efficient iterative linearization and compression of highly nonlinear structures.

Now that we have shown how the structured diffusion approach can recover the original \oursbase{} architecture \citep{Yu2023-ig} as an encoder in our autoencoding problem, we use our new approach to construct a novel matching decoder. The time reversal of the ODE \Cref{eq:compression_ode} is:
\begin{equation}\label{eq:compression_ode_reverse}
    \odif{\vY(t)} = \frac{1}{2t}\nabla R^{c}(\vY(t) \mid \vU_{[K]})\odif{t}, \qquad \forall t \in [0, T],
\end{equation}
in the sense that the \(\vY(T - t)\) generated by \Cref{eq:compression_ode_reverse} has the same law as the \(\vZ(t)\) generated by \Cref{eq:compression_ode}, assuming compatible initial conditions. A first-order discretization of \Cref{eq:compression_ode_reverse} obtains the iteration:
\begin{equation}
    \vY^{\ell + 1} = \vY^{\ell + 1/2} - \MSSA(\vY^{\ell + 1/2} \mid \vV_{[K]}^{\ell}) \approx \vY^{\ell + 1/2} + \kappa \nabla R^{c}(\vY^{\ell + 1/2} \mid \vV_{[K]}^{\ell}),
\end{equation}
where \(\vV_{[K]}^{\ell} = (\vV_{1}^{\ell}, \dots, \vV_{K}^{\ell})\) and each \(\vV_{k}^{\ell} \in \R^{d \times p}\) are the bases of the subspaces to ``anti-compress'' against. In our work, we treat them as \textit{different} from the corresponding \(\vU_{k}^{L - \ell}\), because the discretization of \Cref{eq:compression_ode,eq:compression_ode_reverse} is imperfect, and thus we should not expect a 1-1 correspondence between local signal models in the encoder and decoder.
To invert the effect of a sparsifying \(\ISTA\) step, which our mental model in \Cref{fig:crate_compress_sparsification} portrays as a rotation of the subspace supports to a more incoherent configuration, we multiply by another learnable dictionary \(\vE^{\ell} \in \R^{d \times d}\), obtaining
\begin{equation}
    \vY^{\ell + 1/2} = \vE^{\ell}\vY^{\ell}, \qquad \vY^{\ell + 1} = \vY^{\ell + 1/2} - \MSSA(\vY^{\ell + 1/2} \mid \vV_{[K]}^{\ell}).
\end{equation}
This constructs the \((\ell + 1)\st\) layer \(g^{\ell}\) of our decoder. 
In the implementation, we add layer normalizations to ensure that the features are roughly constant-size so that the above approximations hold. \Cref{fig:crate_mae_layers} has a graphical depiction of the encoder and decoder layers.

\begin{figure}[t!]
    \centering
    \includegraphics[width=0.99\textwidth]{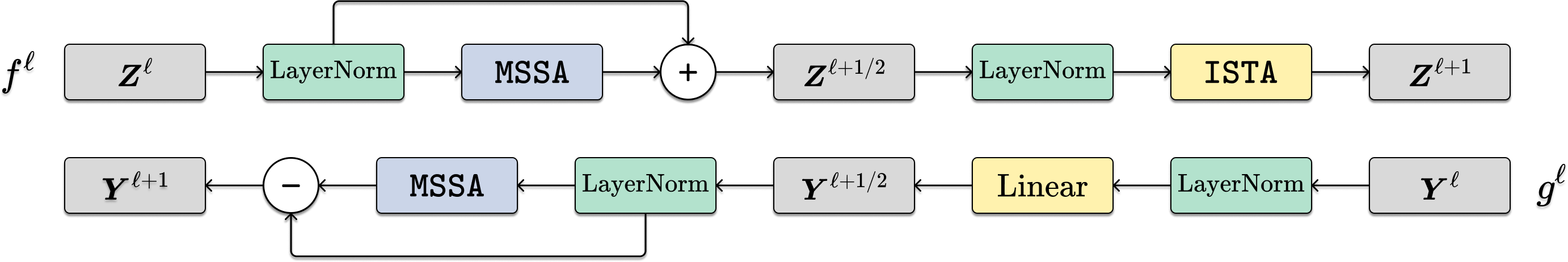}
    \caption{\small \textbf{Diagram of each encoder layer (\textit{top}) and decoder layer (\textit{bottom}) in \ours{}.} Notice that the two layers are highly anti-parallel --- each is constructed to do the operations of the other in reverse order. That is, in the decoder layer \(g^{\ell}\), the \(\ISTA\) block of \(f^{L - \ell}\) is partially inverted first using a linear layer, then the \(\MSSA\) block of \(f^{L - \ell}\) is reversed; this order unravels the transformation done in \(f^{L - \ell}\).}
    \label{fig:crate_mae_layers}
\end{figure}

\subsection{A Complete White-Box Transformer-Like Architecture for Autoencoding}\label{sub:crate_mae_arch}

As previously discussed, the encoder is the concatenation of a preprocessing map \(f^{\pre} \colon \R^{D \times N} \to \R^{d \times N}\), which has learnable parameters \(\vW^{\pre} \in \R^{d \times D}\) and \(\vE^{\pos} \in \R^{d \times N}\), and has the form:
\begin{equation}\label{eq:preprocessing}
    f^{\pre}(\vX) \doteq \vW^{\pre}\vX + \vE^{\pos},
\end{equation}
and \(L\) transformer-like layers \(f^{\ell} \colon \R^{d \times N} \to \R^{d \times N}\) given by
\begin{equation}
    f^{\ell}(\vZ^{\ell}) \doteq \ISTA(\vZ^{\ell} + \MSSA(\vZ^{\ell} \mid \vU_{[K]}^{\ell}) \mid \vD^{\ell}), \qquad \forall \ell \in [L],
\end{equation}
omitting normalization for simplicity. The decoder is the concatenation of \(L\) transformer-like layers \(g^{\ell} \colon \R^{d \times N} \to \R^{d \times N}\) given by
\begin{equation}
    g^{\ell}(\vY^{\ell}) \doteq \vE^{\ell}\vY^{\ell} - \MSSA(\vE^{\ell}\vY^{\ell} \mid \vV_{[K]}^{\ell}), \qquad \forall \ell \in [L] - 1,
\end{equation}
with a postprocessing map \(g^{\post} \colon \R^{d \times N} \to \R^{D \times N}\) which is a learnable linear map \(\vW^{\post} \in \R^{D \times d}\):
\begin{equation}\label{eq:postprocessing}
    g^{\post}(\vY^{L}) \doteq \vW^{\post}\vY^{L}.
\end{equation}
A full diagram of the autoencoding procedure is given in
\Cref{fig:crate_mae_pipeline}. %

Our training procedure seeks to learn and represent the structures in the data distribution. For this, we use a pretext task that measures the degree to which these structures have been learned: \textit{masked autoencoding} \citep{he2022masked}, which ``masks out'' a large percentage of randomly selected tokens in the input \(\vX\) %
and then attempts to reconstruct the whole image, measuring success by the resulting autoencoding performance. 
Conceptually, masked autoencoding can be seen as a nonlinear generalization of
the classical \textit{matrix completion} task, which exploits low-dimensional
structure to impute missing entries in incomplete data; classical matrix completion can be solved efficiently if and only if the data have low-dimensional structure \citep{Candes2009-kb,Amelunxen2014-yo,wright2022high}. 
The masked autoencoding loss writes
\begin{equation}\label{eq:mae_loss_unreg}
    L_{\mathrm{MAE}}(f, g) \doteq \mathbb{E}\big[\norm{(g \circ f)(\operatorname{\texttt{Mask}}(\vX)) - \vX}_{2}^{2}].
\end{equation}
Further implementation details of this architecture are discussed in \Cref{app:experiment_clarifications,app:experiment_pseudocode}.

\section{Empirical Evaluations}\label{sec:experiments}

In this section, we conduct experiments to evaluate \ours{} on real-world datasets and both supervised and unsupervised tasks. Similarly to \citet{Yu2023-ig}, \ours{} is built using simple design choices that we do not claim are optimal. We also do not claim that our results are optimally engineered; in particular, \textit{we do not use the extreme amount of computational resources required to obtain state-of-the-art performance using vision transformer-backed masked autoencoders (MAEs)~\citep{he2022masked}}.
Our goals in this section are to verify that our white-box masked autoencoding model \ours{} has promising performance and learns semantically meaningful representations, and that each operator in \ours{} aligns with our theoretical design.
We provide additional experimental details in
\Cref{app:experiment_clarifications,app:experiment_pseudocode}.

\begin{table}[t!]
    \centering
    \caption{\small \textbf{Model configurations for different sizes of \ours{}, parameter counts, and comparisons to ViT-MAE models from \citet{he2022masked,gandelsman2022test}.} We observe that \ours{}-Base uses around 30\% of the parameters of ViT-MAE-Base, and a similar number of parameters as ViT-MAE-Small.}
    \label{tab:model_configs}
    \small
    \setlength{\tabcolsep}{13.6pt}
    \resizebox{0.99\textwidth}{!}{%
        \begin{tabular}{@{}lcccccc@{}}
            \toprule
            \textbf{Model Configuration} & \(L\) & \(d\) & \(K\) & \(N\) & \ours{} \# Parameters & ViT-MAE \# Parameters
            \\ 
            \midrule
            \midrule
            Small (-S) & 12 & 576 & 12 & 196 & 25.4M & 47.6M \\
            \midrule
            Base (-B) & 12 & 768 & 12 & 196 & 44.6M & 143.8M \\
            \bottomrule
        \end{tabular}%
    }
\end{table}

\textbf{Network architecture and training configuration.} We implement the encoder and decoder
architectures described in \Cref{sec:approach}, with a few changes detailed in
\Cref{app:experiment_clarifications}. We consider different model sizes of
\ours{} by varying the token dimension \(d\), number of heads \(K\), and number
of layers \(L\); such parameters will be kept the same for the encoder and
decoder, which is contrary to \citet{he2022masked} but in line with our white-box derivation. 
\Cref{tab:model_configs} displays the \ours{} model configurations and number of
parameters, and compares with equivalent ViT-MAE model sizes
\citep{he2022masked,gandelsman2022test}, showing that \textit{\ours{} uses
around 30\% of the parameters of MAE with the same model configuration}. %
We consider ImageNet-1K \citep{deng2009imagenet} as the main experimental setting for our architecture. We apply the AdamW \citep{loshchilov2017decoupled} optimizer to train \ours{} models for both pre-training and fine-tuning. When fine-tuning, we also use several commonly used downstream datasets: CIFAR10, CIFAR100 \citep{krizhevsky2009learning}, Oxford Flowers \citep{nilsback2008automated}, and Oxford-IIT-Pets \citep{parkhi2012cats}. 

\begin{figure}[t!]
    \centering
    \begin{subfigure}[b]{0.48\textwidth}
        \centering
        \includegraphics[width=\textwidth]{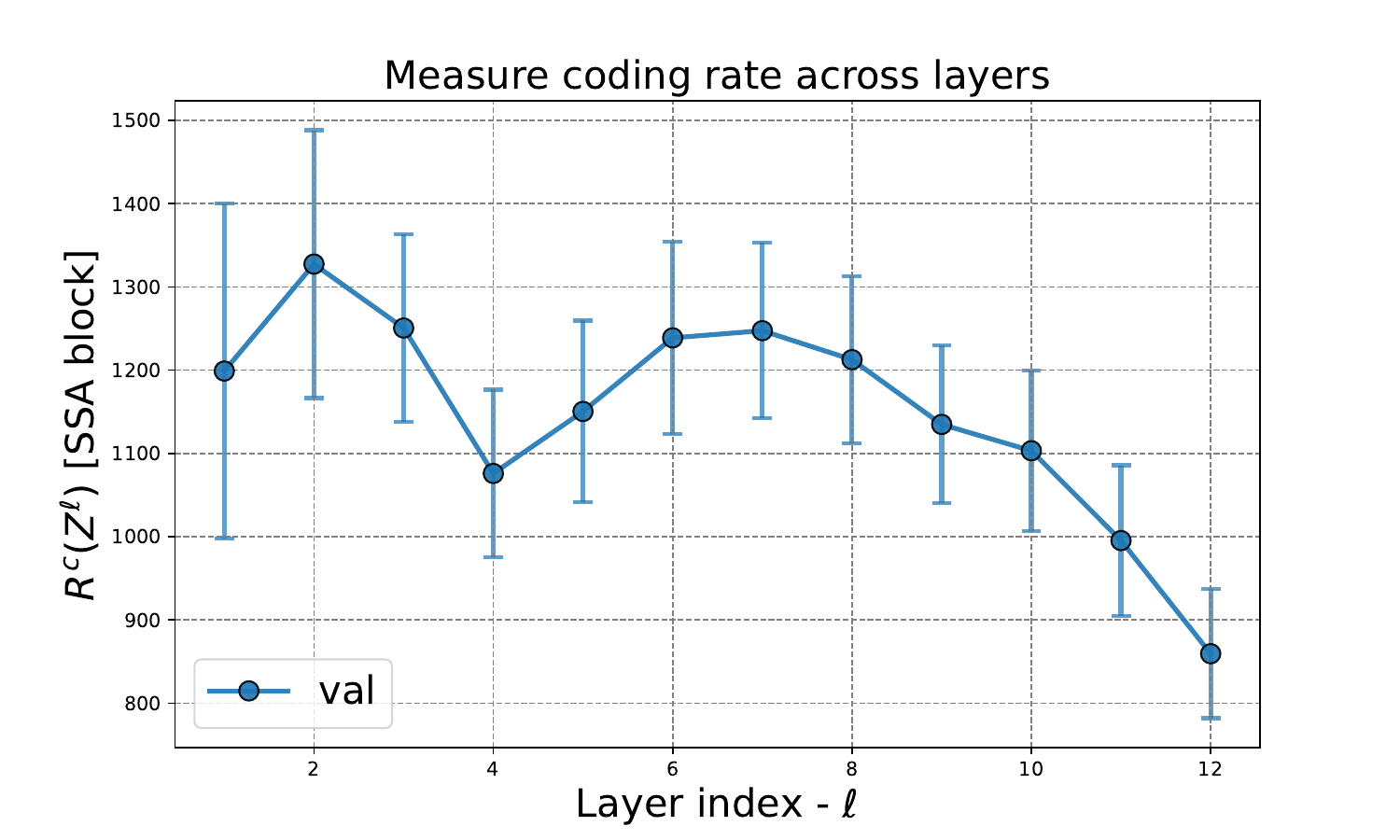}
    \end{subfigure}
    \begin{subfigure}[b]{0.48\textwidth}
        \centering
        \includegraphics[width=\textwidth]{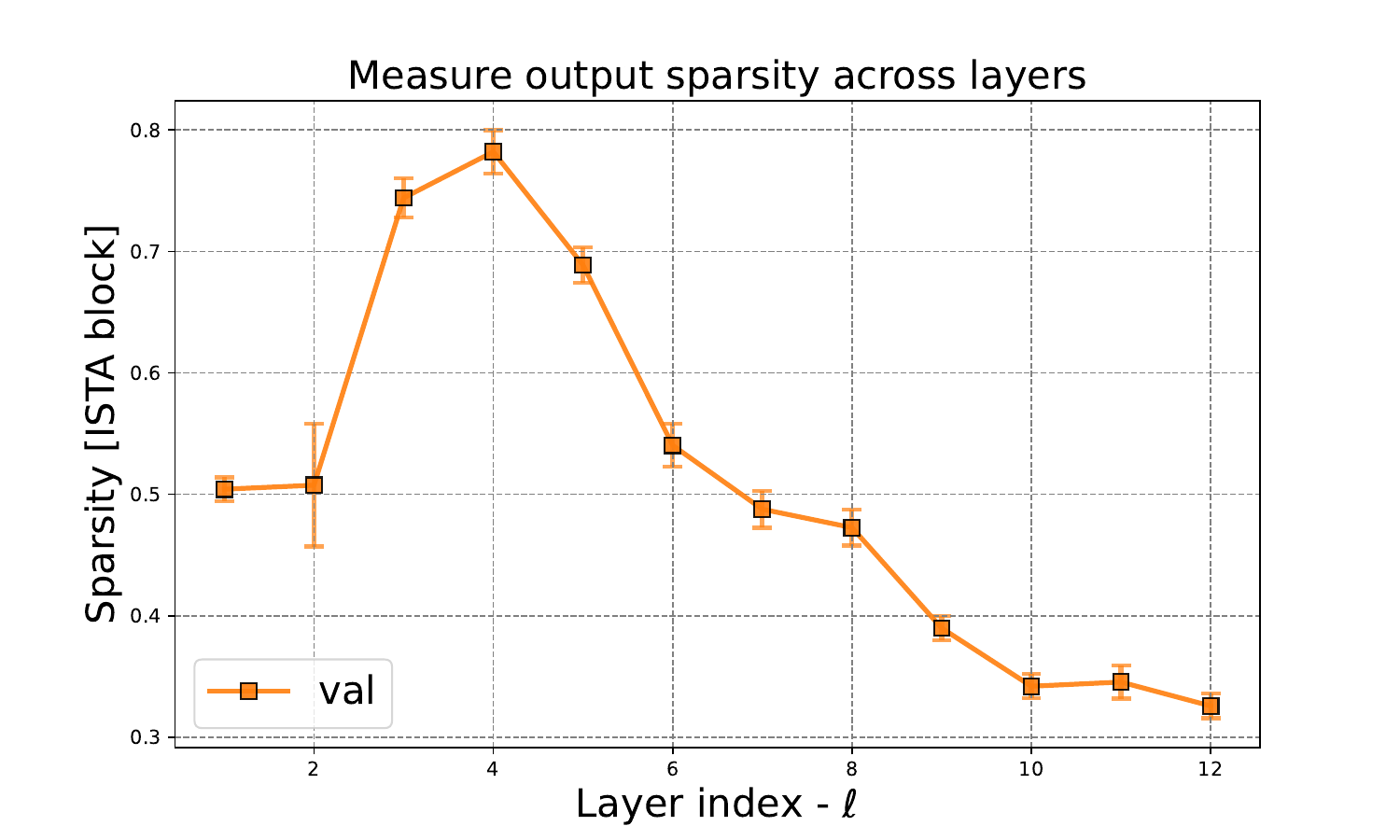}
    \end{subfigure}
    \caption{\small \textbf{\textit{Left}: The compression measure \(R^{c}(\vZ^{\ell+1/2} \mid \vU_{[K]}^{\ell})\) at different layers of the encoder. \textit{Right}: the sparsity measure \(\|\vZ^{\ell+1}\|_0 / (d\cdot N)\), at different layers of the encoder.} Measurements were collected from \ours{}-Base averaged over \(10000\) randomly chosen ImageNet samples. We observe that the compression and sparsity improve consistently over each layer and through the whole network.}
    \label{fig:exp-rc-sparsity-base}
\end{figure}

\textbf{Layer-wise function analysis.} First, we confirm that our model actually does do layer-wise compression and sparsification, confirming our conceptual understanding as described in \Cref{sec:approach}. In \Cref{fig:exp-rc-sparsity-base}, we observe that each layer of the encoder tends to compress and sparsify the input features, confirming our theoretical designing of the role of each operator in the network.

\begin{figure}[t!]
    \centering
    \includegraphics[width=0.99\textwidth]{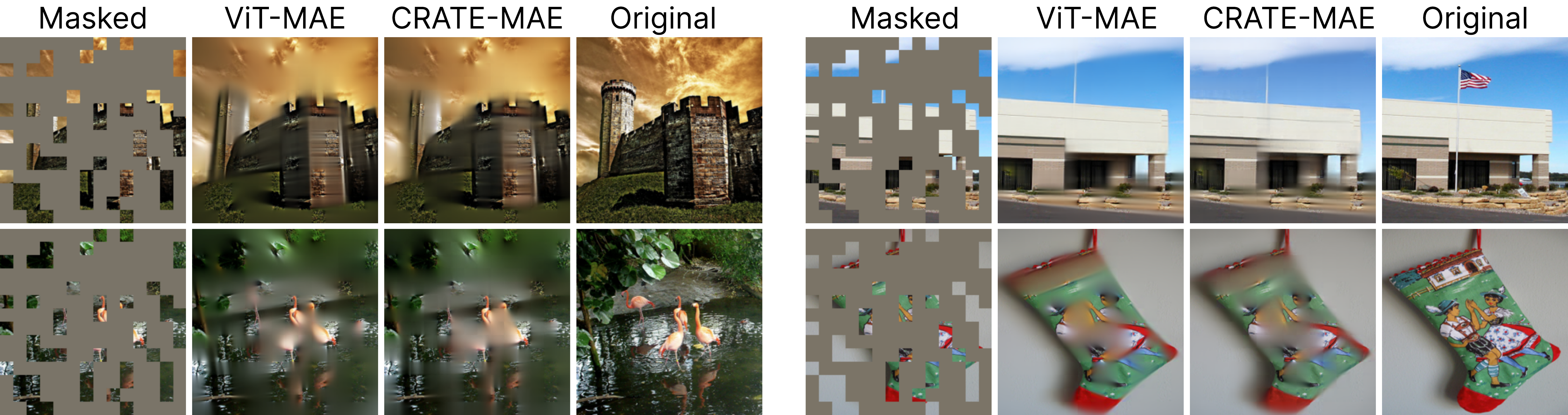}
    \caption{\small \textbf{Autoencoding visualizations of \ours{}-Base and ViT-MAE-Base \citep{he2022masked} with 75\% patches masked.} We observe that the reconstructions from \ours{}-Base are on par with the reconstructions from ViT-MAE-Base, despite using \(< 1/3\) of the parameters.
    }
    \label{fig:mae_autoencoding}
\end{figure}
\textbf{Autoencoding performance.} In \Cref{fig:mae_autoencoding}, we qualitatively compare the masked autoencoding performance of \ours{}-Base to ViT-MAE-Base \citep{he2022masked}. We observe that both models are able to reconstruct the data well, despite \ours{} using less than a third of the parameters of ViT-MAE. In \Cref{tab:reconstruction_loss} (deferred to \Cref{app:additional_experiments}) we display the average reconstruction loss of \ours{}-Base and ViT-MAE-Base, showing a similar quantitative conclusion.

\begin{table*}[t!]
    \centering 
    \caption{\small \textbf{Top-\(1\) classification accuracy of \ours{} models when pre-trained on ImageNet-1K and evaluated via fine-tuning or linear probing for various datasets.} We compare \ours{} to standard ViT-MAE models with many more parameters. Our results show that \ours{} achieves competitive performance on this transfer learning task when either fine-tuning the whole model or just the classification head.}
    \label{tab:linear_probing_finetuning}
    \small
    \setlength{\tabcolsep}{13.6pt}
    \resizebox{0.99\textwidth}{!}{%
        \begin{tabular}{@{}lcc|cc@{}}
            \toprule 
            \textbf{Classification Accuracy} & \ours{-S} & \ours{-B} & {\color{gray} ViT-MAE-S} & {\color{gray} ViT-MAE-B} \\
            \midrule
            \midrule
            \underline{\textit{Fine-Tuning}} & & & \\
            CIFAR10 & 96.2 & 96.8 & {\color{gray} 97.6} & {\color{gray} 98.5} \\
            CIFAR100 & 79.0 & 80.3 & {\color{gray} 83.0} & {\color{gray} 87.0} \\
            Oxford Flowers-102 & 71.7 & 78.5 & {\color{gray} 84.2} & {\color{gray} 92.5} \\
            Oxford-IIIT-Pets & 73.7 & 76.7 & {\color{gray} 81.7} & {\color{gray} 90.3} \\
            \midrule 
            \underline{\textit{Linear Probing}} & & & \\
            CIFAR10 & 79.4 & 80.9 & {\color{gray} 79.9} & {\color{gray} 87.9} \\
            CIFAR100 & 56.6 & 60.1 & {\color{gray} 62.3} & {\color{gray} 68.0} \\
            Oxford Flowers-102 & 57.7 & 61.8 & {\color{gray} 66.8} & {\color{gray} 66.4} \\
            Oxford-IIIT-Pets & 40.6 & 46.2 & {\color{gray} 51.8} & {\color{gray} 80.1} \\
            \bottomrule
        \end{tabular}
    }
\end{table*}

\begin{figure}[t!]
    \centering
    \begin{subfigure}[b]{0.475\textwidth}
        \centering
        \includegraphics[width=\textwidth]{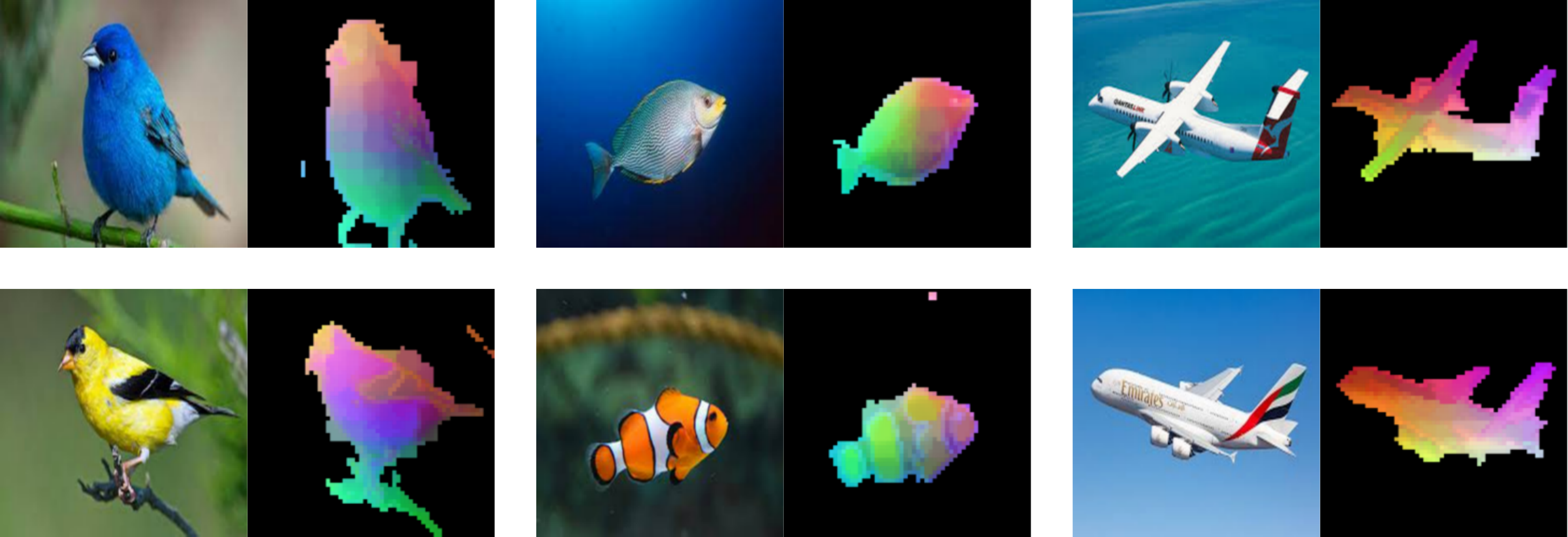}
        \caption{Visualizing PCA of token representations.}
        \label{fig:pca}
    \end{subfigure}
    \hspace{0.025\textwidth}
    \begin{subfigure}[b]{0.475\textwidth}
        \centering
        \includegraphics[width=\textwidth]{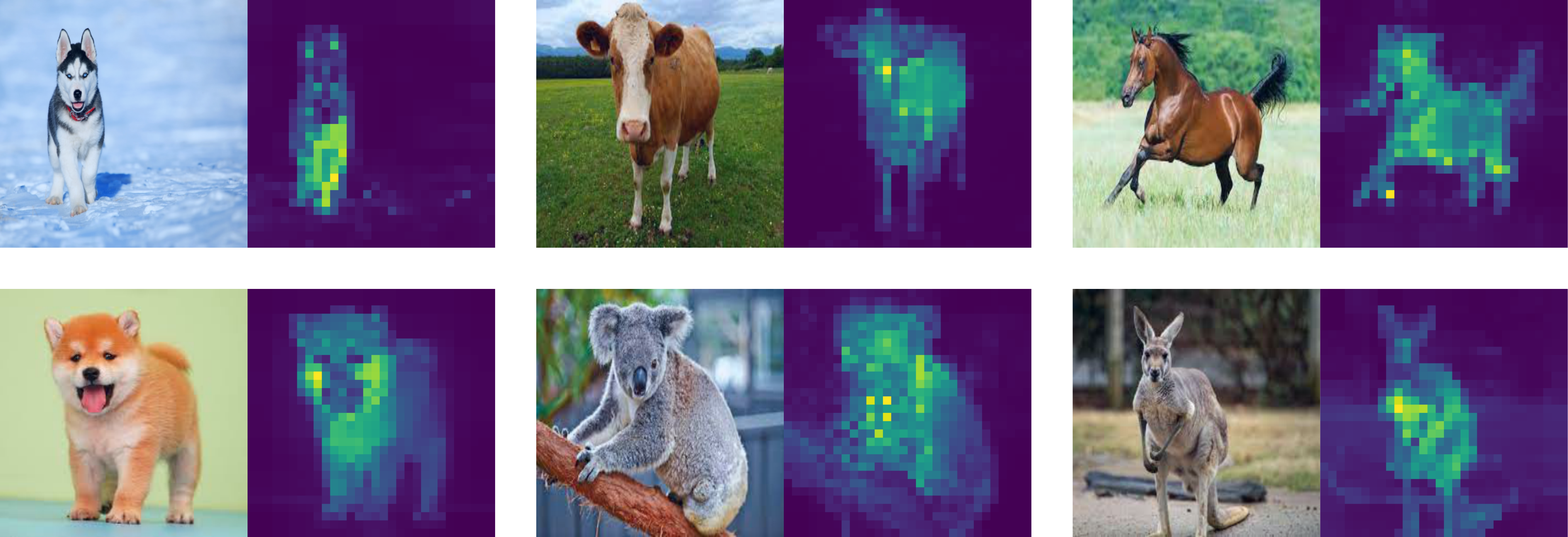}
        \caption{Visualizing selected attention head outputs.}
        \label{fig:attention_maps}
    \end{subfigure}
    \caption{\small \textbf{\textit{Left:} Visualizations of the alignment of each image's token representations with the top three principal components (in red, blue, and green respectively) of all token representations of images in the given class. \textit{Right:} Visualizations of hand-picked attention map across all attention heads in the last layer of the \ours{} encoder for each image.} We observe in \Cref{fig:pca} that the top three principal components are aligned with tokens from parts of the image that carry its semantics, and in \Cref{fig:attention_maps} that the attention maps correctly ``attend to'' (activate strongly on) exactly the parts of the image which are semantically meaningful.}
    \label{fig:pca_attn}
\end{figure}

\textbf{Representation learning and emerging semantic properties.}
In
\Cref{tab:linear_probing_finetuning}, we display the performance of \ours{}
models when fine-tuned or linear probed for supervised classification (precise
method in \Cref{app:experiment_clarifications}) on a variety of datasets.
We observe that the classification accuracies of \ours{} models are competitive
with much larger ViT-MAE models.
Moreover, the learned representations of \ours{} carry useful semantic content.
By taking the alignment of the
representations of each token with the top few principal components of the
representations of tokens in each class (precise details in
\Cref{app:viz_methodology}), we observe in \Cref{fig:pca_attn} (left) that the
representations are linearized, and that the top few principal components carry
semantic structure.
In \Cref{fig:pca_attn}
(right), we observe that the attention heads in the \(\MSSA\) operator in \ours{}
capture the semantics of the input images.
These properties have previously been observed in white-box models trained with
supervised cross-entropy losses \citep{yu2023emergence}; our results demonstrate
that they are consequences of the white-box architecture, rather than the
loss function.

\section{Conclusion}\label{sec:conclusion}

In this work, we uncover a quantitative connection between denoising and compression, and use it to design a conceptual framework for building white-box (mathematically interpretable) transformer-like deep neural networks which can learn using unsupervised pretext tasks, such as masked autoencoding. We show that such models are more parameter-efficient over their empirically designed cousins, achieve promising performance on large-scale real-world imagery datasets, and learn structured representations that contain semantic meaning. This work demonstrates the potential and practicality of white-box networks derived from first principles for tasks outside supervised classification. We thus believe that this work helps to bridge the theory and practice of deep learning, by unifying on both the conceptual and technical level many previously separated approaches including, but not limited to, diffusion and denoising, compression and rate reduction, transformers, and (masked) autoencoding.

\subsubsection*{Acknowledgments}
The authors would like to acknowledge help from Tianzhe Chu in preparing the manuscript. Druv Pai acknowledges support from a UC Berkeley College of Engineering fellowship. Yaodong Yu acknowledges support from the joint Simons Foundation-NSF DMS grant \#2031899 and AI Community Mini Grant from Future of Life Institute. Yi Ma acknowledges support from the joint Simons Foundation-NSF DMS grant \#2031899, the ONR grant N00014-22-1-2102, and the University of Hong Kong.

\bibliography{references.fixed.bib}
\bibliographystyle{iclr2024_conference}

\clearpage
\appendix
\section{Other Related Work}\label{app:related_work}

In \Cref{sec:intro}, we described the approaches to unsupervised learning of low-dimensional structures in the data that were most relevant to the rest of the work. Here, we discuss some other popular alternatives for completeness.

\paragraph{Black-box unsupervised representation learning.} On the other end from white-box models, which learn representations from data that have \textit{a priori} desired structures, are black-box unsupervised learning methods which learn fully data-driven representations. One implementation of this principle includes contrastive learning, which learns representations from computing the statistics of multiple augmentations of the same data point \citep{chen2020simple,bardes2021vicreg}. Another angle is to seek a representation with desirable characteristics for a specific task, such as classification; prior works have considered diffusion models as ``representation learners'' from this angle \citep{Chen2024-tb,Xiang2023-wq}. The notion of representation learning we are interested in in this work, namely the transformation of the data distribution towards a structured form that preserves its essential information content, is different from the notion in this latter group of works. Still another implementation is that of \textit{autoencoding} models, the most recently popular of which is the \textit{masked autoencoder} (MAE) \citep{he2022masked}. Autoencoders attempt to build low-dimensional representations of the data and use them to reconstruct input data \citep{Tishby-ITW2015,Rezende2014-vr,Kingma2013-sb,Hinton-NIPS1993}; masked autoencoders specifically mask the input data in training and attempt to impute the missing entries through reconstruction.

The common point in all such unsupervised learning methods so far is that they use \textit{black-box} neural networks, such as ResNets \citep{chen2020simple} or transformers \citep{caron2021emerging}, as their back-end. Thus, although they sometimes develop semantically meaningful representations of the data \citep{chen2020simple,caron2021emerging,bardes2021vicreg}, they are uninterpretable, and their training procedures and internal mechanisms are opaque.

\paragraph{Deep networks and stochastic dynamics.} There are many quantitative rapprochements of deep learning and stochastic dynamics. The most well-known of these is diffusion models, which can be modeled as discretizations of It\"{o} diffusion processes \citep{Song2020-xo}. The neural network is usually trained to estimate the so-called \textit{score function}. Diffusion models can be thought of as implementing a particular approximation to optimal transport between a template distribution and the true data distribution \citep{khrulkov2022understanding}. Different types of stochastic dynamics useful for generative modeling may be derived from optimal transport between the data distribution and a pre-specified template \citep{de2021diffusion,albergo2023stochastic}. However, diffusion models are unique among these methods in that they have an \textit{iterative denoising} interpretation \citep{karras2022elucidating}, which this work draws on. Such an interpretation has previously been used to construct deep denoising networks from unrolled diffusion processes \citep{mei2023deep}, instead of just using the deep networks to do black-box estimation of the score function. Similar studies have interpreted deep networks as discretizations of diffusion processes without this particular denoising interpretation \citep{li2022neural}, but the aforementioned unrolled iterative denoising strategy is what we draw upon in this work.

\paragraph{Other related work.} Here we also discuss some related work with regards to different modifications of the transformer architecture and training procedures which interface well with our white-box design. For example, \citet{kitaev2020reformer} suggests that sharing the \(Q\) and \(K\) matrices in the regular transformer is a mechanism to make the transformer more efficient at no performance cost. This choice is a heuristic, whereas our white-box design suggests that \(Q\), \(K\), and \(V\) should be set equal, and as we see in the paper this comes with some small tradeoffs. Also, since white-box models are derived such that each layer has a defined and understood role, it is natural to ask if such models can be trained layer-wise, i.e., one layer at a time \citep{bengio2006greedy}. While this is also possible, we leave it to future work; our experiments show that with end-to-end training, a vastly more common method to train deep networks, each layer still follows the role it was designed for.

\subsection{An Overview of Diffusion Processes}\label{app:diffusion}

In this section, we give an overview of the basics of time-reversible It\^{o} diffusion processes, the mathematical foundation for diffusion models. This is to make this paper more self-contained by providing knowledge about general diffusion processes that we will apply to our special models. The coverage adapts that of \cite{millet1989integration,Song2020-xo,karras2022elucidating}. 

Consider a generic It\^{o} diffusion process \((\vz(t))_{t \in [0, T]}\), where \(\vz(t)\) is an \(\R^{m}\)-valued random variable, given by the SDE
\begin{equation}\label{eq:forward_general_diffusion}
    \odif{\vz(t)} = b(\vz(t), t)\odif{t} + \Sigma(\vz(t), t)\odif{\vw(t)}, \qquad \vz(0) \sim P, \qquad \forall t \in [0, T]
\end{equation}
where \(\vw\) is a Brownian motion and \(P\) is some probability measure on \(\R^{m}\) (in this case representing the data distribution). Here the \textit{drift coefficient} \(b \colon \R^{m} \times \R \to \R^{m}\) and \textit{diffusion coefficient} \(\Sigma \colon \R^{m} \times \R \to \R^{m \times m}\) are functions. To make sense of \Cref{eq:forward_general_diffusion} and also verify the existence of strong (i.e., pathwise well-defined) solutions, we need some regularity on them, and we choose the following assumption:
\begin{enumerate}[label={A\arabic*.}, leftmargin=*]
    \item \(b\) and \(\Sigma\) have some spatial smoothness and do not grow too fast, i.e., there is a constant \(K \geq 0\) such that for all \(\vx, \wt{\vz} \in \R^{m}\) we have
    \begin{align}
        &\sup_{t \in [0, T]}\bs{\norm{\Sigma(\vx, t) - \Sigma(\wt{\vz}, t)}_{F} + \norm{b(\vx, t) - b(\wt{\vz}, t)}_{2}} \leq K\norm{\vx - \wt{\vz}}_{2} \\
        &\sup_{t \in [0, T]}\bs{\norm{\Sigma(\vx, t)}_{F} + \norm{b(\vx, t)}_{2}} \leq K(1 + \norm{\vx}_{2}).
    \end{align}
\end{enumerate}
In general, \(\vz(t)\) may not have a density w.r.t.~the Lebesgue measure on \(\R^{m}\). For example, suppose that \(P\) is supported on some low-dimensional linear subspace (or even a Dirac delta measure), and take \(\Sigma\) to be the orthoprojector onto this subspace. Then \(\vz(t)\) will be supported on this subspace for all \(t\) and thus not have a density w.r.t.~the Lebesgue measure. Thus, when further discussing processes of the type \Cref{eq:forward_general_diffusion}, we make the following assumption
\begin{enumerate}[resume, label={A\arabic*.}, leftmargin=*]
    \item \(\vz(t)\) has a probability density function \(p(\cdot, t)\) for all \(t > 0\). 
    
    This is guaranteed by either of the following conditions \citep{millet1989integration}:
    \begin{enumerate}[label={A2.\arabic*}]
        \item \(b\) and \(\Sigma\) are differentiable in \((\vx, t)\) and have H\"{o}lder-continuous derivatives, and \(P\) has a density w.r.t.~the Lebesgue measure;
        \item The event 
        \begin{equation}
            \{\text{\(\rank(\Sigma(\vz(s), s)) = m\) for all \(s\) in some neighborhood of \(0\)}\}
        \end{equation}
        happens \(P\)-almost surely.
    \end{enumerate}
\end{enumerate}

Define \(\Psi \colon \R^{m} \times \R \to \R^{m \times m}\) by
\begin{equation}
    \Psi(\vx, t) \doteq \Sigma(\vx, t)\Sigma(\vx, t)\adj.
\end{equation}
To discuss time-reversibility, we also need the following local integrability condition, which is another measure of sharp growth of the coefficients (or precisely their derivatives):
\begin{enumerate}[resume, label={A\arabic*.}, leftmargin=*]
    \item The functions \((\vx, t) \mapsto \nabla_{\vx} \cdot (\Psi(\vx, t)p(\vx, t))\) are integrable on sets of the form \(D \times [t_{0}, 1]\) for \(t_{0} > 0\) and \(D\) a bounded measurable subset of \(\R^{m}\):
    \begin{align}
        \int_{t_{0}}^{1}\int_{D}\norm{\nabla_{\vx} \cdot (\Psi(\vx, t)p(\vx, t))}_{2}\odif{\vx}\odif{t} < \infty.
    \end{align}
\end{enumerate}
To write the notation out more explicitly, 
\begin{align}
    &\nabla_{\vx} \cdot (\Psi(\vx, t)p(\vx, t)) = \mat{\nabla_{\vx} \cdot (\Psi^{1}(\vx, t)p(\vx, t)) \\ \vdots \\ \nabla_{\vx} \cdot (\Psi^{m}(\vx, t)p(\vx, t))}  \\
    \text{where} \qquad 
    &\nabla_{\vx} \cdot (\Psi^{i}(\vx, t)p(\vx, t)) = \sum_{j = 1}^{m}\pdv*{[\Psi^{ij}(\vx, t)p(\vx, t)]}{x_{j}}
\end{align}
where \(\Psi^{i}\) is the \(i\th\) row of \(\Psi\) transposed to a column, and \(\Psi^{ij}\) is the \((i, j)\th\) entry of \(\Psi\). Note that \cite{millet1989integration} phrases this in terms of an local integrability condition on each \(\abs{\nabla_{\vx} \cdot (\Psi^{i}(\vx, t)p(\vx, t))}\), which would naturally give a local integrability condition on \(\norm{\nabla_{\vx}\cdot (\Psi(\vx, t)p(\vx, t))}_{\infty}\). However, all norms on \(\R^{m}\) are equivalent, and so this leads to a local integrability condition for \(\norm{\nabla_{x} \cdot (\Psi(\vx, t)p(\vx, t))}_{2}\) as produced. Note that the assumptions do not guarantee that the involved derivatives exist, in which case they are taken in the distributional (e.g., weak) sense, whence they should exist \citep{millet1989integration}.

Under assumptions A1---A3, \cite{millet1989integration} guarantees the existence of another process \((\wt{\vz}(t))_{t \in [0, T]}\) such that the laws of \(\vz(t)\) and \(\wt{\vz}(T - t)\) are the same for all \(t \in [0, T]\). This process \((\wt{\vz}(t))_{t \in [0, T]}\) is called the \textit{time reversal} of \((\vz(t))_{t \in [0, T]}\), and is shown to have law given by 
\begin{equation}\label{eq:backward_general_diffusion}
    \odif{\wt{\vz}(t)} = b^{\leftarrow}(\wt{\vz}(t), t)\odif{t} + \Sigma^{\leftarrow}(\wt{\vz}(t), t)\odif{\vw^{\leftarrow}(t)}, \qquad \wt{\vz}(0) \sim p(\cdot, T), \qquad \forall t \in [0, T]
\end{equation}
where \(\vw^{\leftarrow}(t)\) is an independent Brownian motion and
\begin{align}
    b^{\leftarrow}(\vx, t)
    &= -b(\vx, T - t) + \frac{\nabla_{\vx}\cdot[\Psi(\vx, T - t)p(\vx, T - t)]}{p(\vx, T - t)} \\
    &= -b(\vx, T - t) + \nabla_{\vx} \cdot \Psi(\vx, T - t) + \Psi(\vx, T - t)[\nabla_{\vx}\log p(\vx, T - t)], \\
    \Sigma^{\leftarrow}(\vx, t)
    &= \Sigma(\vx, T - t).
\end{align}

We would next like to develop an ODE which transports the probability mass \(P\) in the same way as \Cref{eq:forward_general_diffusion} --- namely, find another process \((\ol{\vz}(t))_{t \in [0, T]}\) which has deterministic dynamics, yet has the same law as \((\vz(t))_{t \in [0, T]}\). \cite{Song2020-xo} looks at the Fokker-Planck equations (which can be defined, at least in a weak sense, under assumptions A1--A2) and manipulates them to get the following dynamics for \(\ol{\vz}(t)\):
\begin{align}\label{eq:general_diffusion_ode}
    \odif{\ol{\vz}(t)} 
    &= \ol{b}(\ol{\vz}(t), t)\odif{t}, \qquad \ol{\vz}(0) \sim P, \qquad \forall t \in [0, T], \\
    \text{where} \qquad \ol{b}(\vx, t)
    &= b(\vx, t) - \frac{1}{2}\cdot \frac{\nabla_{\vx} \cdot [\Psi(\vx, t)p(\vx, t)]}{p(\vx, t)} \\
    &= b(\vx, t) - \frac{1}{2}\nabla_{\vx}\cdot \Psi(\vx, t) - \frac{1}{2}\Psi(\vx, t)[\nabla_{x}\log p(\vx, t)].
\end{align}
Now to get a similar process for \(\wt{\vz}(t)\), namely a process \((\ol{\wt{\vz}}(t))_{t \in [0, T]}\) which evolves deterministically yet has the same law as \((\wt{\vz}(t))_{t \in [0, T]}\), we may either take the time reversal of \Cref{eq:general_diffusion_ode} or apply the Fokker-Planck method to \Cref{eq:backward_general_diffusion}, in both cases obtaining the same dynamics:
\begin{align}\label{eq:backward_general_diffusion_ode}
    \small
    \odif{\ol{\wt{\vz}}(t)}
    &= \ol{b}^{\leftarrow}(\ol{\wt{\vz}}(t), t)\odif{t}, \qquad \ol{\wt{\vz}}(0) \sim p(\cdot, T), \qquad \forall t \in [0, T], 
\end{align}
where
\begin{align}
    \ol{b}^{\leftarrow}(\vx, t)
    &= -\ol{b}(\vx, T - t) \\
    &= -b(\vx, T - t) + \frac{1}{2}\cdot \frac{\nabla_{\vx} \cdot [\Psi(\vx, T - t)p(\vx, T - t)]}{p(\vx, T - t)} \\
    &= -b(\vx, t) + \frac{1}{2}\nabla_{\vx}\cdot \Psi(\vx, T - t) + \frac{1}{2}\Psi(\vx, T - t)[\nabla_{\vx}\log p(\vx, T - t)].
\end{align}
The quantity \(\nabla_{\vx}\log p(\vx, t)\) is of central importance; it is denoted the \textit{score at time \(t\)}, and we use the notation \(s(\vx, t) \doteq \nabla_{x}\log p(\vx, t)\) for it. With this substitution, we have the following dynamics for our four processes:
\begin{align}
    \odif{\vz(t)} 
    &= b(\vz(t), t)\odif{t} + \Sigma(\vz(t), t)\odif{\vw(t)}, \quad \vz(0) \sim P \\
    \odif{\wt{\vz}(t)} 
    &= [-b(\wt{\vz}(t), T - t) + \nabla_{\vx} \cdot \Psi(\wt{\vz}(t), T - t) + \Psi(\wt{\vz}(t), T - t)s(\wt{\vz}(t), T - t)]\odif{t} \\
    &\qquad + \Sigma(\wt{\vz}(t), T - t)\odif{\vw^{\leftarrow}(t)}, \quad \wt{\vz}(0) \sim p(\cdot, T) \\
    \odif{\ol{\vz}(t)}
    &= \bs{b(\ol{\vz}(t), t) - \frac{1}{2}\nabla_{\vx}\cdot \Psi(\ol{\vz}(t), t) - \frac{1}{2}\Psi(\ol{\vz}(t), t)s(\ol{\vz}(t), t)}\odif{t}, \quad \ol{\vz}(0) \sim P \\
    \odif{\ol{\wt{\vz}}(t)}
    &= \bigg[-b(\ol{\wt{\vz}}(t), T - t) + \frac{1}{2}\nabla_{\vx}\cdot \Psi(\ol{\wt{\vz}}(t), T - t) \\
    &\qquad + \frac{1}{2}\Psi(\ol{\wt{\vz}}(t), T - t)s(\ol{\wt{\vz}}(t), T - t)\bigg]\odif{t}, \quad \ol{\wt{\vz}}(0) \sim p(\cdot, T).
\end{align}

In practice, one fits an estimator for \(s(\cdot, \cdot)\) and estimates \(p(\cdot, T)\) and runs a discretization of either \Cref{eq:backward_general_diffusion} or \Cref{eq:backward_general_diffusion_ode} to sample approximately from \(P\). One common instantiation used in diffusion models \citep{karras2022elucidating} is the so-called \textit{variance-exploding} diffusion process, which has the coefficient settings
\begin{equation}
    b(\vx, t) = 0, \qquad \Sigma(\vx, t) = \sqrt{2}\vI
\end{equation}
which implies that
\begin{equation}
    \Psi(\vx, t) = 2\vI.
\end{equation}
This means that the four specified processes are of the form
\begin{align}
    \odif{\vz(t)} 
    &= \sqrt{2}\odif{\vw(t)}, \quad \vz(0) \sim P \\
    \odif{\wt{\vz}(t)} 
    &= s(\wt{\vz}(t), T - t)\odif{t} + \sqrt{2}\odif{\vw^{\leftarrow}(t)}, \quad \wt{\vz}(0) \sim p(\cdot, T) \\
    \odif{\ol{\vz}(t)}
    &= s(\ol{\vz}(t), t)\odif{t}, \quad \ol{\vz}(0) \sim P \\
    \odif{\ol{\wt{\vz}}(t)}
    &= -s(\ol{\wt{\vz}}(t), T - t), \quad \ol{\wt{\vz}}(0) \sim p(\cdot, T).
\end{align}
Notice that the determinstic flows are actually gradient flows on the score, which concretely reveals a connection between sampling and optimization, and thus between diffusion models (precisely those which use the probability flow ODE to sample) and unrolled optimization networks.

In this context, we can also establish the connection between diffusion networks and iterative denoising. In the variance-exploding setting, we have
\begin{equation}
    \vz(t) \sim \mathcal{N}(\vz(0), 2t\vI),
\end{equation}
which can be easily computed using results from, e.g., \cite{sarkka2019applied}. Thus \(\vz(t)\) is a noisy version of \(\vz(0)\), with noise level increasing monotonically with \(t\), and sampling \(\vz(0)\) from \(\vz(t)\) conceptually removes this noise. Concretely, \textit{Tweedie's formula} \citep{efron2011tweedie} says that the optimal denoising function \(\E{\vz(0) \mid \vz(t)}\) has a simple form in terms of the score function:
\begin{equation}
    \E{\vz(0) \mid \vz(t)} = \vz(t) + 2t\cdot s(\vz(t), t).
\end{equation}
In other words, the score function \(s\) points from the current iterate \(\vz(t)\) to the value of the optimal denoising function, so it is a negative multiple of the conditionally-expected noise. Following the score by (stochastic) gradient flow or its discretization is thus equivalent to iterative denoising.

\subsection{Companion to \Cref{sub:unification}} \label{app:computations_rr_gradient}

In this section, we prove a formal version of the result \Cref{thm:informal_rate_score} stated in \Cref{sub:unification}. That is, we examine a basic yet representative instantiation of the signal model \Cref{model:gaussian_tokens_noise}, and show that under this model, in a
natural regime of parameter scales motivated by the architecture of \ours{}
applied to standard image classification benchmarks, the operation implemented
by taking a gradient step on the compression term of the sparse rate reduction
objective \Cref{eq:sparse_rr} corresponds to a projection operation at
quantization scales $\veps^2$ proportional to the size of the deviation.
This leads us in particular to a formal version of the result
\Cref{thm:informal_rate_score}.

\paragraph{Signal model.}

We consider an instantiation of the model \Cref{model:gaussian_tokens_noise}, elaborated here. That is, we fix a distribution over tokens $\vZ \in \bbR^{d \times N}$ induced by
the following natural signal model: each token $\vz_i$ is drawn independently from the
normalized isotropic Gaussian measure on one of $K$ $p$-dimensional
subspaces with orthonormal bases $\vU_1, \dots, \vU_K \in
\bbR^{d \times p}$,\footnote{More precisely, $\vz_i$ is distributed according to the
pushforward of the normalized isotropic Gaussian measure $\sN(\vZero, \tfrac{1}{p}\vI)$ on $\bbR^p$ by the bases $\vU_k$.}
which comprise the low-dimensional structure in the observed tokens, then
corrupted with i.i.d.\ Gaussian noise $\sN(\vZero, \tfrac{\sigma^2}{d} \vI)$;
the subspace each token is drawn from is selected uniformly at random,
independently of all other randomness in the problem.
This signal model therefore corresponds to the setting of uncorrelated tokens,
with maximum entropy coordinate distributions within subspaces.
It is natural to first develop our
theoretical understanding of the connection between compression and the score
function in the uncorrelated setting, although in general, the ability of \ours{} to capture correlations in the
data through the MSSA block is essential. In connection with the latter issue, we note that our proofs will generalize straightforwardly to the setting of ``well-dispersed'' correlated tokens: see the discussion in \Cref{rmk:assumptions}.

We make the further following assumptions within this model:
\begin{enumerate}
    \item Inspired by an ablation in \citet{Yu2023-ig}, which suggests that the learned
        \ours{} model on supervised classification on ImageNet has signal models $\vU_k$
        which are near-incoherent, we will assume that the subspaces $\vU_k$ have pairwise orthogonal column spaces.
        Our proofs will generalize straightforwardly to the setting where the subspaces are merely incoherent: see the discussion in \Cref{rmk:assumptions}.
    \item We assume that the relative scales of these parameters conform to the
        \ours{}-Base settings, trained on ImageNet: from
        \Cref{tab:model_configs}, these parameters are
        \begin{enumerate}
            \item $d = 768$;
            \item $N = 196$;
            \item $K = 12$;
            \item $p = d / K = 64$.
        \end{enumerate}
        In particular, $d \gg N \gg p$ and $Kp=d$. 
\end{enumerate}
These precise parameter values will not play a role in our analysis. We merely require the following quantitative relationships between the parameter values, which are more general than the above precise settings.
\begin{assumption}\label{ass:parameter_config}
    We have $\veps \leq 1$, \(\vU_{k}^{\top}\vU_{k^{\prime}} = \Ind{k = k^{\prime}}\vI\) for all \(k \neq k^{\prime}\), and the following parameter settings and scales:
    \begin{itemize}
        \item \(d \geq N \geq p \geq K \geq 2\);
        \item \(Kp = d\);
        \item \(C_{1}\sqrt{N \log N} \leq \frac{1}{2}N/K\), where \(C_{1}\) is the same as the universal constant \(C_{1}\) in the statement of \Cref{prop:binom_concentration};
        \item \(6C_{2}^{2}N \leq d\), where \(C_{2}\) is the same as the universal constant \(C_{3}\) in the statement of \Cref{prop:gaussian_covariance_concentration_opnorm};
        \item \(2C_{4}^{2}N \leq d\), where \(C_{4}\) is the same as the universal constant \(C_{1}\) in \Cref{prop:xktxk_concentration_opnorm};
    \end{itemize}
\end{assumption}
\textbf{Note:} there is no self-reference, as the third inequality is not used to prove \Cref{prop:binom_concentration}, the fourth is not used to prove \Cref{prop:gaussian_covariance_concentration_opnorm}, and the fifth is not used to prove \Cref{prop:xktxk_concentration_opnorm}.

The first and second inequalities together imply in particular that \(p \geq N / K\). The third inequality implies that \(C_{1}\sqrt{N \log N} < N/K\).  The first, second, and and third inequalities together imply that \(p > C_{1}\sqrt{N \log N}\), and that \(0 < N/K - C_{1}\sqrt{N \log N} < N/K < N/K + C_{1}\sqrt{N \log N} < N\).

These inequalities are verifiable in practice if one wishes to explicitly compute the absolute constants \(C_{1}, C_{2}, C_{3}, C_{4}\), and indeed they hold for our \ours{}-Base model.

Formally, let $\mu(K, p, \sigma^2)$ denote the probability measure on $\bbR^{d \times N}$ 
corresponding to the noisy Gaussian mixture distribution specified above.
We let $\vZ_{\base} \sim \mu$ denote an observation distributed according to
this signal model: formally, there exists a (random) map $i \mapsto s_{i}$, for $i
\in [N]$ and $s_{i} \in [K]$, such that
\begin{equation}
    \vz_{\base i} = \vU_{s_{i}} \valpha_i + \vdelta_i, \quad i = 1, \dots, n,
    \label{eq:signal-model-vector-form}
\end{equation}
where $\vDelta = \begin{bmatrix} \vdelta_1 & \hdots & \vdelta_{N} \end{bmatrix}
\simiid \sN(\vZero, \tfrac{\sigma^2}{d} \vI)$, and (independently) $\valpha_i
\simiid \sN(\vZero,
\tfrac{1}{p}\vI)$.
It is convenient to write this observation model in block form. 
To this end, let $K_k = \sum_{i=1}^N \Ind{s_{i} = k}$ for $k \in [K]$
denote the number of times the $k$-th subspace is represented amongst the
columns of $\vZ_{\base}$ (a random variable).
Then by rotational invariance of the Gaussian distribution, we have
\begin{equation}
    \vZ_{\base}
    \equid
    \begin{bmatrix}
        \vU_1 \vA_1 & \hdots & \vU_K \vA_K
    \end{bmatrix}
    \vPi
    + \vDelta,
    \label{eq:signal-model-block-form}
\end{equation}
where $\equid$ denotes equality in distribution, $\vPi \in \bbR^{N \times N}$ is
a uniformly random permutation matrix, and each $\vA_k \in \bbR^{p \times K_k}$. We also %
define \(\vX_{\natural}\) to be the noise-free version of \(\vZ_{\base}\).

Because of this equality in distribution, we will commit the mild abuse of
notation of identifying the block representation
\Cref{eq:signal-model-block-form} with the observation model
\Cref{eq:signal-model-vector-form} that follows the distribution $\mu$.

\paragraph{Denoising in the uncorrelated tokens model.}
In the uncorrelated tokens model \Cref{eq:signal-model-block-form}, the marginal
distribution of each column of $\vZ_{\base}$ is identical, and equal to an
equiproportional mixture of (normalized) isotropic Gaussians on the subspaces
$\vU_1, \dots \vU_k$, convolved with the noise distribution $\sN(\vZero,
\tfrac{\sigma^2}{d} \vI)$. 
This marginal distribution was studied in \citet{Yu2023-ig}, where it was shown that
when the perturbation level $\sigma^2 \to 0$, the score function for this
marginal distribution approximately implements a projection operation onto the
nearest subspace $\vU_k$.

Hence, we can connect compression, as implemented in the MSSA block of the
\ours{} architecture, to denoising in the uncorrelated tokens model by showing
that at similar local scales, and for suitable settings of the model parameters,
the compression operation implements a projection onto the low-dimensional
structure of the distribution, as well.

\paragraph{Compression operation.} 
The MSSA block of the \ours{} architecture arises from taking an (approximate)
gradient step on the $R^c$ term of the sparse rate reduction objective
\Cref{eq:sparse_rr}. This term writes
\begin{equation}
    R^c(\vZ \mid \vU_{[K]}) = \frac{1}{2} \sum_{k=1}^K \log\det \left( \vI + \beta
    (\vU_k \adj \vZ)\adj \vU_k\adj \vZ \right),
\end{equation}
where
\begin{align}
    \beta &= \frac{p}{N \veps^2},
\end{align}
and $\veps > 0$ is the quantization error. 
Calculating the gradient, we have
\begin{equation}
    \nabla_{\vZ} R^c(\vZ \mid \vU_{[K]})
    =
    \sum_{k=1}^K \vU_k\vU_k\adj \vZ \left(
        \beta\inv \vI + (\vU_k\adj \vZ)\adj \vU_k\adj \vZ
    \right)\inv.
    \label{eq:deltar-gradient}
\end{equation}
Minimizing the sparse rate reduction objective corresponds to taking a gradient
descent step on $R^c(\spcdot \mid \vU_{[K]})$. Performing this operation at the
observation from the uncorrelated tokens model $\vZ_{\base}$, the output can be
written as
\begin{equation}
    \vZ^+ = \vZ_{\base} - \eta \nabla R^c(\vZ_{\base} \mid \vU_{[K]}),
    \label{eq:tokens-compressed}
\end{equation}
where $\eta > 0$ is the step size.

\paragraph{Main result on projection.} We will see shortly that the behavior of
the compression output \Cref{eq:tokens-compressed} depends on the relative
scales of the perturbation about the low-dimensional structure $\sigma^2$ and
the target quantization error $\veps^2$.

\begin{theorem}\label{lem:inverse-term}
    There are universal constants $C_{1}, C_{2}, C_{3}, C_{4} > 0$ such that the following holds.
    Suppose \Cref{ass:parameter_config} holds, and moreover suppose that 
    $\sigma \leq 1$ and $C_{1}\beta \sigma \leq \half$.
    Then with probability at least $1 - K C_{2}\bigl( e^{-C_{3} d} + e^{-C_{4}N/K} + 
    N^{-2}\bigr)$,
    it holds
    \begin{align}
        &\norm*{
        \vZ^{+}
        -
        \left[
            \left(\vDelta - \eta\sP_{\vU_{[K]}}(\beta\vDelta \vPi\adj) \vPi\right) + \frac{1 +\beta\inv - \eta}{1 + \beta^{-1}}\vX_{\natural}
        \right]}
        \\
        &\leq C_{5}
        K\eta\left(
        \sigma^2 \beta^2 + \sigma(1 + \sqrt{N/d}) + \sqrt{K} \beta
        \sigma^2(1 + \sqrt{N/d}) + \sqrt{N/d}
        \right).
    \end{align}
    Here,
    $\sP_{\vU_{[K]}}$ implements a projection onto the relevant subspaces for
    each token in the limiting case as $\veps \to 0$, and is precisely defined
    in \Cref{eq:proj-defn-1,eq:proj-defn-2}.
\end{theorem}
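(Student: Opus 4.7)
The plan is to analyze $\nabla R^{c}(\vZ_{\base} \mid \vU_{[K]})$ head-by-head, show that after identifying the block structure induced by the signal model each head's contribution splits into a signal-rescaling part and a noise-projection part, and then sum over $k \in [K]$. First, I would apply the implicit permutation $\vPi$ from \Cref{eq:signal-model-block-form} to group tokens by their subspace label $s_{i}$. Then the pairwise orthogonality $\vU_{j}^{\top}\vU_{k} = \Ind{j=k}\vI$ collapses $\vU_{k}^{\top}\vX_{\natural}$ to a single $p \times K_{k}$ block equal to $\vA_{k}$, so that in this grouped ordering $\vU_{k}^{\top}\vZ_{\base}$ is a concatenation of a width-$K_{k}$ signal block $\vA_{k} + \vU_{k}^{\top}\vDelta_{k}$ with a width-$(N - K_{k})$ noise-only block $\vU_{k}^{\top}\vDelta^{\neq k}$ of much smaller amplitude.

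The technical heart of the proof is then controlling the Gram-matrix inverse $\bigl(\beta^{-1}\vI + (\vU_{k}^{\top}\vZ_{\base})^{\top}\vU_{k}^{\top}\vZ_{\base}\bigr)^{-1}$. I would combine three concentration bounds provided by the appendix: (i) \Cref{prop:binom_concentration} uniformly in $k$ to pin $K_{k}$ to $N/K \pm C_{1}\sqrt{N \log N}$; (ii) \Cref{prop:gaussian_covariance_concentration_opnorm} applied to the $p \times K_{k}$ Gaussian matrix $\vA_{k}$ to bound $\vA_{k}^{\top}\vA_{k} - \vI$ in operator norm by $O(\sqrt{K_{k}/p}) = O(\sqrt{N/d})$; and (iii) \Cref{prop:xktxk_concentration_opnorm} (or a standard sub-Gaussian operator-norm bound) to control $\vU_{k}^{\top}\vDelta$ at the scale $\sigma(1 + \sqrt{N/d})$. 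In the grouped ordering the Gram matrix then has a $2 \times 2$ block structure with top-left $\vI + \vE_{1}$, off-diagonal $\vE_{2}$, and bottom-right $\vE_{3}$, where $\vE_{1}$ has operator norm $O(\sigma + \sqrt{N/d})$, $\vE_{2}$ has operator norm $O(\sigma \sqrt{N/d})$, and $\vE_{3}$ has operator norm $O(\sigma^{2})$. A Neumann/Schur-complement expansion, which converges because the assumption $C_{1}\beta\sigma \leq 1/2$ makes $\beta$ times any of these errors sub-constant, yields the block-diagonal approximation $\mathrm{diag}\bigl((1 + \beta^{-1})^{-1}\vI_{K_{k}},\ \beta\vI_{N - K_{k}}\bigr)$, with residuals of order $\beta^{2}\sigma^{2}$ on each diagonal block and $\beta\sigma\sqrt{N/d}$ on the off-diagonal.

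Multiplying each head by $\vU_{k}\vU_{k}^{\top}\vZ_{\base}$ and summing over $k$, the signal-block contributions yield $(1 + \beta^{-1})^{-1}\vX_{\natural}$, since $\vU_{k}\vU_{k}^{\top}\vX_{\natural}$ picks out exactly the columns with $s_{i} = k$; after multiplication by $-\eta$ and addition to $\vZ_{\base} = \vX_{\natural} + \vDelta$ this produces the $(1 + \beta^{-1} - \eta)/(1 + \beta^{-1})\, \vX_{\natural} + \vDelta$ portion of the target. The noise-block contribution to column $i$ of the gradient is approximately $\beta \sum_{k \neq s_{i}} \vU_{k}\vU_{k}^{\top}\vdelta_{i}$; collecting columns reproduces the structured noise projection $\sP_{\vU_{[K]}}(\beta\vDelta\vPi^{\top})\vPi$ by the definition in \Cref{eq:proj-defn-1,eq:proj-defn-2}. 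A union bound across the $K$ heads over the failure events of (i)--(iii) delivers the probability factor $1 - K C_{2}\bigl(e^{-C_{3}d} + e^{-C_{4}N/K} + N^{-2}\bigr)$.

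The main obstacle will be the matrix-norm bookkeeping inside the Neumann expansion. Each of the four error families in the target bound---the inverse residual $\beta^{2}\sigma^{2}$, the Gaussian operator-norm slack $\sigma(1 + \sqrt{N/d})$, the cross-block coupling $\sqrt{K}\beta\sigma^{2}(1 + \sqrt{N/d})$, and the $\vA_{k}^{\top}\vA_{k} - \vI$ contribution $\sqrt{N/d}$---must be traced to the correct source and carried through the matrix product $\vU_{k}\vU_{k}^{\top}\vZ_{\base}(\spcdot)^{-1}$ without being inflated. In particular the cross-block coupling must end up scaling as $\sqrt{K}$ rather than the naive $K$, which I expect to require exploiting the independence of $\vDelta$ across the $K$ subspaces $\vU_{k}$ so that per-head errors partially cancel under summation (in the style of a Hanson--Wright or Khintchine-type estimate) rather than applying the triangle inequality head-by-head.
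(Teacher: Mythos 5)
Your overall scaffolding matches the paper's proof: group tokens by subspace via \(\vPi\), use orthonormality to reduce \(\vU_k\adj \vZ_{\base}\) to a signal block \(\vA_k + \vU_k\adj\vDelta_k\) plus noise-only blocks, build the high-probability event from \Cref{prop:binom_concentration,prop:gaussian_covariance_concentration_opnorm,prop:xktxk_concentration_opnorm} (plus the operator-norm bound on \(\vDelta\)), invert the regularized Gram matrix by a Neumann/Schur expansion valid because \(\beta\sigma \lesssim 1\), and sum heads with a union bound. However, there is a genuine gap in your main-term extraction. You approximate the Gram inverse by the block-diagonal matrix \(\mathrm{diag}\bigl((1+\beta^{-1})^{-1}\vI,\ \beta\vI\bigr)\) and discard the off-diagonal blocks as residuals, which leads you to the noise term \(\beta\sum_{k\neq s_i}\vU_k\vU_k\adj\vdelta_i\). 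That expression does \emph{not} coincide with the theorem's \(\sP_{\vU_{[K]}}\) as defined in \Cref{eq:proj-defn-1,eq:proj-defn-2}: the definition carries the factor \(\vI - \vA_{k'}(\beta\inv\vI + \vA_{k'}\adj\vA_{k'})\inv\vA_{k'}\adj\) inside each \(\sP_{k'}\), and that factor is produced precisely by the first-order cross term you are throwing away — in the paper it comes from retaining \(\vXi_k\vD_k^{-1}\) in the main term \(\vM_k\), i.e., the product of the signal block \(\vA_k(\beta\inv\vI+\vA_k\adj\vA_k)\inv\) with the \(\vA_k\adj\vU_k\adj\vDelta_{k'}\) entries of \(\vXi_k\), scaled by \(\beta\) from \(\vD_k\inv\). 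With only the coarse control available on your event (\(\norm{\vU_k\adj\vDelta_{k'}} \leq \norm{\vDelta} \lesssim \sigma\)), the discrepancy between your main term and the stated one is of order \(\beta\sigma\) per head, which is not dominated by the stated error budget \(\sigma^2\beta^2 + \sigma(1+\sqrt{N/d}) + \sqrt{K}\beta\sigma^2(1+\sqrt{N/d}) + \sqrt{N/d}\) (take \(\beta\sigma\) of constant order with \(\sigma\) and \(\sqrt{N/d}\) small). You could try to salvage your route by proving a refined bound \(\norm{\vU_k\adj\vDelta_{k'}} \lesssim \sigma\sqrt{N/d}\) (conditioning on the label assignment and using fresh Gaussian concentration for the \(p \times K_{k'}\) projected blocks), but that machinery is not in your plan; the cleaner fix is simply to keep the first-order Neumann term, as the paper does.

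Your anticipated ``main obstacle'' is also based on a misreading of where the \(\sqrt{K}\) comes from. In the target bound the \(\sqrt{K}\) multiplies a \emph{per-head} quantity, and the heads are then summed by the plain triangle inequality, giving the overall prefactor \(K\) (so that term is really \(K^{3/2}\beta\sigma^2\) in aggregate). The \(\sqrt{K}\) itself is deterministic on the good event: it is the bound \(\norm{[\vB_1\ \cdots\ \vB_K]} \leq \sqrt{K}\max_k\norm{\vB_k}\) applied to the block-row structure created by \(\vD_k\inv\) (\Cref{eq:control-op-by-blocks}). No Hanson--Wright or Khintchine-type cancellation across the independent blocks of \(\vDelta\) is needed; the paper explicitly notes that exploiting that independence would only sharpen the bound further and declines to do so.
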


We give the proof of \Cref{lem:inverse-term} below. First, we make three remarks
on interpreting the result, our technical assumptions, and our analysis.

\begin{remark}
    \Cref{lem:inverse-term} admits the following interesting interpretation in
    an asymptotic setting, where we can identify the leading-order behavior of
    the gradient and confirm our hypothesis about the connection between
    compression and score-following.
    Choose $\eta = \beta\inv$, so that the guarantee in \Cref{lem:inverse-term}
    incurs some cancellation, and moreover delineate more precise dependencies
    on the RHS of the guarantee:
    \begin{align}
        &\norm*{
        \vZ^{+}
        -
        \left[
            \left(\vDelta - \sP_{\vU_{[K]}}(\vDelta \vPi\adj)
            \vPi\right) + \frac{1}{1 + \beta^{-1}}\vX_{\natural}
        \right]}
        \\
        &\quad\ltsim
        \frac{NK^2 \veps^2}{d}\left(
        \frac{\sigma^2 d^2}{N^2K^2 \veps^4} + \sigma(1 + \sqrt{N/d}) +
        \frac{d\sigma^2}{N\sqrt{K}\veps^2} (1 + \sqrt{N/d}) + \sqrt{N/d}
        \right)
        \\
        &\quad\ltsim
        K^{3/2}\sigma^2 
        + \frac{\sigma^2 d}{N\veps^2} 
        + \frac{NK^2}{d}\left(
        \sigma + \sqrt{\frac{N}{d}}
        \right)\veps^2,
    \end{align}
    where we used \Cref{ass:parameter_config}, which implies $p = d/K$ and
    $N/d\leq 1$.
    We will check in due course whether we have satisfied the hypotheses of
    \Cref{lem:inverse-term}, so that this guarantee indeed applies.
    To this end, we optimize this bound as a function of $\veps > 0$, since this is a
    parameter of the compression model.
    The optimal $\veps$ is straightforward to compute using calculus: it
    satisfies
    \begin{align}
        \veps^2 &= \sqrt{\left.\frac{\sigma^2 d}{N}\right/\frac{K^2 N}{d} \left(
        \sigma + \sqrt{\frac{N}{d}}\right)}
        \\
        &=
        \frac{\sigma d}{N K \sqrt{\sigma + \sqrt{\frac{N}{d}}}},
    \end{align}
    and the value of the residual arising from \Cref{lem:inverse-term} with this
    choice of $\veps$ is no larger than an absolute constant multiple of
    \begin{equation}
        K^{3/2}\sigma^2 
        + \sqrt{
            \frac{K^2 \sigma^2 d}{N} \left(
                \frac{N\sigma}{d} + \left( \frac{N}{d} \right)^{3/2}
            \right)
        }
        =
        K \sigma \left(
        \sqrt{K} \sigma
        + \sqrt{\sigma + \sqrt{\frac{N}{d}}}
        \right).
    \end{equation}
    Moreover, with this choice of $\veps$, $\beta$ satisfies
    \begin{equation}
        \beta\inv = \frac{\veps^2 NK}{d} = \sqrt{\frac{\sigma}{1 +
        \sqrt{\frac{N}{d\sigma^2}}}}.
    \end{equation}
    In particular, the condition $\beta \sigma \ltsim 1$ in
    \Cref{lem:inverse-term} demands
    \begin{equation}
        \sqrt{\sigma + \sqrt{\frac{N}{d}}} \ltsim 1,
    \end{equation}
    which holds for sufficiently small $\sigma$ and sufficiently large $d \geq
    N$, showing that \Cref{lem:inverse-term} can be nontrivially applied in this
    setting.
    If we consider a simplifying limiting regime where $N, d \to +\infty$ such
    that $N/d \to 0$ and $N/K \to +\infty$, we observe the following asymptotic
    behavior of the guarantee of \Cref{lem:inverse-term}:
    \begin{align}
        \norm*{
        \vZ^{+}
        -
        \left[
            \left(\vDelta - \sP_{\vU_{[K]}}(\vDelta \vPi\adj)
            \vPi\right) + \frac{1}{1 + \sqrt{\sigma}}\vX_{\natural}
        \right]}
        \ltsim
        K  \sigma^{3/2} \left( 1 + \sqrt{K\sigma} \right).
    \end{align}
    This demonstrates that a gradient step on $R^c$ performs denoising: there is
    a noise-level-dependent shrinkage effect applied to the signal
    $\vX_{\natural}$, which vanishes as $\sigma \to 0$, and meanwhile the noise
    term $\vDelta$ is reduced.

    Moreover, as $\sigma \to 0$, we can express the limiting form of
    $\sP_{\vU_{[K]}}$ exactly as an orthogonal projection, since this drives
    $\beta\inv \to 0$:
    following \Cref{eq:proj-defn-1,eq:proj-defn-2}, we have here
    \begin{equation}
        \sP_{U_{[K]}}
        =
        \begin{bmatrix}
            \sP_1 & \hdots & \sP_K   
        \end{bmatrix},
    \end{equation}
    where
    \begin{equation}
        \sP_k \to 
        \sum_{k' \neq k}
        \vU_{k'} \proj{\im(\vA_{k'})^\perp} \vU_{k'}\adj.
    \end{equation}
    This shows that, in an asymptotic sense, a gradient step on $R^c$ serves to \textit{suppress
    the effect of the perturbation applied to the observations $\vZ_{\base}$
    about the local signal model $\vX_{\natural}$.} This verifies our claim
    previously that in this setting, there is a correspondence between a
    score-following algorithm and a compression-based approach: locally, both
    project the observations onto the structures of the signal model.

    It can be shown moreover that the shrinkage effect on $\vX_{\natural}$
    demonstrated here appears as a consequence of using the $R^c$
    ``compression'' term for the gradient step in \ours{}; when the gradient
    step is taken instead on the full $\Delta R$ rate reduction objective (which
    is computationally prohibitive, of course), there is zero shrinkage, and
    perfect denoising is performed for a wider variety of step sizes $\eta$ than the choice made here. We see the introduction of this shrinkage
    effect this as the price of constructing an efficient and interpretable
    network architecture. In practice, the ISTA block of \ours{} counteracts
    this shrinkage effect, which is anyways minor at reasonable parameter scales.
\end{remark}

\begin{remark}\label{rmk:assumptions}
     We have made two assumptions which may not hold exactly in practice: namely, we have assumed that the \(\vU_{k}\)'s have orthogonal columns, namely \(\vU_{k}^{\top}\vU_{k^{\prime}} = \Ind{k = k^{\prime}}\vI\), and we have assumed that the linear combination coefficients \(\vA_{k}\) that form the matrix \(\vX_{\natural}\) are i.i.d.~samples from Gaussian distributions. Both these assumptions can be made more realistic, at the cost of additional (non-instructive) complexity in the analysis; we briefly go over how.
     
     Relaxing the orthogonality condition \(\vU_{k}^{\top}\vU_{k^{\prime}} = \Ind{k = k^{\prime}}\vI\) to near-orthogonality, namely \(\norm{\vU_{k}^{\top}\vU_{k^{\prime}} - \Ind{k = k^{\prime}}\vI} \leq \nu\) for a small \(\nu\), as observed in practice \citep{Yu2023-ig} would introduce additional small error terms in the proof, say polynomial in \(\nu\). The magnitudes of these errors could in principle be precisely tracked, whence one could obtain a similar result to \Cref{lem:inverse-term}.

    Secondly, we have assumed that the \(\vA_{k}\)'s have independent columns which are sampled from (the same) Gaussian distribution. However, in the conceptual framework for \ours{}, we exploit the joint distribution (and in particular the correlations) between the tokens in order to obtain good performance for our model. Our analysis is not completely agnostic to this fact; as we will see, the proof of \Cref{lem:inverse-term} only leverages the independence of the columns of each \(\vA_{k}\)'s in order to obtain high-probability upper bounds on the smallest and largest singular value of the token matrices. If these bounds were ensured by some other method, such as appropriate normalization and incoherence, a similar conclusion to \Cref{lem:inverse-term} could hold in the more realistic correlated tokens model. Going beyond well-conditioned token matrices for each subspace would require additional modeling assumptions, and additional investigative experimental work to determine a realistic basis for such assumptions.
\end{remark}

\begin{remark}
    We have not attempted to optimize constants or rates of concentration in the
    proof of \Cref{lem:inverse-term}, preferring instead to pursue a
    straightforward analysis that leads to a qualitative interpretation of the
    behavior of the rate reduction gradient in our model problem. Minor
    improvements to the concentration analysis would enable the parameter
    scaling requirements in \Cref{ass:parameter_config} to be relaxed slightly,
    and the probability bound in \Cref{lem:inverse-term} that scales as $K /
    N^2$ can easily be improved to any positive power of $1/N$.
\end{remark}

\begin{proof}[Proof of \Cref{lem:inverse-term}]
    We start by noticing that, by orthonormality of the subspaces $\vU_k$,
    we have by \Cref{eq:signal-model-block-form}
    \begin{equation}
        \vU_k \adj \vZ_{\base}
        =
        \begin{bmatrix}
            \vZero & \hdots & \vA_k & \hdots & \vZero
        \end{bmatrix}
        \vPi
        + \vU_k\adj \vDelta,
    \end{equation}
    so that
    \begin{equation}
    \small
        \left(
        \beta\inv \vI + (\vU_k\adj \vZ_{\base})\adj \vU_k\adj
        \vZ_{\base}
        \right)\inv
        =
        \vPi\adj
        \left(
        \underbrace{
            \begin{bmatrix}
                \beta\inv\vI  &&&& \\
                &\ddots &&& \\
                && \beta\inv \vI + \vA_k\adj \vA_k && \\
                &&&\ddots & \\
                &&&&\beta\inv \vI  \\
            \end{bmatrix}
        }_{\vD_k}
        + \vXi_k
        \right)\inv
        \vPi,\label{eq:ak-defn}
    \end{equation}
    because permutation matrices are orthogonal matrices, and where the
    perturbation $\vXi_k$ is defined by
    \begin{equation}
        \vXi_k = 
        \vPi \vDelta\adj \vU_k \vU_k\adj \vDelta\vPi\adj
        +
        \begin{bmatrix}
            \vZero & \hdots & \vDelta_1\adj \vU_k \vA_k & \hdots & \vZero \\
            \vdots & & \vdots & & \vdots \\
            \vA_k\adj \vU_k\adj \vDelta_1 & \hdots & \vDelta_k\adj \vU_k \vA_k +
            \vA_k\adj \vU_k\adj \vDelta_k & \hdots & \vA_k\adj \vU_k\adj \vDelta_K\\
            \vdots & & \vdots & & \vdots \\
            \vZero & \hdots & \vDelta_K\adj \vU_k \vA_k & \hdots & \vZero \\
        \end{bmatrix},
        \label{eq:xik-defn}
    \end{equation}
    and where we have defined (implicitly) in addition
    \begin{equation}
        \begin{bmatrix}
            \vDelta_1 & \hdots & \vDelta_K
        \end{bmatrix}
        = \vDelta \vPi\adj.
    \end{equation}
    The matrix $\vD_k \succ \vZero$, so we can write
    \begin{equation}
        \left(
        \beta\inv \vI + (\vU_k\adj \vZ_{\base})\adj \vU_k\adj
        \vZ_{\base}
        \right)\inv
        =
        \vPi\adj
        \vD_k\inv
        \left(
        \vI + \vXi_k\vD_k\inv
        \right)\inv \vPi,
    \end{equation}
    from which it follows
    \begin{align}
        &\vU_k\adj \vZ_{\base}
        \left(
        \beta\inv \vI + (\vU_k\adj \vZ_{\base})\adj \vU_k\adj
        \vZ_{\base}
        \right)\inv
        \\
        &\quad=
        \left(
        \begin{bmatrix}
            \vZero & \hdots & \vA_k(\beta\inv \vI + \vA_k\adj \vA_k)\inv &
            \hdots & \vZero
        \end{bmatrix}
        + \vU_k\adj \vDelta\vPi\adj \vD_k\inv
        \right)
        \left(
        \vI + \vXi_k\vD_k\inv
        \right)\inv \vPi.
    \end{align}
    The task before us is therefore to control $\norm{\vXi_k \vD_k \inv} <
    1$, in order to apply the Neumann series to further simplify this
    expression. We will do this in stages: first, we invoke several
    auxiliary lemmas to construct a high-probability event on which the
    random quantities in the preceding expression are controlled about their
    nominal values; next, we show that the Neumann series can be applied on
    this event and a main term extracted; finally, we simplify this main
    term further in order to establish the claimed expression.

    \paragraph{High-probability event construction.} In order to achieve the appropriate control on all random quantities, we would like to construct a high-probability event on which the random quantities are not too large. By \Cref{prop:delta_concentration_opnorm,prop:xk_concentration_opnorm,prop:xktxk_concentration_opnorm} and union bound, there exist universal constants \(C_{i} > 0\) for which
    \begin{equation}
        \Pr*{
            \begin{array}{rl}
                & \norm{\vDelta} \leq \sigma(C_{1} + \sqrt{N/d}) \\ 
                \forall k \in [K] \colon& \norm{\vA_{k}} \leq 1 + C_{2}\sqrt{N/d} \\ 
                \forall k \in [K] \colon& \norm{\vA_{k}^{\top}\vA_{k} - \vI} \leq C_{3}\sqrt{N/d}
            \end{array}
        } \geq 1 - C_{4}K(e^{-C_{5}d} + e^{-C_{6}N/K} + N^{-2}).
    \end{equation}
    The event we compute the probability of, which we denote \(E^{\star}\), is precisely the good event that we want. Formally, 
    \begin{equation}
        E^{\star} \doteq \bc{
            \begin{array}{rl}
                & \norm{\vDelta} \leq \sigma(C_{1} + \sqrt{N/d}) \\ 
                \forall k \in [K] \colon& \norm{\vA_{k}} \leq 1 + C_{2}\sqrt{N/d} \\ 
                \forall k \in [K] \colon& \norm{\vA_{k}^{\top}\vA_{k} - \vI} \leq C_{3}\sqrt{N/d}
            \end{array}
        }. \label{eq:good_event}
    \end{equation}
    We know that \(E^{\star}\) occurs with high probability, and are able to strongly control the random quantities to the degree desired.

    \paragraph{Main term extraction.}
    By \Cref{lem:neumann-op-norm} and our hypotheses on the problem parameters, we have on \(E^{\star}\) that
    \begin{equation}
        \norm{\vXi_k \vD_k \inv} 
        \leq
        C \beta \sigma
        <
        1.
    \end{equation}  
    We can therefore apply the Neumann series to obtain
    \begin{align}
        &\vU_k\adj \vZ_{\base}
        \left(
        \beta\inv \vI + (\vU_k\adj \vZ_{\base})\adj \vU_k\adj
        \vZ_{\base}
        \right)\inv
        \\
        =&
        \scalebox{0.8}{\(\left(
        \begin{bmatrix}
            \vZero & \hdots & \vA_k(\beta\inv \vI + \vA_k\adj \vA_k)\inv &
            \hdots & \vZero
        \end{bmatrix}
        + \vU_k\adj \vDelta\vPi\adj \vD_k\inv
        \right)
        \left(
        \vI - 
        \vXi_k \vD_k^{-1}
        +
        \sum_{j=2}^\infty
        (-1)^j \left( \vXi_k \vD_k^{-1} \right)^j
        \right) \vPi\)}.
    \end{align}
    Again on \(E^{\star}\), we have
    \begin{equation}
        \norm*{
            \sum_{j=2}^\infty
            (-1)^j \left( \vXi_k \vD_k^{-1} \right)^j
        }
        \leq
        \sum_{j=2}^\infty
        \norm*{
            \vXi_k \vD_k^{-1}
        }^j
        \leq
        C (\beta \sigma)^2
        \frac{
            1
        }{
            1 - C \beta \sigma
        }
        \leq
        C' (\beta \sigma)^2.
    \end{equation}
    Moreover, as in the proof of \Cref{lem:neumann-op-norm}, we have on the previous event
    that
    \begin{equation}
        \norm*{
            \vU_k\adj \vDelta\vPi\adj \vD_k\inv
        }
        \leq
        C \beta \sigma.
    \end{equation}
    Thus, if we define a ``main term''
    \begin{equation}
        \vM_k
        =
        \left[
            \begin{bmatrix}
                \vZero & \hdots & \vA_k(\beta\inv \vI + \vA_k\adj \vA_k)\inv &
                \hdots & \vZero
            \end{bmatrix}
            \left(
            \vI - 
            \vXi_k \vD_k^{-1}
            \right)
            + \vU_k\adj \vDelta\vPi\adj \vD_k\inv
            \right]\vPi,
    \end{equation}
    we have on the same event as previously
    \begin{equation}
        \norm*{
            \vU_k\adj \vZ_{\base}
            \left(
            \beta\inv \vI + (\vU_k\adj \vZ_{\base})\adj \vU_k\adj
            \vZ_{\base}
            \right)\inv
            - \vM_k
        }
        \leq C (\beta \sigma)^2.
    \end{equation}
    To conclude, we need only study this main term, since $\vU_k$ has operator norm $1$.

    \paragraph{Simplifying the main term.}
    Our approach will be to control the main term $\vM_k$ around a simpler
    expression, using basic perturbation theory; by the triangle inequality
    for the operator norm, this will give control of the desired gradient
    term.
    After distributing, $\vM_k$ is a sum of three terms; we will start with
    the simplest term. We first compute
    \begin{equation}
        \vU_k\adj \vDelta\vPi\adj \vD_k\inv
        =
        \vU_k\adj
        \begin{bmatrix}
            \beta \vDelta_1 & \hdots 
            \vDelta_k \left(
            \beta\inv \vI + \vA_k\adj \vA_k\right)\inv
            & \hdots & \beta \vDelta_K
        \end{bmatrix}.
    \end{equation}
    We are going to argue that the residual
    \begin{equation}
        \norm*{
            \vU_k\adj \vDelta\vPi\adj \vD_k\inv
            -
            \vU_k\adj
            \begin{bmatrix}
                \beta \vDelta_1 & \hdots &
                \vZero
                & \hdots & \beta \vDelta_K
            \end{bmatrix}
        }
    \end{equation}
    is small. To this end, note that by the fact that $\vU_k$ has unit
    operator norm,
    \begin{align}
        &\norm*{
            \vU_k\adj \vDelta\vPi\adj \vD_k\inv
            -
            \vU_k\adj
            \begin{bmatrix}
                \beta \vDelta_1 & \hdots &
                \vZero
                & \hdots & \beta \vDelta_K
            \end{bmatrix}
        } \\
        &\leq
        \norm*{
            \begin{bmatrix}
                \vZero & \hdots &
                \vDelta_k \left(
                \beta\inv \vI + \vA_k\adj \vA_k\right)\inv
                & \hdots & \vZero
            \end{bmatrix}
        }
        \\
        &=
        \norm*{
            \vDelta_k \left(
            \beta\inv \vI + \vA_k\adj \vA_k\right)\inv
        } \\
        &\leq
        \norm*{
            \vDelta_k 
        }
        \norm*{
            \left(
            \beta\inv \vI + \vA_k\adj \vA_k\right)\inv
        }.
    \end{align}
    By \Cref{eq:good_event} and \Cref{eq:pseudoinverse-ipart-small} from
    \Cref{lem:blockwise-pinv}, the second term 
    here is controlled on $E^\star$.
    For the first term, we note that by definition and the fact that
    the unit sphere is invariant to rotations (and permutations are
    rotations),
    \begin{align}
        \norm*{\vDelta}
        =
        \sup_{\norm{\vu}_2 \leq 1}
        \norm*{\vDelta \vu}_2
        &=
        \sup_{\norm{\vu}_2 \leq 1}
        \norm*{\begin{bmatrix}
            \vDelta_1 & \hdots & \vDelta_K
        \end{bmatrix} \vu
        }_2
        \\
        &=
        \sup_{\norm{\vu}_2 \leq 1}
        \norm*{
            \sum_{i=1}^K
            \vDelta_i \vu_i
        }_2,
    \end{align}
    where $\vu_i$ are coordinate-subset-induced partitions of the vector
    $\vu$ induced by those of $\vDelta\vPi\adj$.
    This yields immediately
    \begin{equation}
        \norm*{\vDelta}
        \leq
        \sup_{\norm{\vu}_2 \leq 1}
        \sum_{i=1}^K 
        \norm*{
            \vDelta_i \vu_i
        }_2
        \leq
        \left( \max_{k \in [K]}\, \norm{\vDelta_k}\right)
        \sup_{\norm{\vu}_2 \leq 1}
        \sum_{i=1}^K \norm{\vu_i}_2
        \leq
        \sqrt{K}
        \left( \max_{k \in [K]}\, \norm{\vDelta_k}\right),
        \label{eq:control-op-by-blocks}
    \end{equation}
    by the triangle inequality and inequalities for $\ell^p$ norms.
    Similarly, choosing a specific $\vu$ in the operator norm expression,
    namely one that is supported entirely on one of the coordinate
    partitions $\vu_i$, shows that
    \begin{align}
        \norm*{\vDelta}
        &\geq
        \norm*{
            \vDelta_i \vu_i
        }_2
    \end{align}
    for any $i$, whence
    \begin{equation}
        \max_{k \in [K]}\, \norm{\vDelta_k}
        \leq
        \norm*{\vDelta}.
        \label{eq:blocks-opnorm-controlled-overall}
    \end{equation}
    It follows that we control the first term above on $E^\star$.
    Combining this reasoning, we conclude from the above
    \begin{align}
        &\norm*{
            \vU_k\adj \vDelta\vPi\adj \vD_k\inv
            -
            \vU_k\adj
            \begin{bmatrix}
                \beta \vDelta_1 & \hdots &
                \vZero
                & \hdots & \beta \vDelta_K
            \end{bmatrix}
        }\\
        \leq&
        \sigma(C + \sqrt{N/d})
        \left(
        \frac{1}{1 + \beta\inv}
        +
        \frac{C' \sqrt{N/d}}{1 + \beta\inv}
        \right)
        \\
         \ltsim &\sigma(1 + C\sqrt{N/d}),
    \end{align}
    where the second line uses \Cref{ass:parameter_config} to 
    remove the higher-order residual.

    Next, we recall that $\vXi_k$ is a sum of two terms; we will do one term
    at a time for concision. We have first
    \begin{align}
        &\begin{bmatrix}
            \vZero & \hdots & \vA_k(\beta\inv \vI + \vA_k\adj \vA_k)\inv &
            \hdots & \vZero
        \end{bmatrix}
        \vPi \vDelta\adj \vU_k \vU_k\adj \vDelta\vPi\adj
        \\
        &=
        \begin{bmatrix}
            \vZero & \hdots & \vA_k(\beta\inv \vI + \vA_k\adj \vA_k)\inv &
            \hdots & \vZero
        \end{bmatrix}
        \begin{bmatrix}
            \vDelta_1\adj \\
            \vdots\\
            \vDelta_K\adj
        \end{bmatrix}
        \vU_k \vU_k\adj
        \begin{bmatrix}
            \vDelta_1\adj \\
            \vdots\\
            \vDelta_K\adj
        \end{bmatrix}\adj
        \\
        &=
        \vA_k(\beta\inv \vI + \vA_k\adj \vA_k)\inv \vDelta_k\adj \vU_k
        \vU_k\adj
        \begin{bmatrix}
            \vDelta_1 &
            \hdots&
            \vDelta_K
        \end{bmatrix}.
    \end{align}
    We then multiply this term by $\vD_k\inv$ on the right to get the term
    that appears in $\vM_k$ (ignoring the multiplication on the right by
    $\vPi$, because it does not change operator norms). 
    In particular, we will control
    \begin{equation}
        \norm*{
            \vA_k(\beta\inv \vI + \vA_k\adj \vA_k)\inv \vDelta_k\adj \vU_k
            \vU_k\adj
            \begin{bmatrix}
                \vDelta_1 &
                \hdots&
                \vDelta_K
            \end{bmatrix}\vD_k\inv
        },
    \end{equation}
    showing that this term is small.
    We will accomplish this with the block diagonal structure of $\vD_k$:
    indeed, this gives that $\vD_k\inv$ is obtained by blockwise inversion,
    and hence
    \begin{align}
        &\norm*{
            \vA_k(\beta\inv \vI + \vA_k\adj \vA_k)\inv \vDelta_k\adj \vU_k
            \vU_k\adj
            \begin{bmatrix}
                \vDelta_1 &
                \hdots&
                \vDelta_K
            \end{bmatrix}\vD_k\inv
        }
        \\
        &\quad=
        \scalebox{0.9}{\(\norm*{
            \vA_k(\beta\inv \vI + \vA_k\adj \vA_k)\inv \vDelta_k\adj \vU_k
            \vU_k\adj
            \begin{bmatrix}
                \beta\vDelta_1
                &\hdots&
                \vDelta_k \left( \beta\inv \vI + \vA_k\adj \vA_k \right)\inv
                &\hdots&
                \beta \vDelta_K
            \end{bmatrix}
        }\)}
        \\
        &\quad\leq
        \sqrt{K}\max \Bigl\{
            \norm*{
                \vA_k(\beta\inv \vI + \vA_k\adj \vA_k)\inv \vDelta_k\adj \vU_k
                \vU_k\adj\vDelta_{k}(\beta\inv \vI + \vA_k\adj \vA_k)\inv
            },
            \\
            &\quad\qquad\qquad
            \max_{k' \neq k}\,
            \beta \norm*{
                \vA_k(\beta\inv \vI + \vA_k\adj \vA_k)\inv \vDelta_k\adj \vU_k
                \vU_k\adj\vDelta_{k'}
            }
            \Bigr\},
    \end{align}
    where the second line uses \Cref{eq:control-op-by-blocks}.
    We will give a coarse control of this term---the error could be improved
    further by exploiting more thoroughly independence of the blocks
    $\vDelta_k$ to show that the maximum over $k' \neq k$ in the last line
    of the preceding expression is smaller.
    We have by submultiplicativity of the operator norm
    and the triangle inequality
    \begin{align}
        &\norm*{
            \vA_k(\beta\inv \vI + \vA_k\adj \vA_k)\inv \vDelta_k\adj \vU_k
            \vU_k\adj\vDelta_{k}(\beta\inv \vI + \vA_k\adj \vA_k)\inv
        }\\
        \leq&
        \left(
        \frac{1}{1 + \beta \inv}
        +
        \frac{C \sqrt{N/d}}{1 + \beta \inv}
        \right)^2
        \norm*{
            \vDelta_k\adj \vU_k \vU_k\adj\vDelta_{k}
        }
        \\
        \leq&
        \left(1 + C \sqrt{N/d} \right)
        \norm*{
            \vDelta_k\adj \vU_k \vU_k\adj\vDelta_{k}
        },
    \end{align}
    where the first line uses \Cref{lem:blockwise-pinv}, and the second line
    uses \Cref{ass:parameter_config} to simplify the residual as above.
    We have meanwhile
    from the definition of $E^\star$
    \begin{align}
        \norm*{
            \vDelta_k\adj \vU_k \vU_k\adj\vDelta_{k}
        }
        &\leq
        \norm*{
            \vDelta_k
        }^2
        \ltsim \sigma^2\left(1 + \sqrt{N/d}\right),
    \end{align}
    because $\vU_k\vU_k\adj$ is an orthogonal projection, and again using
    \Cref{ass:parameter_config} to simplify the residual.
    We can argue analogously to simplify the other term in the maximum
    appearing above, and this yields
    \begin{align}
        &\norm*{
            \vA_k(\beta\inv \vI + \vA_k\adj \vA_k)\inv \vDelta_k\adj \vU_k
            \vU_k\adj
            \begin{bmatrix}
                \vDelta_1 &
                \hdots&
                \vDelta_K
            \end{bmatrix}\vD_k\inv
        }
        \\
        &\quad\ltsim
        \sqrt{K}\beta\sigma^2 \left(1 + C \sqrt{N/d} \right) \left( 1 +
        \sqrt{N/d} \right),
    \end{align}
    where we used the fact that $\veps \leq 1$ and the rest of
    \Cref{ass:parameter_config} implies that $\beta \geq 1$.
    This residual simplifies using \Cref{ass:parameter_config} to
    \begin{equation}
        \norm*{
            \vA_k(\beta\inv \vI + \vA_k\adj \vA_k)\inv \vDelta_k\adj \vU_k
            \vU_k\adj
            \begin{bmatrix}
                \vDelta_1 &
                \hdots&
                \vDelta_K
            \end{bmatrix}\vD_k\inv
        }
        \ltsim
        \sqrt{K}\beta\sigma^2 \left(1 + C \sqrt{N/d} \right).
    \end{equation}

    Next, we examine the last term, which is the other summand arising in
    the definition of $\vXi_k$. We have
    \begin{align}
        &\scalebox{0.85}{\(\begin{bmatrix}
            \vZero & \hdots & \vA_k(\beta\inv \vI + \vA_k\adj \vA_k)\inv &
            \hdots & \vZero
        \end{bmatrix}
        \begin{bmatrix}
            \vZero & \hdots & \vDelta_1\adj \vU_k \vA_k & \hdots & \vZero \\
            \vdots & & \vdots & & \vdots \\
            \vA_k\adj \vU_k\adj \vDelta_1 & \hdots & \vDelta_k\adj \vU_k \vA_k +
            \vA_k\adj \vU_k\adj \vDelta_k & \hdots & \vA_k\adj \vU_k\adj \vDelta_K\\
            \vdots & & \vdots & & \vdots \\
            \vZero & \hdots & \vDelta_K\adj \vU_k \vA_k & \hdots & \vZero \\
        \end{bmatrix}
        \)}
        \\
        &=
        \scalebox{0.95}{\(\vA_k(\beta\inv \vI + \vA_k\adj \vA_k)\inv
        \begin{bmatrix}
            \vA_k\adj \vU_k\adj \vDelta_1
            & \hdots 
            & 
            \left(
            \vDelta_k\adj \vU_k \vA_k + \vA_k\adj\vU_k\adj \vDelta_k
            \right)
            & \hdots 
            &%
            \vA_k\adj \vU_k\adj \vDelta_K
        \end{bmatrix}\)}.
    \end{align}
    Now multiplying on the right by $\vD_k\inv$ gives the term (again
    ignoring the right-multiplication by $\vPi$, which does not affect the
    operator norm)
    \begin{equation}
        \scalebox{0.8}{\(\vA_k(\beta\inv \vI + \vA_k\adj \vA_k)\inv
        \begin{bmatrix}
            \beta
            \vA_k\adj \vU_k\adj \vDelta_1
            & \hdots 
            & 
            \left(
            \vDelta_k\adj \vU_k \vA_k + \vA_k\adj\vU_k\adj \vDelta_k
            \right)
            \left(
            \beta\inv \vI + \vA_k \adj \vA_k
            \right)\inv
            & \hdots 
            &%
            \beta \vA_k\adj \vU_k\adj \vDelta_K
        \end{bmatrix}\)}.
        \label{eq:other-term-of-xik}
    \end{equation}
    We will argue that this term is close to the term
    \begin{equation}
        \vA_k(\beta\inv \vI + \vA_k\adj \vA_k)\inv
        \begin{bmatrix}
            \beta
            \vA_k\adj \vU_k\adj \vDelta_1
            & \hdots 
            & 
            \vZero
            & \hdots 
            &%
            \beta \vA_k\adj \vU_k\adj \vDelta_K
        \end{bmatrix}
        \label{eq:other-term-of-xik-nom}
    \end{equation}
    in operator norm. The argument is similar to the preceding arguments:
    for this, it suffices to bound
    \begin{equation}
        \scalebox{0.9}{\(\norm*{
            \begin{bmatrix}
                \vZero
                & \hdots 
                & 
                \vA_k(\beta\inv \vI + \vA_k\adj \vA_k)\inv
                \left(
                \vDelta_k\adj \vU_k \vA_k + \vA_k\adj\vU_k\adj \vDelta_k
                \right)
                \left(
                \beta\inv \vI + \vA_k \adj \vA_k
                \right)\inv
                & \hdots 
                &%
                \vZero
            \end{bmatrix}
        }\)},
    \end{equation}
    which is the same as controlling the operator norm of the nonzero block.
    Using submultiplicativity and \Cref{lem:blockwise-pinv} along with the
    simplifications we have done above leveraging
    \Cref{ass:parameter_config}, we obtain
    \begin{align}
        &\norm*{
            \vA_k(\beta\inv \vI + \vA_k\adj \vA_k)\inv
            \left(
            \vDelta_k\adj \vU_k \vA_k + \vA_k\adj\vU_k\adj \vDelta_k
            \right)
            \left(
            \beta\inv \vI + \vA_k \adj \vA_k
            \right)\inv
        }
        \\
        &\quad\leq
        \left( 1 + C \sqrt{N/d} \right)
        \norm*{
            \vDelta_k\adj \vU_k \vA_k + \vA_k\adj\vU_k\adj \vDelta_k
        }.
    \end{align}
    Meanwhile, on $E^\star$ we have the operator norm of $\vDelta_k$ and
    $\vA_k$ controlled, using again
    \Cref{eq:blocks-opnorm-controlled-overall}. Applying then 
    the triangle inequality and submultiplicativity, we obtain
    \begin{equation}
        \norm*{
            \vDelta_k\adj \vU_k \vA_k + \vA_k\adj\vU_k\adj \vDelta_k
        }
        \ltsim
        \sigma
        \left(
        1 + \sqrt{N/d}
        \right),
    \end{equation}
    again simplifying with \Cref{ass:parameter_config}.
    This shows that \Cref{eq:other-term-of-xik} is close to
    \Cref{eq:other-term-of-xik-nom}
    with deviations of the order $\ltsim \sigma(1 + \sqrt{N/d})$.

    \paragraph{Aggregating the previous results.}
    Combining our perturbation analysis above, we have established control
    \begin{align}
        &\Bigl\|
        \vM_k
        -
        \Bigl[
            \left(
            \vI - 
            \vA_k(\beta\inv \vI + \vA_k\adj \vA_k)\inv\vA_k\adj
            \right)
            \vU_k\adj
            \begin{bmatrix}
                \beta \vDelta_1
                &\hdots&
                \vZero
                &\hdots&
                \beta \vDelta_K
            \end{bmatrix}
            \\
            &\quad+
            \begin{bmatrix}
                \vZero & \hdots & \vA_k(\beta\inv \vI + \vA_k\adj
                \vA_k)\inv & \hdots & \vZero
            \end{bmatrix}
            \Bigr]\vPi
        \Bigr\|
        \\
        &\ltsim
        \sigma(1 + \sqrt{N/d})
        +
        \sqrt{K} \beta \sigma^2(1 + \sqrt{N/d} ).
    \end{align}
    It is convenient to include one additional stage of simplification here:
    namely, we use \Cref{lem:blockwise-pinv} once more to simplify the
    second term in the nominal value of $\vM_k$ appearing here.
    Namely, we have (arguing as we have above, once again)
    \begin{align}
        &\norm*{
            \begin{bmatrix}
                \vZero & \hdots & \vA_k(\beta\inv \vI + \vA_k\adj
                \vA_k)\inv & \hdots & \vZero
            \end{bmatrix}
            -
            \begin{bmatrix}
                \vZero & \hdots & \frac{1}{1+\beta\inv}\vA_k & \hdots & \vZero
            \end{bmatrix}
        }
        \\
        &\quad=
        \norm*{
            \frac{1}{1+\beta\inv}\vA_k - \vA_k\left(
            \beta\inv \vI + \vA_k\adj \vA_k
            \right)\inv
        }
        \\
        &\quad\ltsim
        \sqrt{N/d},
    \end{align}
    from which it follows
    \begin{align}
        &\Bigl\|
        \vM_k
        -
        \Bigl[
            \left(
            \vI - 
            \vA_k(\beta\inv \vI + \vA_k\adj \vA_k)\inv\vA_k\adj
            \right)
            \vU_k\adj
            \begin{bmatrix}
                \beta \vDelta_1
                &\hdots&
                \vZero
                &\hdots&
                \beta \vDelta_K
            \end{bmatrix}
            \\
            &\quad+
            \begin{bmatrix}
                \vZero & \hdots & \frac{1}{1+\beta\inv}\vA_k & \hdots & \vZero
            \end{bmatrix}
            \Bigr]\vPi
        \Bigr\|
        \\
        &\ltsim
        \sigma(1 + \sqrt{N/d})
        +
        \sqrt{K} \beta \sigma^2(1 + \sqrt{N/d} ) + \sqrt{N/d}.
    \end{align}
    Meanwhile, recall the residual of scale $\ltsim (\sigma \beta)^2$
    arising when we controlled the gradient term around $\vM_k$:
    \begin{equation}
        \norm*{
            \vU_k\adj \vZ_{\base}
            \left(
            \beta\inv \vI + (\vU_k\adj \vZ_{\base})\adj \vU_k\adj
            \vZ_{\base}
            \right)\inv
            - \vM_k
        }
        \leq C (\beta \sigma)^2.
    \end{equation}
    Combining these two bounds with the triangle inequality controls the
    gradient term around its nominal value. Now, we sum these errors over
    $k$ (again with the triangle inequality) to obtain control of the
    aggregate gradient around its nominal value.
    We introduce notation to concisely capture the
    accumulations of the (approximate) orthogonal projections arising in the
    nominal value of the main term: for each $k \in [K]$, define
    \begin{equation}
        \sP_k = 
        \sum_{k' \neq k}
        \vU_{k'}
        \left(
        \vI - 
        \vA_{k'}(\beta\inv \vI + \vA_{k'}\adj \vA_{k'})\inv\vA_{k'}\adj
        \right)
        \vU_{k'}\adj,
        \label{eq:proj-defn-1}
    \end{equation}
    and define an overall (approximate) projection operator (which acts on block matrices
    partitioned compatibly with the class sizes $N_k$, as in
    \Cref{eq:signal-model-block-form}) by
    \begin{equation}
        \sP_{U_{[K]}}
        =
        \begin{bmatrix}
            \sP_1 & \hdots & \sP_K   
        \end{bmatrix}.
        \label{eq:proj-defn-2}
    \end{equation}
    Then the above argument implies
    \begin{align}
        &\;\norm*{
            \nabla_{\vZ} R^c(\vZ_{\base} \mid \vU_{[K]})
            -
            \sP_{\vU_{[K]}}(\beta \vDelta \vPi\adj) \vPi
            - \frac{1}{1 + \beta\inv}\vX_{\natural}
        }
        \\
        \ltsim&\;
        K\left(
        \sigma^2 \beta^2 + \sigma(1 + \sqrt{N/d}) + \sqrt{K} \beta
        \sigma^2(1 + \sqrt{N/d}) + \sqrt{N/d}
        \right),
    \end{align}
    which is enough to conclude.

\end{proof}

\subsubsection{Key Auxiliary Lemmas}

In this section we state and prove two key concentration inequalities that are used in the proof of the main theorem. They rely on simpler results which will be conveyed in subsequent subsections.

\begin{lemma}\label{lem:blockwise-pinv}
    There exist universal constants \(C, C' > 0\) such that the following holds. Let \(d, p, N, K \in \N\) be such that \Cref{ass:parameter_config} holds. Let \(\vA_{k}\), \(k \in [K]\), be defined as above. Let \(E^{\star}\) be the good event defined in \Cref{eq:good_event}. If \(E^{\star}\) occurs, then for \(k \in [K]\) we have
    \begin{equation}
        \norm*{(\beta^{-1}\vI + \vA_{k}^{\top}\vA_{k})^{-1} - \frac{1}{1 +
        \beta^{-1}} \vI} 
        \leq \frac{C\sqrt{N/d}}{(1 + \beta^{-1})}.
        \label{eq:pseudoinverse-ipart-small}
    \end{equation}
    and in addition
    \begin{equation}
        \norm*{\vA_{k}(\beta^{-1}\vI + \vA_{k}^{\top}\vA_{k})^{-1} - \frac{1}{1
        + \beta^{-1}}\vA_{k}} \leq \frac{C'\sqrt{N/d}}{(1 + \beta^{-1})}.
        \label{eq:pseudoinverse-small}
    \end{equation}
\end{lemma}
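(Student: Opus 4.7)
The idea is to reduce both bounds to a single perturbation calculation around the scalar matrix $(1 + \beta^{-1})\vI$, using the fact that on $E^\star$ the Gram matrix $\vA_k^\top \vA_k$ is close to identity. Concretely, set $\vE_k = \vA_k^\top \vA_k - \vI$, so that on $E^\star$ we have $\norm{\vE_k} \leq C_3 \sqrt{N/d}$. Then
\begin{equation}
    \beta^{-1}\vI + \vA_k^\top \vA_k = (1+\beta^{-1})\vI + \vE_k = (1+\beta^{-1})\left(\vI + \tfrac{1}{1+\beta^{-1}}\vE_k\right),
\end{equation}
and since $1 + \beta^{-1} \geq 1$, the inner perturbation $\tfrac{1}{1+\beta^{-1}}\vE_k$ has operator norm at most $C_3 \sqrt{N/d}$. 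By the assumption $6 C_2^2 N \leq d$ in \Cref{ass:parameter_config} (and adjusting the universal constant if necessary), this is bounded by $1/2$, so the Neumann series applies.

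The first step is then to invoke the standard perturbation identity $(\vI + \vF)^{-1} - \vI = -(\vI + \vF)^{-1}\vF$, which yields the operator norm bound $\norm{(\vI + \vF)^{-1} - \vI} \leq \norm{\vF}/(1 - \norm{\vF}) \leq 2\norm{\vF}$ whenever $\norm{\vF} \leq 1/2$. Applying this with $\vF = \vE_k/(1+\beta^{-1})$ gives
\begin{equation}
    \norm*{(\beta^{-1}\vI + \vA_k^\top \vA_k)^{-1} - \tfrac{1}{1+\beta^{-1}}\vI}
    = \tfrac{1}{1+\beta^{-1}} \norm*{(\vI + \tfrac{1}{1+\beta^{-1}}\vE_k)^{-1} - \vI}
    \leq \tfrac{2 C_3 \sqrt{N/d}}{(1+\beta^{-1})^2},
\end{equation}
which is stronger than the claimed \Cref{eq:pseudoinverse-ipart-small} since $(1+\beta^{-1})^{-2} \leq (1+\beta^{-1})^{-1}$.

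For the second bound, I would simply factor out $\vA_k$ on the left:
\begin{equation}
    \vA_k(\beta^{-1}\vI + \vA_k^\top \vA_k)^{-1} - \tfrac{1}{1+\beta^{-1}}\vA_k = \vA_k\bigl[(\beta^{-1}\vI + \vA_k^\top \vA_k)^{-1} - \tfrac{1}{1+\beta^{-1}}\vI\bigr],
\end{equation}
and apply submultiplicativity together with the event-$E^\star$ bound $\norm{\vA_k} \leq 1 + C_2 \sqrt{N/d}$, which is $O(1)$ by \Cref{ass:parameter_config}. This yields \Cref{eq:pseudoinverse-small} with $C' = 2 C_3 (1 + C_2)$ (or similar). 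There is no real obstacle here, the whole argument is a one-shot Neumann perturbation; the only thing to be careful about is that the factor $1/(1+\beta^{-1})$ appears squared after the perturbation, which one must remember to downgrade to a single $1/(1+\beta^{-1})$ in order to match the statement as written.
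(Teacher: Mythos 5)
Your proposal is correct and follows essentially the same route as the paper's proof: the paper also writes $\beta^{-1}\vI + \vA_k^\top\vA_k = (1+\beta^{-1})\vI + \vXi$ with $\vXi = \vA_k^\top\vA_k - \vI$, controls the inverse via the Neumann series (your resolvent identity $(\vI+\vF)^{-1}-\vI = -(\vI+\vF)^{-1}\vF$ is just the summed form of that series), and derives the second bound from the first by factoring out $\vA_k$ and using submultiplicativity together with the bound on $\norm{\vA_k}$ from $E^\star$. The only cosmetic difference is that the paper keeps the denominator $(1+\beta^{-1})(1+\beta^{-1}-C\sqrt{N/d})$ explicit before simplifying, whereas you absorb it immediately into the factor of $2$; both yield the (stronger) $(1+\beta^{-1})^{-2}$ scaling that is then relaxed to match the statement.
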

\begin{proof}
    Since \(E^{\star}\) holds, for all \(k \in [K]\) we have
    \begin{equation}
        \norm{\vA_{k}} \leq 1 + C_{1}\sqrt{N/d}, \qquad \norm{\vA_{k}^{\top}\vA_{k} - \vI} \leq C_{2}\sqrt{N/d}.
    \end{equation}
    By \Cref{ass:parameter_config}, we have \(\norm{\vA_{k}^{\top}\vA_{k} - \vI} < 1\), so \(\vA_{k}^{\top}\vA_{k}\) is well-conditioned. Write
    \begin{equation}
        \vXi \doteq \vA_{k}^{\top}\vA_{k} - \vI,
    \end{equation}
    so that
    \begin{align}
        (\beta^{-1}\vI + \vA_{k}^{\top}\vA_{k})^{-1}
        &= ((1 + \beta^{-1})\vI + \vXi)^{-1} \\
        &= \frac{1}{1 + \beta^{-1}}\left(\vI + \frac{1}{1 + \beta^{-1}}\vXi\right)^{-1} \\ 
        &= \frac{1}{1 + \beta^{-1}}\sum_{j = 0}^{\infty}\left(-\frac{1}{1 + \beta^{-1}}\right)^{j}\vXi^{j} \\
        &= \frac{1}{1 + \beta^{-1}}\vI + \frac{1}{1 + \beta^{-1}}\sum_{j = 1}^{\infty}\left(-\frac{1}{1 + \beta^{-1}}\right)^{j}\vXi^{j}
    \end{align}
    by the Neumann series. This gives us
    \begin{align}
        \norm*{(\beta^{-1}\vI + \vA_{k}^{\top}\vA_{k})^{-1} - \frac{1}{1 + \beta^{-1}}\vI}
        &= \norm*{\frac{1}{1 + \beta^{-1}}\sum_{j = 1}^{\infty}\left(-\frac{1}{1 + \beta^{-1}}\right)^{j}\vXi^{j}} \\
        &\leq \frac{1}{1 + \beta^{-1}}\sum_{j = 1}^{\infty}\left(\frac{1}{1 + \beta^{-1}}\right)^{j}\norm{\vXi}^{j} \\ 
        &\leq \frac{1}{1 + \beta^{-1}}\sum_{j =
        1}^{\infty}\left(\frac{C_{2}\sqrt{N/d}}{1 + \beta^{-1}}\right)^{j} \\
        &= \frac{C_{2}\sqrt{N/d}}{(1 + \beta^{-1})(1 + \beta^{-1} - C_{2}\sqrt{N/d})}.
    \end{align}
    Meanwhile, by \Cref{ass:parameter_config}, it holds
    \begin{align}
        C_2 \sqrt{N/d} \leq \sqrt{1/6},
    \end{align}
    so it follows
    \begin{equation}
        \frac{C_{2}\sqrt{N/d}}{1 + \beta^{-1} - C_{2}\sqrt{N/d}}
        \leq
        2C_2 \sqrt{N/d}.
    \end{equation}
    By the submultiplicativity of the operator norm, we thus have
    \begin{align}
        \norm*{\vA_{k}(\beta^{-1}\vI + \vA_{k}^{\top}\vA_{k})^{-1} - \frac{1}{1 + \beta^{-1}}\vA_{k}}
        &\leq \norm{\vA_{k}}\norm*{(\beta^{-1}\vI + \vA_{k}^{\top}\vA_{k})^{-1} - \frac{1}{1 + \beta^{-1}}\vI} \\
        &\leq \frac{[1 + C_{1}\sqrt{N/d}]C_{2}\sqrt{N/d}}{(1 + \beta^{-1})(1 + \beta^{-1} - C_{2}\sqrt{N/d})} \\ 
        &\leq 2\frac{[1 + C_{1}\sqrt{N/d}]C_{2}\sqrt{N/d}}{1 + \beta^{-1}} \\
        &= 2\frac{C_{2}\sqrt{N/d} + C_{1}C_{2}N/d}{1 + \beta^{-1}}.
    \end{align}
    By \Cref{ass:parameter_config}, we have that there exists some absolute constant \(C_{3} > 0\) with \(C_{3} \cdot N/d \leq \sqrt{N/d}\), which gives 
    \begin{equation}
        \norm*{\vA_{k}(\beta^{-1}\vI + \vA_{k}^{\top}\vA_{k})^{-1} - \frac{1}{1
        + \beta^{-1}}\vA_{k}} \leq 2\frac{(C_{2} + C_{1}C_{2}C_{3}^{-1}) \sqrt{N/d}}{1 + \beta^{-1}},
    \end{equation}
    as desired.
\end{proof}

\begin{lemma}\label{lem:neumann-op-norm}
    There exist universal constants \(C_{1}, C_{2} > 0\) such that the following holds. Let \(d, p, N, K \in \N\) be such that \Cref{ass:parameter_config} holds. Let \(\vA_{k}\), \(k \in [K]\), be defined as above. Let \(\vD_{k}\) be defined as in \Cref{eq:ak-defn}. Let \(\vXi_{k}\) be defined as in \Cref{eq:xik-defn}. Let \(E^{\star}\) be the good event defined in \Cref{eq:good_event}. If \(E^{\star}\) occurs, then for \(k \in [K]\) we have
    \begin{equation}
        \norm{\vXi_{k}\vD_{k}}^{-1} \leq C_{1}\beta [\sigma^{2} + \sigma(C_{2} + \sqrt{N/d})].
    \end{equation}
\end{lemma}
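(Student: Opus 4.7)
The plan is to control $\|\vXi_k \vD_k^{-1}\|$ by submultiplicativity, treating $\vD_k^{-1}$ and $\vXi_k$ separately. First I would bound $\|\vD_k^{-1}\|$ using the block-diagonal structure of $\vD_k$ in \Cref{eq:ak-defn}: all but one diagonal block equal $\beta^{-1}\vI$, which inverts to $\beta\vI$, while the remaining block is $\beta^{-1}\vI + \vA_k^\top \vA_k$, whose inverse is controlled on $E^\star$ by \Cref{eq:pseudoinverse-ipart-small} of \Cref{lem:blockwise-pinv} with norm at most $(1 + C\sqrt{N/d})/(1+\beta^{-1}) \leq 1$ once $N/d$ is small enough by \Cref{ass:parameter_config}. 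Combining via the block structure gives $\|\vD_k^{-1}\| \leq \beta$ (since $\beta \geq 1$ under the working assumptions).

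Next I would decompose $\vXi_k$ into its two summands from \Cref{eq:xik-defn} and bound each on $E^\star$. The ``noise-noise'' term $\vPi\vDelta^\top \vU_k\vU_k^\top \vDelta\vPi^\top$ satisfies, using orthogonality of $\vPi$ and the fact that $\vU_k\vU_k^\top$ is an orthogonal projection, $\|\vPi\vDelta^\top \vU_k\vU_k^\top \vDelta\vPi^\top\| \leq \|\vDelta\|^2 \leq \sigma^2(C+\sqrt{N/d})^2$. The ``cross'' term can be written as $\vM + \vM^\top$, where $\vM$ is the block matrix whose only nonzero block-row is the $k$-th, with entries $\vA_k^\top \vU_k^\top \vDelta_i$ for $i \in [K]$; its operator norm equals $\|\vA_k^\top \vU_k^\top \vDelta \vPi^\top\| \leq \|\vA_k\|\,\|\vDelta\|$, which on $E^\star$ is at most $(1+C_2\sqrt{N/d})\cdot\sigma(C_1 + \sqrt{N/d}) \lesssim \sigma(C + \sqrt{N/d})$ after simplifying with \Cref{ass:parameter_config}. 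Thus $\|\vXi_k\| \lesssim \sigma^2 + \sigma(C + \sqrt{N/d})$ by the triangle inequality.

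Combining these two bounds through submultiplicativity yields
\[
\|\vXi_k \vD_k^{-1}\| \leq \|\vXi_k\|\,\|\vD_k^{-1}\| \leq C_1\,\beta\bigl[\sigma^2 + \sigma(C_2 + \sqrt{N/d})\bigr],
\]
which is the desired bound (absorbing universal constants appropriately). There is no serious obstacle here: the argument is essentially bookkeeping of the deterministic consequences of the high-probability event $E^\star$. The only mild subtlety is ensuring that the block bound for $\|\vD_k^{-1}\|$ does not cost an additional factor $\sqrt{K}$; this is avoided by noting that $\vD_k$ is block diagonal so the inverse decouples and the operator norm is the maximum of the blockwise inverse norms, capped by $\beta$.
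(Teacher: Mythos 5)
Your proposal is correct and follows essentially the same route as the paper's proof: submultiplicativity $\|\vXi_k\vD_k^{-1}\|\leq\|\vXi_k\|\,\|\vD_k^{-1}\|$, the block-diagonal bound $\|\vD_k^{-1}\|\leq\beta$, and a triangle-inequality split of $\vXi_k$ into the noise-noise term (bounded by $\|\vDelta\|^2$) and the symmetric cross term (bounded by $2\|\vDelta\|\|\vA_k\|$), with all quantities controlled on $E^{\star}$. The only small divergence is your bound on the middle block of $\vD_k^{-1}$, where invoking \Cref{lem:blockwise-pinv} and $\beta\geq 1$ is unnecessary: since $\beta^{-1}\vI + \vA_k^{\top}\vA_k \succeq \beta^{-1}\vI$, its inverse has operator norm at most $\beta$ directly, which is how the paper argues.
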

\begin{proof}
    Since we have
    \begin{equation}
        \vD_{k} = \mat{\beta^{-1} \vI & & & & \\ & \ddots & & & \\ & & \beta^{-1}\vI + \vA_{k}^{\top}\vA_{k} & & \\ & & & \ddots & \\ & & & & \beta^{-1} \vI}
    \end{equation}
    it holds that 
    \begin{equation}
        \vD_{k}^{-1} = \mat{\beta \vI & & & & \\ & \ddots & & & \\ & & (\beta^{-1}\vI + \vA_{k}^{\top}\vA_{k})^{-1} & & \\ & & & \ddots & \\ & & & & \beta \vI}.
    \end{equation}
    We will use the straightforward estimate \(\norm{\vXi_{k}\vD_{k}^{-1}} \leq \norm{\vXi_{k}}\norm{\vD_{k}^{-1}}\) and bound the two matrices' operator norms individually. By the previous expression, 
    \begin{equation}
        \norm{\vD_{k}^{-1}} = \max\{\beta, \norm{(\beta^{-1}\vI + \vA_{k}^{\top}\vA_{k})^{-1}}\} \leq \beta,
    \end{equation}
    because \(\vA_{k}^{\top}\vA_{k} \succeq \vZero\), so we need only control the operator norm of \(\vXi_{k}\).
    To this end, note the convenient expression
    \begin{equation}
        \vXi_k
        =
        \vPi \vDelta\adj \vU_k \vU_k \adj \vDelta \vPi\adj
        +
        2 \Sym\left(
            (\vDelta \vPi\adj)\adj
            \begin{bmatrix}
                \vZero & \hdots & \vU_k \vA_k & \hdots & \vZero
            \end{bmatrix}
        \right),
    \end{equation}
    where $\Sym(\spcdot)$ denotes the symmetric part operator.
    By the triangle inequality, the operator norm of $\vXi_k$ is no larger
    than the sum of the operator norms of each term in the previous
    expression.
    The operator norm of the first term is no larger than
    $\norm{\vDelta}^2$, because $\vPi$ is a permutation matrix and $\vU
    \vU_k\adj$ is an orthogonal projection.
    Meanwhile, using that the symmetric part operator is the orthogonal
    projection onto the space of symmetric matrices, it follows
    \begin{equation}
        \scalebox{0.9}{\(\norm*{
            2 \Sym\left(
                (\vDelta \vPi\adj)\adj
                \begin{bmatrix}
                    \vZero & \hdots & \vU_k \vA_k & \hdots & \vZero
                \end{bmatrix}
            \right)
        }
        \leq
        2\norm*{
            (\vDelta \vPi\adj)\adj
            \begin{bmatrix}
                \vZero & \hdots & \vU_k \vA_k & \hdots & \vZero
            \end{bmatrix}
        }\)},
    \end{equation}
    and then we find as above that the RHS is no larger than $2
    \norm{\vDelta} \norm{\vA_k}$. Since the good event \(E^{\star}\) defined in \Cref{eq:good_event} holds by assumption, we have that there are constants \(C_{1}, C_{2} > 0\) such that 
    \begin{align}
        \norm{\vDelta} 
        &\leq \sigma \bp{C_{1} + \sqrt{\frac{N}{d}}} \\ 
        \norm{\vA_{k}}
        &\leq 1 + C_{2}\sqrt{\frac{N}{d}}.
    \end{align}
    By \Cref{ass:parameter_config} we have \(d \geq N\), so that \(\sqrt{N/d} \leq 1\). Therefore we have
    \begin{equation}
        \norm{\vDelta} \leq \sigma\bp{C_{1} + 1} = C_{3}\sigma
    \end{equation}
    for \(C_{3} \doteq C_{1} + 1\) another universal constant. Thus on this good event we have
    \begin{equation}
        2\norm{\vDelta}\norm{\vA_{k}} \leq C_{3}\sigma\bp{1 + C_{2}\sqrt{N/d}}.
    \end{equation}
    Therefore, we have
    \begin{align}
        \norm{\vXi_{k}}
        &\leq \norm{\vDelta}^{2} + 2\norm{\vDelta}\norm{\vA_{k}} \\
        &\leq C_{3}^{2}\sigma^{2} + C_{3}\sigma(1 + C_{2}\sqrt{N/d}) \\
        &\leq C_{4}[\sigma^{2} + \sigma(1 + C_{2}\sqrt{N/d})]
    \end{align}
    where \(C_{4} = \max\{C_{3}, C_{3}^{2}\}\) is another universal constant. Thus on \(E^{\star}\) we have
    \begin{equation}
        \norm{\vXi_{k}\vD_{k}^{-1}} \leq C_{4}\beta[\sigma^{2} + \sigma(1 + C_{2}\sqrt{N/d})] \leq C_{5}\beta [\sigma^{2} + \sigma(1 + \sqrt{N/d})]
    \end{equation}
    for \(C_{5} > 0\) another obvious universal constant.
\end{proof}

\subsubsection{Concentration Inequalities for Our Setting}

In this section we prove some simple concentration inequalities that are adapted to the problem setting. These results are used to prove the key lemmata above, and indeed are also invoked in the main theorem. They follow from even simpler concentration inequalities that are abstracted away from the problem setting, which we discuss in the following subsections.

\begin{proposition}\label{prop:delta_concentration_opnorm}
    There are universal constants \(C_{1}, C_{2}, C_{3} > 0\) such that the following holds. Let \(d, N \in \N\) be such that \Cref{ass:parameter_config} holds. Let \(\vDelta \in \R^{d \times N}\) be defined as above. Then
    \begin{equation}
        \Pr*{\norm*{\vDelta} > \sigma\bp{C_{1} + \sqrt{\frac{N}{d}}}} \leq C_{2}e^{-C_{3}d}.
    \end{equation}
\end{proposition}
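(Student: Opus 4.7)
The plan is to reduce the claim to a standard concentration inequality for the operator norm of a rectangular matrix with i.i.d.\ standard Gaussian entries, then rescale. By the signal model \Cref{eq:signal-model-vector-form}, the columns of $\vDelta$ are i.i.d.\ draws from $\sN(\vZero, (\sigma^2/d)\vI)$, so we may write $\vDelta = (\sigma/\sqrt{d})\vG$ where $\vG \in \R^{d \times N}$ has i.i.d.\ $\sN(0,1)$ entries. Hence $\norm{\vDelta} = (\sigma/\sqrt{d})\norm{\vG}$ and the problem reduces to controlling $\norm{\vG}$.

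Next, I would invoke the standard Gaussian matrix deviation inequality (which is almost certainly recorded as one of the forthcoming abstract propositions in the same appendix subsection): there exist universal constants $c_1, c_2 > 0$ such that for every $t \geq 0$,
\begin{equation}
    \Pr\bp{\norm{\vG} \geq \sqrt{d} + \sqrt{N} + t} \leq c_1 e^{-c_2 t^2}.
\end{equation}
This follows from Slepian/Gordon comparison (to bound $\E\norm{\vG} \leq \sqrt{d} + \sqrt{N}$) combined with the Gaussian concentration inequality applied to the $1$-Lipschitz map $\vG \mapsto \norm{\vG}$; see, e.g., Theorem 7.3.1 of Vershynin's \emph{High-Dimensional Probability}.

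Now I would specialize $t = C\sqrt{d}$ for a universal constant $C > 0$ (to be absorbed into $C_1$), obtaining
\begin{equation}
    \Pr\bp{\norm{\vG} \geq (1 + C)\sqrt{d} + \sqrt{N}} \leq c_1 e^{-c_2 C^2 d}.
\end{equation}
Scaling both sides of the event by $\sigma/\sqrt{d}$ yields
\begin{equation}
    \Pr\bp{\norm{\vDelta} \geq \sigma\bp{(1 + C) + \sqrt{N/d}}} \leq c_1 e^{-c_2 C^2 d},
\end{equation}
which is exactly the desired bound with $C_1 = 1 + C$, $C_2 = c_1$, and $C_3 = c_2 C^2$.

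There is no real obstacle here: the argument is essentially a one-line invocation of a textbook fact followed by rescaling. The only minor bookkeeping point is to make sure the constants are chosen so that the $\sqrt{d}$ term dominates over the $\sqrt{N}$ term on the event's RHS (which is automatic since $d \geq N$ under \Cref{ass:parameter_config}), so the stated form $\sigma(C_1 + \sqrt{N/d})$ with a dimension-free additive constant $C_1$ is valid.
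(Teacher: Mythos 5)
Your proposal is correct and follows essentially the same route as the paper: the paper's proof also reduces to a standard operator-norm concentration bound for the Gaussian matrix (via its generic \Cref{prop:gaussian_concentration_opnorm}, itself a consequence of the sub-gaussian singular-value bound in Vershynin), and likewise chooses the deviation level proportional to $\sqrt{d}$ so that the tail becomes $e^{-cd}$ and the threshold takes the form $\sigma(C_{1} + \sqrt{N/d})$. The only difference is cosmetic --- you invoke the Gaussian-specific two-sided bound (Theorem 7.3.1) directly rather than the paper's intermediate proposition --- and your closing worry about the $\sqrt{d}$ term dominating $\sqrt{N}$ is unnecessary, since the stated bound retains both terms.
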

\begin{proof}
    We use \Cref{prop:gaussian_concentration_opnorm} with the parameters \(q = d\), \(n = N\), and \(x = \sigma/\sqrt{d}\), which obtains
    \begin{equation}
        \Pr{\norm{\vDelta} > s} \leq C_{1}\exp\rp{-d\bc{\frac{s\sqrt{d}/\sigma - \sqrt{N}}{C_{2}\sqrt{d}} - 1}^{2}}, \qquad \forall s > \frac{\sigma}{\sqrt{d}}(\sqrt{N} + C_{2}\sqrt{d})
    \end{equation}
    Notice that we have
    \begin{equation}
        \frac{s\sqrt{d}/\sigma - \sqrt{N}}{C_{2}\sqrt{d}} - 1 = \frac{1}{C_{2}}\bp{\frac{s}{\sigma} - \sqrt{\frac{N}{d}}} - 1, \qquad \frac{\sigma}{\sqrt{d}}(\sqrt{N} + C_{2}\sqrt{d}) = \sigma \bp{\sqrt{\frac{N}{d}} + C_{2}}.
    \end{equation}
    To make the squared term equal to \(1\), we pick
    \begin{equation}
        s = \sigma \bp{\sqrt{\frac{N}{d}} + 2C_{2}},
    \end{equation}
    which gives
    \begin{equation}
        \Pr*{\norm{\vDelta} > \sigma\bp{2C_{2} + \sqrt{\frac{N}{d}}}} \leq C_{2}e^{-d}.
    \end{equation}
\end{proof}

\begin{proposition}\label{prop:xk_concentration_opnorm}
    There are universal constants \(C_{1}, C_{2}, C_{3}, C_{4} > 0\) such that the following holds. Let \(p, N, K \in \N\) be such that \Cref{ass:parameter_config} holds. Let \(\vA_{k}\), \(k \in [K]\), be defined as above. Then
    \begin{equation}
        \Pr*{\norm{\vA_{k}} > 1 + C_{1}\sqrt{\frac{N}{d}}} \leq C_{2}\exp\rp{-C_{3}\frac{N}{K}} + \frac{C_{4}}{N^{2}}
    \end{equation}
\end{proposition}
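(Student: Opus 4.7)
The plan is to reduce the claim to a standard Gaussian matrix operator-norm concentration bound after controlling the random column count $K_k$. Recall that conditional on the assignments $(s_i)_{i \in [N]}$, the matrix $\vA_k \in \bbR^{p \times K_k}$ has i.i.d.\ $\sN(0, 1/p)$ entries; meanwhile $K_k = \sum_{i=1}^N \Ind{s_i = k}$ is a $\Binomial(N, 1/K)$ random variable, so its mean is $N/K$.

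First, I would invoke the binomial concentration result (\Cref{prop:binom_concentration}) to bound
\begin{equation}
    \Pr*{K_k > \tfrac{N}{K} + C_1\sqrt{N\log N}} \leq \tfrac{C}{N^2}.
\end{equation}
By \Cref{ass:parameter_config}, $C_1\sqrt{N\log N} \leq \tfrac{1}{2}\cdot \tfrac{N}{K}$, and $p \geq N/K$, so on the complementary (good) event $K_k \leq \tfrac{3N}{2K} \leq \tfrac{3p}{2}$, and in particular
\begin{equation}
    \sqrt{K_k / p} \leq \sqrt{\tfrac{3N}{2Kp}} = \sqrt{3/2}\cdot \sqrt{N/d},
\end{equation}
using $d = Kp$.

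Second, I would condition on $K_k$ taking any fixed value $m$ with $m \leq \tfrac{3N}{2K}$ and apply the Gaussian matrix deviation bound \Cref{prop:gaussian_concentration_opnorm} to the $p \times m$ matrix $\vA_k$ with entry variance $1/p$. This yields a bound of the form
\begin{equation}
    \Pr*{\norm{\vA_k} > 1 + \sqrt{m/p} + t/\sqrt{p}\,\big|\, K_k = m} \leq C_2 e^{-C_3 t^2},
\end{equation}
for $t$ above some universal threshold. Choosing $t = c\sqrt{N/K}$ (for a small absolute constant $c$) makes $t/\sqrt{p} \leq c\sqrt{N/d}$ and the failure probability at most $C_2 e^{-C_3' N/K}$. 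Combining with the previous display, on the intersection of the two good events,
\begin{equation}
    \norm{\vA_k} \leq 1 + (\sqrt{3/2} + c)\sqrt{N/d} = 1 + C_1 \sqrt{N/d}.
\end{equation}

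Third, a union bound over the two failure events — the binomial tail ($\leq C/N^2$) and the Gaussian operator-norm tail integrated against the conditional distribution of $K_k$ ($\leq C_2 e^{-C_3' N/K}$) — yields the stated bound. The main subtlety is purely bookkeeping: one must verify that the inequality $C_1 \sqrt{N \log N} \leq \tfrac{1}{2}N/K$ from \Cref{ass:parameter_config} is exactly what is needed so that the typical value $K_k \sim N/K$ is no larger than (a constant multiple of) $p$, so that $\sqrt{K_k/p}$ is of order $\sqrt{N/d}$ rather than something worse. Once this is in hand, the calculation is entirely routine — no independent structural argument beyond the two cited concentration propositions is required.
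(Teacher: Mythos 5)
Your proposal is correct and follows essentially the same route as the paper's proof: condition on the column count \(K_k\), use the binomial concentration result to confine \(K_k\) to a window of width \(O(\sqrt{N\log N})\) around \(N/K\) (at additive cost \(O(1/N^2)\)), apply the Gaussian operator-norm bound conditionally with a deviation choice that yields an \(e^{-cN/K}\) tail, and combine via a union bound, using \(Kp = d\) and \(C_1\sqrt{N\log N} \le \tfrac{1}{2}N/K\) to convert \(\sqrt{K_k/p}\) into \(O(\sqrt{N/d})\). The only differences are cosmetic (your one-sided window and explicit choice \(t = c\sqrt{N/K}\) versus the paper's two-sided window and choice \(s = 1 + 2C_4\sqrt{N_{\max}/p}\)), and the constants absorb any discrepancy.
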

\begin{proof}
    By \Cref{prop:binom_concentration,prop:gaussian_concentration_opnorm} with parameters \(n = n\), \(k = K\), \(q = p\), and \(x = 1/\sqrt{p}\), if we define 
    \begin{equation}
        N_{\min} \doteq \floor*{\frac{N}{K} - C_{1}\sqrt{N\log N}}, \qquad
        N_{\max} \doteq \ceil*{\frac{N}{K} + C_{1}\sqrt{N \log N}}
    \end{equation}
    then we have
    \begin{align}
        \Pr{\norm{\vA_{k}} > s}
        &\leq \sum_{n = N_{\min}}^{N_{\max}}\Pr{\norm{\vA_{k}} > s \mid K_{k} = n}\Pr{K_{k} = n} + \frac{C_{2}}{N^{2}} \\
        &\leq \sum_{n = N_{\min}}^{N_{\max}}C_{3}\exp\rp{-n\bc{\frac{s\sqrt{p} - \sqrt{p}}{C_{4}\sqrt{4}} - 1}^{2}}\Pr{K_{k} = n} + \frac{C_{2}}{N^{2}},
    \end{align}
    for all \(s\) obeying
    \begin{align}
        s 
        &\geq \frac{1}{\sqrt{p}}\bp{\sqrt{p} + C_{4}\sqrt{N_{\max}}} \\ 
        &= 1 + C_{4}\sqrt{\frac{N_{\max}}{p}}.
    \end{align}
    Thus we have that the concentration holds for all \(s\) obeying
    \begin{equation}
        s \geq 1 + C_{4}\sqrt{\frac{N_{\max}}{p}}.
    \end{equation}
    In order to cancel out the most interior terms, we choose
    \begin{equation}
        s = 1 + 2C_{4}\sqrt{\frac{N_{\max}}{p}}.
    \end{equation}
    This choice obtains
    \begin{align}
        \Pr{\norm{\vA_{k}} > s}
        &\leq \sum_{n = N_{\min}}^{N_{\max}}C_{3}\exp\rp{-n\bc{\frac{s\sqrt{p} - \sqrt{p}}{C_{4}\sqrt{n}} - 1}^{2}}\Pr{K_{k} = n} + \frac{C_{2}}{N^{2}} \\
        &= \sum_{n = N_{\min}}^{N_{\max}}C_{3}\exp\rp{-n\underbrace{\bc{2\underbrace{\sqrt{\frac{N_{\max}}{n}}}_{\geq 1} - 1}^{2}}_{\geq 1}}\Pr{K_{k} = n} + \frac{C_{2}}{N^{2}} \\
        &\leq \sum_{n = N_{\min}}^{N_{\max}}C_{3}\exp\rp{-n}\Pr{K_{k} = n} + \frac{C_{2}}{N^{2}} \\
        &\leq \sum_{n = N_{\min}}^{N_{\max}}C_{3}\exp\rp{-N_{\min}}\Pr{K_{k} = n} + \frac{C_{2}}{N^{2}} \\
        &\leq C_{3}\exp(-N_{\min}) + \frac{C_{2}}{N^{2}} \\ 
        &= C_{3}\exp\rp{-\frac{N}{K} + C_{1}\sqrt{N \log N}} + \frac{C_{2}}{N^{2}} \\
        &\leq C_{3}\exp\rp{-\frac{N}{K} + \frac{1}{2}\sqrt{N \log N}} + \frac{C_{2}}{N^{2}} \\ 
        &\leq C_{3}\exp\rp{-\frac{1}{2}\cdot \frac{N}{K}} + \frac{C_{2}}{N^{2}}.
    \end{align}
    To obtain the conclusion of the theorem, note that any \(s\) such that
    \begin{equation}
        s \geq 1 + 2C_{4}\sqrt{\frac{N_{\max}}{p}} = 1 + 2C_{4}\sqrt{\frac{N/K}{p} + C_{1}\frac{\sqrt{N \log N}}{p}} = 1 + 2C_{4}\sqrt{\frac{N}{d} + C_{1}\frac{\sqrt{N \log N}}{p}}
    \end{equation}
    enjoys the same high-probability bound. By \Cref{ass:parameter_config}, we have
    \begin{align}
        &\;1 + 2C_{4}\sqrt{\frac{N}{d} + C_{1}\frac{\sqrt{N \log N}}{p}} \leq 1 +
          2C_{4}\sqrt{\frac{N}{d} + \frac{1}{2} \cdot \frac{N/K}{p}}\\
          = &\;1 +
          2C_{4}\sqrt{\frac{N}{d} + \frac{1}{2} \cdot \frac{N}{d}} = 1 + 2C_{4}\sqrt{\frac{3}{2}}\cdot \sqrt{\frac{N}{d}}
    \end{align}
    whence the ultimate conclusion is obtained by combining constants.
\end{proof}

\begin{proposition}\label{prop:xktxk_concentration_opnorm}
    There are universal constants \(C_{1}, C_{2}, C_{3}, C_{4} > 0\) such that the following holds. Let \(p, N, K \in \N\) be such that \Cref{ass:parameter_config} holds. Let \(\vA_{k}\), \(k \in [K]\), be defined as above. Then
    \begin{equation}
        \Pr*{\norm{\vA_{k}^{\top}\vA_{k} - \vI} > C_{1}\sqrt{\frac{N}{d}}} \leq C_{2}\exp\rp{-C_{3}\frac{N}{K}} + \frac{C_{4}}{N^{2}}.
    \end{equation}
\end{proposition}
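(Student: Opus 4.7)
The plan is to follow the proof template of Proposition \ref{prop:xk_concentration_opnorm} essentially verbatim, since this proposition is the direct analog for the Gram matrix $\vA_k^\top \vA_k$ rather than for $\vA_k$ itself. First, I would condition on the block size $K_k$ and invoke Proposition \ref{prop:binom_concentration} to restrict attention to the event $\{K_k \in [N_{\min}, N_{\max}]\}$, where $N_{\min} = \lfloor N/K - C_1 \sqrt{N \log N} \rfloor$ and $N_{\max} = \lceil N/K + C_1 \sqrt{N \log N} \rceil$ are as in the proof of Proposition \ref{prop:xk_concentration_opnorm}. This event has probability at least $1 - C/N^2$ and contributes the $C_4/N^2$ term in the stated bound.

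Conditional on $K_k = n$ with $n \in [N_{\min}, N_{\max}]$, the matrix $\vA_k$ has columns drawn i.i.d.\ from $\mathcal{N}(\vZero, p^{-1}\vI_p)$, so $\vA_k^\top \vA_k$ is exactly a normalized Gaussian sample Gram matrix with identity population covariance. I would apply Proposition \ref{prop:gaussian_covariance_concentration_opnorm} (the Gaussian sample-covariance concentration inequality whose constant enters Assumption \ref{ass:parameter_config}), which yields a conditional tail bound of the form
\begin{equation*}
\Pr\!\left(\norm{\vA_k^\top \vA_k - \vI} > t \,\big|\, K_k = n\right) \;\leq\; C' \exp\!\left(-c' \, p \, \min\{t, t^2\}\right)
\end{equation*}
valid whenever $t$ exceeds a noise-floor threshold of order $\sqrt{n/p}$.

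To obtain the claimed deviation scale, I would pick $t = C_1 \sqrt{N/d}$ with $C_1$ large enough that this threshold dominates $C' \sqrt{n/p}$ uniformly over $n \leq N_{\max}$. The key arithmetic is: Assumption \ref{ass:parameter_config} together with the bound $C_1 \sqrt{N \log N} \leq \tfrac12 N/K$ implies $N_{\max} \leq \tfrac{3}{2} N/K$, and since $p = d/K$ this gives $n/p \leq \tfrac{3}{2} N/d$, so a constant multiple of $\sqrt{N/d}$ indeed dominates the noise floor. Since $N/d \leq 1$ (again by Assumption \ref{ass:parameter_config}) we have $t \leq 1$, placing us in the sub-Gaussian regime $\min\{t,t^2\} = t^2$, whence the exponent becomes $c' \, p \, t^2 \sim (d/K)(N/d) = N/K$. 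Summing via the law of total probability over $n \in [N_{\min}, N_{\max}]$ and combining with the excluded-event probability yields the bound $C_2 \exp(-C_3 N/K) + C_4/N^2$ as stated.

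The main obstacle is purely bookkeeping rather than conceptual: one must track absolute constants carefully enough to ensure (i) that $C_1$ is taken at least as large as the floor constant from Proposition \ref{prop:gaussian_covariance_concentration_opnorm}, and (ii) that the floor of the exponent $-n$ over $n \geq N_{\min}$ produces the clean rate $-C_3 N/K$ (the lower-order slack of $\sqrt{N \log N}$ is absorbed exactly as in the closing lines of the proof of Proposition \ref{prop:xk_concentration_opnorm}). No new probabilistic ingredients are required beyond those already invoked earlier.
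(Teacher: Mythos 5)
Your proposal is correct and follows essentially the same route as the paper's proof: condition on the block size via Proposition \ref{prop:binom_concentration}, apply Proposition \ref{prop:gaussian_covariance_concentration_opnorm} with $q=p$, $x=1/\sqrt{p}$ conditional on $K_k = n$, choose the deviation scale $\sim \sqrt{N_{\max}/p} \sim \sqrt{N/d}$, verify via \Cref{ass:parameter_config} that one lands in the first (quadratic-exponent) case, and sum over $n \in [N_{\min}, N_{\max}]$ to get the $e^{-cN/K} + C/N^2$ bound. The only cosmetic difference is that you phrase the regime check as $t \le 1$ while the paper checks $s \le C_4^2$ using the condition $6C_2^2 N \le d$ from \Cref{ass:parameter_config}; both are the same constant-tracking step you already flag as the main bookkeeping burden.
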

\begin{proof}
    By \Cref{prop:binom_concentration,prop:gaussian_covariance_concentration_opnorm} with parameters \(n = n\), \(k = K\), \(q = p\), and \(x = 1/\sqrt{p}\), if we define 
    \begin{equation}
        N_{\min} \doteq \floor*{\frac{N}{K} - C_{1}\sqrt{N\log N}}, \qquad 
        N_{\max} \doteq \ceil*{\frac{N}{K} + C_{1}\sqrt{N \log N}}
    \end{equation}
    then we have
    \begin{align}
        &\Pr{\norm{\vA_{k}^{\top}\vA_{k} - \vI} > s} \\
        &\leq \sum_{n = N_{\min}}^{N_{\max}}\Pr{\norm{\vA_{k}^{\top}\vA_{k} - \vI} > s \mid K_{k} = n}\Pr{K_{k} = n} + \frac{C_{2}}{N^{2}} \\
        &\leq \frac{C_{2}}{N^{2}} + \sum_{n = N_{\min}}^{N_{\max}}\Pr{K_{k} = n}\cdot \scalebox{0.8}{\(\casework{C_{3}\exp\rp{-n\bc{\frac{1}{C_{4}^{2}C_{5}\sqrt{n/p}}s - 1}^{2}}, & \text{if}\ C_{4}^{2}C_{5}\sqrt{n/p} \leq s \leq C_{4}^{2} \\ C_{3}\exp\rp{-n\bc{\frac{1}{C_{4}C_{5}\sqrt{n/p}}\sqrt{s} - 1}^{2}}, & \text{if}\ s \geq C_{4}^{2}.}\)}
    \end{align}
    In order to cancel the most terms, we choose
    \begin{equation}
        s = 2C_{4}^{2}C_{5}\sqrt{\frac{N_{\max}}{p}}.
    \end{equation}
    In order to assure ourselves that this choice still has \(s \leq C_{4}^{2}\) (so that we can use the first case for all \(n\)), we have
    \begin{align}
        s 
        &= 2C_{4}^{2}C_{5}\sqrt{\frac{N_{\max}}{p}} \\
        &= 2C_{4}^{2}C_{5}\sqrt{\frac{N/K + C_{1}\sqrt{N \log N}}{p}} \\ 
        &= 2C_{4}^{2}C_{5}\sqrt{\frac{N/K + \frac{1}{2}N/K}{p}} \\ 
        &= 2\sqrt{\frac{3}{2}}C_{4}^{2}C_{5}\cdot \sqrt{\frac{N/K}{p}} \\
        &= \sqrt{6}C_{4}^{2}C_{5}\cdot \sqrt{\frac{N}{d}} \\ 
        &\leq C_{4}^{2} \quad \text{when} \quad \sqrt{6}C_{5}\sqrt{\frac{N}{d}} \leq 1.
    \end{align}
    Of course, this condition is assured by \Cref{ass:parameter_config}. Now that we have this, we know \(s\) falls in the first, and so we have
    \begin{align}
        &\Pr*{\norm{\vA_{k}^{\top}\vA_{k} - \vI} > C_{1}\sqrt{\frac{N}{d}}} \\
        &\leq \frac{C_{2}}{N^{2}} + \sum_{n = N_{\min}}^{N_{\max}}\Pr{K_{k} = n}\cdot \scalebox{0.8}{\(\casework{C_{3}\exp\rp{-n\bc{\frac{1}{C_{4}^{2}C_{5}\sqrt{n/p}}s - 1}^{2}}, & \text{if}\ C_{4}^{2}C_{5}\sqrt{n/p} \leq s \leq C_{4}^{2} \\ C_{3}\exp\rp{-n\bc{\frac{1}{C_{4}C_{5}\sqrt{n/p}}\sqrt{s} - 1}^{2}}, & \text{if}\ s \geq C_{4}^{2}}\)} \\
        &\leq \frac{C_{2}}{N^{2}} + \sum_{n = N_{\min}}^{N_{\max}}C_{3}\exp\rp{-n\bc{\frac{2C_{4}^{2}C_{5}\sqrt{N_{\max}/p}}{C_{4}^{2}C_{5}\sqrt{n/p}} - 1}^{2}}\Pr{K_{k} = n} \\ 
        &= \frac{C_{2}}{N^{2}} + \sum_{n = N_{\min}}^{N_{\max}}C_{3}\exp\rp{-n\bc{2\sqrt{\frac{N_{\max}}{n}} - 1}^{2}}\Pr{K_{k} = n} \\ 
        &\leq C_{3}\exp\rp{-\frac{1}{2} \cdot \frac{N}{K}} + \frac{C_{2}}{N^{2}}
    \end{align}
    where the last inequality follows from the exact same argument as in \Cref{prop:xk_concentration_opnorm}.
\end{proof}

\subsubsection{Generic Concentration Inequalities}

In this subsection we prove the base-level concentration inequalities used throughout the proofs in this paper.

\paragraph{Binomial concentration.}

\begin{proposition}\label{prop:binom_concentration}
    There exist universal constants \(C_{1}, C_{2} > 0\) such that the following holds. Let \(n, k \in \Z\). For each \(i \in [k]\), let \(B_{i} \sim \dBin(n, 1/k)\), such that the \(B_{i}\) are identically (marginally) distributed but not necessarily independent binomial random variables. Let \(E\) be an event. Then for any \(i \in [k]\), we have
    \begin{equation}
        \Pr{E} \leq \sum_{b = \floor*{n/k - C_{1}\sqrt{n \log n}}}^{\ceil*{n/k + C_{1}\sqrt{n \log n}}}\Pr{E \given B_{i} = b}\Pr{B_{i} = b} + \frac{C_{2}}{n^{2}}.
    \end{equation}
\end{proposition}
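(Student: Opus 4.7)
The plan is to reduce the statement to a standard concentration inequality for the binomial distribution via the law of total probability. Specifically, for a fixed index \(i \in [k]\) and the event \(E\), the law of total probability applied to the random variable \(B_i\) gives
\begin{equation*}
    \Pr{E} = \sum_{b=0}^{n} \Pr{E \given B_{i} = b}\Pr{B_{i} = b}.
\end{equation*}
Splitting this sum at \(b \in [\,\lfloor n/k - C_{1}\sqrt{n\log n}\rfloor,\, \lceil n/k + C_{1}\sqrt{n\log n}\rceil\,]\) versus its complement, and bounding \(\Pr{E \given B_{i} = b} \leq 1\) on the complement, reduces the claim to the tail bound
\begin{equation*}
    \Pr*{\abs*{B_{i} - \tfrac{n}{k}} > C_{1}\sqrt{n\log n}} \;\leq\; \frac{C_{2}}{n^{2}}.
\end{equation*}

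Since the statement only concerns the marginal distribution of a single \(B_{i}\), the (possible) dependence among the \(B_{i}\)'s is immaterial; we only use that \(B_{i} \sim \dBin(n, 1/k)\) marginally. The tail bound is then a direct consequence of Hoeffding's inequality applied to a binomial random variable: writing \(B_{i}\) as a sum of \(n\) i.i.d.\ Bernoulli\((1/k)\) indicators, one obtains
\begin{equation*}
    \Pr*{\abs*{B_{i} - \tfrac{n}{k}} > t} \;\leq\; 2\exp\!\rp{-\tfrac{2 t^{2}}{n}}, \qquad \forall t > 0.
\end{equation*}
Setting \(t = C_{1}\sqrt{n\log n}\) yields an upper bound of \(2 n^{-2C_{1}^{2}}\), so any choice \(C_{1} \geq 1\) and \(C_{2} \geq 2\) suffices to give the claimed \(C_{2}/n^{2}\) bound.

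There is no substantial obstacle here: the proof is essentially a one-line application of Hoeffding's inequality combined with a standard decomposition. The only minor bookkeeping is to ensure that the endpoints of the inner summation range (with the floor and ceiling) indeed include all integer values of \(b\) within \(C_{1}\sqrt{n\log n}\) of the mean \(n/k\), so that the complement is entirely contained in the tail event \(\{\abs{B_{i} - n/k} > C_{1}\sqrt{n\log n}\}\) --- this is immediate from the definition of floor and ceiling. Finally, one chooses the universal constants \(C_{1}, C_{2}\) large enough to absorb the above constants; for concreteness, \(C_{1} = 1\) and \(C_{2} = 2\) work.
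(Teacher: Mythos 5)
Your proposal is correct and follows essentially the same route as the paper: decompose \(\Pr{E}\) via the law of total probability over \(B_i\), bound the conditional probabilities by \(1\) outside the central range, and control the remaining tail mass with Hoeffding's inequality for the marginal \(\dBin(n,1/k)\) distribution. The only cosmetic difference is that you bound the total tail probability directly (getting \(2n^{-2C_1^2}\)), whereas the paper bounds each individual point mass by the \(n^{-3}\)-level tail bound and sums over at most \(n+1\) terms to reach the same \(C_2/n^2\) conclusion; both are equivalent in substance.
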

\begin{proof}
    We have
    \begin{equation}
        \Pr{E} = \E{\E{E \mid B_{i}}} = \sum_{b = 0}^{n}\Pr{E \mid B_{i} = b}\Pr{B_{i} = b}.
    \end{equation}
    Each \(B_{i}\) is unconditionally distributed as \(\dBin(n, 1/k)\). By union bound and Hoeffding's inequality \citep[Theorem 2.2.6]{vershynin2018high}, we have
    \begin{equation}
        \Pr*{\abs{B_{i} - n/k} \geq t} \leq 2\exp\rp{-\frac{2t^{2}}{n}}.
    \end{equation}
    Inverting this inequality obtains that there exists some (simple) universal constants \(C_{1}, C_{2} > 0\) such that 
    \begin{equation}
        \Pr*{\abs{B_{i} - n/k} \geq C_{1}\sqrt{n\log n}} \leq \frac{C_{2}}{n^{3}}.
    \end{equation}
    Thus, if we define
    \begin{equation}
        b_{\min} \doteq \floor*{\frac{n}{k} - C_{1}\sqrt{n \log n}}, \qquad b_{\max} \doteq \ceil*{\frac{n}{k} + C_{1}\sqrt{n \log n}},
    \end{equation}
    then we have
    \begin{align}
        \Pr{E}
        &= \sum_{b = 0}^{n}\Pr{E \mid B_{i} = b}\Pr{B_{i} = b} \\ 
        &= \sum_{b = 0}^{b_{\min} - 1}\underbrace{\Pr{E \mid B_{i} = b}}_{\leq 1}\underbrace{\Pr{B_{i} = b}}_{\leq C_{2}/n^{3}} + \sum_{b = b_{\max} + 1}^{n}\underbrace{\Pr{E \mid B_{i} = b_{1}}}_{\leq 1}\underbrace{\Pr{B_{i} = b}}_{\leq C_{2}/n^{3}} \\ 
        &\qquad + \sum_{b = b_{\min}}^{b_{\max}}\Pr{E \mid B_{i} = b}\Pr{B_{i} = b} \\ 
        &\leq \sum_{b = 0}^{b_{\min} - 1}\frac{C_{2}}{n^{3}} + \sum_{b = b_{\max} + 1}^{n}\frac{C_{2}}{n^{3}} + \sum_{b = b_{\min}}^{b_{\max}}\Pr{E \mid B_{i} = b}\Pr{B_{i} = b} \\
        &\leq \sum_{b = 0}^{n}\frac{C_{2}}{n^{3}} + \sum_{b = b_{\min}}^{b_{\max}}\Pr{E \mid B_{i} = b}\Pr{B_{i} = b} \\
        &= \frac{C_{2}}{n^{2}} + \frac{C_{2}}{n^{3}} + \sum_{b = b_{\min}}^{b_{\max}}\Pr{E \mid B_{i} = b}\Pr{B_{i} = b} \\
        &\leq \frac{2C_{2}}{n^{2}} + \sum_{b = b_{\min}}^{b_{\max}}\Pr{E \mid B_{i} = b}\Pr{B_{i} = b}.
    \end{align}
\end{proof}
\begin{remark}
    Notice that a simple adaptation of this argument can turn the additive probability \(C_{3}/n^{2}\) into \(C_{3}^{\prime}/n^{z}\) for any positive integer \(z \in \N\) (where \(C_{3}^{\prime}\) depends on \(z\)). However, trying to replace it with \(C_{3}^{\prime}e^{-C^{\prime}n}\) is more difficult.
\end{remark}

\paragraph{Gaussian concentration.}

\begin{proposition}\label{prop:gaussian_concentration_opnorm}
    There are universal constants \(C_{1}, C_{2}, C_{3} > 0\) such that the following holds. Let \(n, q \in \N\), and let \(\vM \in \R^{q \times n}\) be such that \(M_{ij} \simiid \dNorm(0, x^{2})\). Then 
        \begin{align}
            \Pr{\norm{\vM} > s}  
            &\leq C_{1}\exp\rp{-n\bc{\frac{s/x - \sqrt{q}}{C_{2}\sqrt{n}} - 1}^{2}}, \qquad \forall s > x\bc{\sqrt{q} + C_{2}\sqrt{n}} \\
            \Pr{\norm{\vM} > s}  
            &\leq C_{1}\exp\rp{-q\bc{\frac{s/x - \sqrt{n}}{C_{3}\sqrt{q}} - 1}^{2}}, \qquad \forall s > x\bc{\sqrt{n} + C_{3}\sqrt{q}}.
        \end{align}
\end{proposition}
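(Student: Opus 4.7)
The plan is to reduce the claim to the standard non-asymptotic operator-norm concentration for Gaussian matrices and then perform a small algebraic rearrangement to match the precise ratio form appearing in the statement.

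First, I would rescale. Since the entries of \(\vM/x\) are i.i.d.\ standard normal and \(\norm{\vM} = x\norm{\vM/x}\), it suffices to prove both inequalities for \(x = 1\); the general case is recovered by replacing \(s\) with \(s/x\) throughout. I would then invoke the classical bound (see, e.g., Vershynin 2018, Theorem~4.4.5): there exist absolute constants \(c_{1}, c_{2} > 0\) such that for every \(t \geq 0\),
\begin{equation}
    \Pr{\norm{\vM} > \sqrt{q} + \sqrt{n} + t} \leq c_{1}\exp(-c_{2} t^{2}).
\end{equation}
This bound has two standard derivations, either of which would suffice: (i) Gordon's comparison inequality yields \(\mathbb{E}\norm{\vM} \leq \sqrt{q} + \sqrt{n}\), and applying Gaussian Lipschitz concentration to the \(1\)-Lipschitz map \(\vM \mapsto \norm{\vM}\) on \(\R^{q \times n}\) equipped with the Frobenius norm gives the tail around the mean; or (ii) a direct \(\varepsilon\)-net argument over the two unit spheres combined with a union bound against sub-Gaussian quadratic-form tails.

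Next, I would recast the target bound via the substitution \(u \doteq s - \sqrt{q} - C_{2}\sqrt{n}\), noting that the threshold condition is exactly \(u \geq 0\). Under this change of variables the target exponent simplifies to
\begin{equation}
    -n\bc{\frac{s - \sqrt{q}}{C_{2}\sqrt{n}} - 1}^{2} = -\frac{u^{2}}{C_{2}^{2}}.
\end{equation}
Evaluating the classical bound at \(t = (C_{2} - 1)\sqrt{n} + u\) yields
\begin{equation}
    \Pr{\norm{\vM} > \sqrt{q} + C_{2}\sqrt{n} + u} \leq c_{1}\exp\rp{-c_{2}\bc{(C_{2} - 1)\sqrt{n} + u}^{2}},
\end{equation}
and any choice \(C_{2} \geq \max\{1, 1/\sqrt{c_{2}}\}\) ensures \(c_{2}[(C_{2} - 1)\sqrt{n} + u]^{2} \geq c_{2} u^{2} \geq u^{2}/C_{2}^{2}\), which gives the first inequality with \(C_{1} = c_{1}\). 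The second inequality follows by applying the first to \(\vM^{\top} \in \R^{n \times q}\), since \(\norm{\vM^{\top}} = \norm{\vM}\) and transposition merely interchanges the roles of \(n\) and \(q\); the constant \(C_{3}\) is chosen analogously.

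I do not anticipate any genuine obstacle here; the only point requiring care is the constant calibration in the third step, which is precisely what motivates the slightly non-obvious threshold \(s > x(\sqrt{q} + C_{2}\sqrt{n})\) in the statement rather than the more natural \(s > x(\sqrt{q} + \sqrt{n})\). The extra factor \(C_{2}\) buys enough room to absorb the shift by \((C_{2} - 1)\sqrt{n}\) needed to re-express the standard sub-Gaussian decay in the particular ratio form used by the callers \Cref{prop:delta_concentration_opnorm,prop:xk_concentration_opnorm}.
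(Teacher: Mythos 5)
Your proposal is correct and follows essentially the same route as the paper: reduce by rescaling to a standard nonasymptotic operator-norm tail bound for a Gaussian matrix from Vershynin's book, then change variables so the exponent takes the stated ratio form, and get the second inequality by transposition. The only cosmetic difference is which textbook bound is invoked — the paper uses the sub-Gaussian-rows singular-value bound (Theorem 4.6.1), whose form \(\norm{\vM}/x \leq \sqrt{q} + C(\sqrt{n}+t)\) makes the reparameterization immediate without your constant-absorption step \(C_{2} \geq \max\{1, 1/\sqrt{c_{2}}\}\), while your Gordon-plus-Lipschitz-concentration bound with the subsequent absorption is equally valid.
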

\begin{proof}
    Define \(\ol{\vM} \doteq \frac{1}{x}\vM\), so that \(M_{ij} \simiid \dNorm(0, 1)\). By \citet[Example 2.5.8, Lemma 3.4.2]{vershynin2018high}, we see that each row \(\ol{\vM}_{i}\) has Orlicz norm \(\norm{\ol{\vM}_{i}}_{\psi_{2}} \leq C_{1}\) for some universal constant \(C_{1} > 0\). 

    By \citet[Theorem 4.6.1]{vershynin2018high} we have for some other universal constant \(C_{2} > 0\) that for all \(t > 0\),
    \begin{equation}
        \sqrt{q} - C_{1}^{2}C_{2}(\sqrt{n} + t) \leq \sigma_{\min(n, q)}(\ol{\vM}) \leq \sigma_{1}(\ol{\vM}) \leq \sqrt{q} + C_{1}^{2}C_{2}(\sqrt{n} + t)
    \end{equation}
    with probability at least \(1 - 2e^{-t^{2}}\). Defining \(C_{3} \doteq C_{1}^{2}C_{2}\) and noting that \(\norm{\cdot} = \sigma_{1}(\cdot)\), we have with the same probability that
    \begin{equation}
        \norm{\ol{\vM}} - \sqrt{q} \leq C_{3}\bp{\sqrt{n} + t}.
    \end{equation}
    Simplifying, we obtain
    \begin{align}
        \norm{\ol{\vM}} - \sqrt{q}
        &\leq C_{3}\bp{\sqrt{n} + t} \\
        \frac{1}{x}\norm{\vM} - \sqrt{q} 
        &\leq C_{3}\bp{\sqrt{n} + t} \\
        \norm{\vM} - x\sqrt{q} 
        &\leq C_{3}x\bp{\sqrt{n} + t} \\
        \norm{\vM} 
        &\leq x \bc{\sqrt{q} + C_{3}\bp{\sqrt{n} + t}}.
    \end{align}
    Define \(s > 0\) by 
    \begin{equation}
        s \doteq x\bc{\sqrt{q} + C_{3}\bp{\sqrt{n} + t}} \iff t = \frac{s/x - \sqrt{q}}{C_{3}} - \sqrt{n}.
    \end{equation}
    Note that the range of validity is
    \begin{equation}
        t > 0 \iff s > x\bc{\sqrt{q} + C_{3}\sqrt{n}}.
    \end{equation}
    For \(s\) in this range, we have
    \begin{equation}
        \Pr{\norm{\vM} > s} \leq 2\exp\rp{-\bc{\frac{s/x - \sqrt{q}}{C_{3}} - \sqrt{n}}^{2}} = 2\exp\rp{-n\bc{\frac{s/x - \sqrt{q}}{C_{3}\sqrt{n}} - 1}^{2}}.
    \end{equation}
    The other inequality follows from applying this inequality to \(\vM^{\top}\).
\end{proof}

\begin{proposition}\label{prop:gaussian_covariance_concentration_opnorm}
    There are universal constants \(C_{1}, C_{2}, C_{3} > 0\) such that the following holds. Let \(n, q \in \N\), and let \(\vM \in \R^{q \times n}\) be such that \(M_{ij} \simiid \dNorm(0, x^{2})\). Then 
    \begin{align}
        &\Pr*{\norm*{\vM^{\top}\vM - qx^{2}\vI} > s} \\
        &\leq \casework{C_{1}\exp\rp{-n\bc{\frac{1}{C_{2}^{2}C_{3}\sqrt{nq}x^{2}}s - 1}^{2}}, & \text{if}\ C_{2}^{2}C_{3}\sqrt{nq}x^{2} \leq s \leq C_{2}^{2}qx^{2} \\ C_{1}\exp\rp{-n\bc{\frac{1}{C_{2}C_{3}\sqrt{n}x}\sqrt{s} - 1}^{2}}, & \text{if}\ s \geq C_{2}^{2}qx^{2}.}
    \end{align}
\end{proposition}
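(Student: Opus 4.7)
The plan is to reduce to a standard sample-covariance concentration inequality for a Gaussian matrix with unit-variance entries, then invert the resulting tail bound to extract the two-regime form stated in the proposition. As a first step, normalize by defining \(\overline{\vM} \doteq \vM/x\), so that \(\overline{M}_{ij} \simiid \dNorm(0,1)\) and
\begin{equation}
\norm*{\vM^{\top}\vM - qx^{2}\vI} = x^{2}\,\norm*{\overline{\vM}^{\top}\overline{\vM} - q\vI}.
\end{equation}
The rows of \(\overline{\vM}\) are i.i.d.\ isotropic sub-Gaussian vectors in \(\R^{n}\) with Orlicz norm bounded by an absolute constant \(C_{2} > 0\), as in \Cref{prop:gaussian_concentration_opnorm}.

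Next, I would invoke the standard sample-covariance concentration bound for sub-Gaussian matrices \citep[Theorem 4.7.1]{vershynin2018high}: with probability at least \(1 - 2e^{-t^{2}}\), for every \(t \geq 0\),
\begin{equation}
\norm*{\tfrac{1}{q}\overline{\vM}^{\top}\overline{\vM} - \vI} \leq C_{2}^{2}\max\{\delta, \delta^{2}\}, \qquad \delta \doteq C_{3}\left(\sqrt{\tfrac{n}{q}} + \tfrac{t}{\sqrt{q}}\right),
\end{equation}
for universal constants \(C_{2}, C_{3} > 0\) (the same \(C_{2}\) as above by our choice of labelling). Multiplying through by \(qx^{2}\) yields the combined bound
\begin{equation}
\norm*{\vM^{\top}\vM - qx^{2}\vI} \leq C_{2}^{2}qx^{2}\max\{\delta, \delta^{2}\}.
\end{equation}

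The remaining step is to invert this bound: given a target deviation \(s > 0\), solve for the required \(t\) in terms of \(s\) and read off the probability as \(C_{1}\exp(-t^{2})\). The regime split at \(\delta = 1\) is what creates the two cases in the proposition. Specifically, if \(\delta \leq 1\) (equivalently \(s \leq C_{2}^{2}qx^{2}\)), then \(s = C_{2}^{2}qx^{2}\delta\) gives
\begin{equation}
t = \sqrt{n}\left[\frac{s}{C_{2}^{2}C_{3}\sqrt{nq}\,x^{2}} - 1\right],
\end{equation}
which is nonnegative exactly when \(s \geq C_{2}^{2}C_{3}\sqrt{nq}\,x^{2}\); substituting yields the first branch. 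In the complementary regime \(\delta \geq 1\) (equivalently \(s \geq C_{2}^{2}qx^{2}\)), \(s = C_{2}^{2}qx^{2}\delta^{2}\) gives
\begin{equation}
t = \sqrt{n}\left[\frac{\sqrt{s}}{C_{2}C_{3}\sqrt{n}\,x} - 1\right],
\end{equation}
which is automatically nonnegative in this regime and yields the second branch.

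I do not anticipate a serious obstacle: the argument is a direct application of a textbook result plus algebraic inversion. The only mildly fiddly point is bookkeeping of constants to ensure that the \(C_{2}\) appearing in the Orlicz-norm estimate of a standard Gaussian row is the same \(C_{2}\) appearing in the final bound; this can be handled by folding all absolute constants into a common set and relabelling at the end, matching the notation of the proposition. An absolute constant \(C_{1}\) collects the leading factor \(2\) from the probability bound.
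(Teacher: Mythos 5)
Your proposal is correct and follows essentially the same route as the paper: normalize to unit-variance entries, invoke Vershynin's two-sided covariance concentration bound with the \(\max\{\delta,\delta^{2}\}\) deviation term, rescale by \(qx^{2}\), and invert the bound separately in the regimes \(\delta\le 1\) and \(\delta\ge 1\) to obtain the two branches. The only slight imprecision (shared with the paper's own argument) is the claim that \(t\ge 0\) is automatic in the second regime, which strictly requires \(q\gtrsim n\), but this does not change the substance of the proof.
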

\begin{proof}
    Define \(\ol{\vM} \doteq \frac{1}{x}\vM\), so that \(\ol{M}_{ij} \simiid \dNorm(0, 1)\). By \citet[Example 2.5.8, Lemma 3.4.2]{vershynin2018high}, we see that each row has Orlicz norm \(\norm{\ol{\vM}_{i}}_{\psi_{2}} \leq C_{1}\) for some universal constant \(C_{1} > 0\). 

    By \citet[Eq.~4.22]{vershynin2018high} we have for some other universal constant \(C_{2} > 0\) that for all \(t > 0\),
    \begin{equation}
        \norm*{\frac{1}{q}\ol{\vM}^{\top}\ol{\vM} - \vI} \leq C_{1}^{2}\max\{\delta, \delta^{2}\} \quad \text{where} \quad \delta \doteq C_{2}\frac{\sqrt{n} + t}{\sqrt{q}}.
    \end{equation}
    with probability at least \(1 - 2e^{-t^{2}}\). Simplifying, we obtain
    \begin{align}
        \norm*{\frac{1}{q}\ol{\vM}^{\top}\ol{\vM} - \vI} 
        &\leq C_{1}^{2}\max\{\delta, \delta^{2}\} \\
        \norm*{\ol{\vM}^{\top}\ol{\vM} - q\vI} 
        &\leq C_{1}^{2}q\max\{\delta, \delta^{2}\} \\
        \norm*{(x^{-1}\vM)^{\top}(x^{-1}\vM) - q\vI} 
        &\leq C_{1}^{2}q\max\{\delta, \delta^{2}\} \\
        \norm*{x^{-2}\vM^{\top}\vM - q\vI} 
        &\leq C_{1}^{2}q\max\{\delta, \delta^{2}\} \\
        \norm*{\vM^{\top}\vM - qx^{2}\vI} 
        &\leq C_{1}^{2}qx^{2}\cdot\max\{\delta, \delta^{2}\}.
    \end{align}
    Now from simple algebra and the fact that \(n \geq 1\), we have
    \begin{align}
        \max\bc{\delta, \delta^{2}} = \delta 
        &\iff 0 \leq t \leq C_{2}^{-1}\sqrt{q} - \sqrt{n} \\
        \max\bc{\delta, \delta^{2}} = \delta^{2}
        &\iff t \geq C_{2}^{-1}\sqrt{q} - \sqrt{n}.
    \end{align}
    Now define \(s \geq 0\) by
    \begin{equation}
        s \doteq C_{1}^{2}qx^{2} \cdot \max\{\delta, \delta^{2}\}.
    \end{equation}
    Thus in the first case we have
    \begin{equation}
        s \doteq C_{1}^{2}C_{2}\sqrt{q}x^{2}(\sqrt{n} + t) \iff t = \frac{1}{C_{1}^{2}C_{2}\sqrt{q}x^{2}}s - \sqrt{n},
    \end{equation}
    and in particular the first case holds when
    \begin{equation}
        \max\{\delta, \delta^{2}\} = \delta  \iff 0 \leq t \leq C_{2}^{-1}\sqrt{q} - \sqrt{n} \iff C_{1}^{2}C_{2}\sqrt{nq}x^{2} \leq s \leq C_{1}^{2}qx^{2}.
    \end{equation}
    Meanwhile, in the second case, we have
    \begin{equation}
        s \doteq C_{1}^{2}C_{2}^{2}(\sqrt{n} + t)^{2}x^{2} \iff t = \frac{1}{C_{1}C_{2}x}\sqrt{s} - \sqrt{n}
    \end{equation}
    where we obtain only one solution to the quadratic equation by requiring \(t \geq 0\), and in particular the second case holds when 
    \begin{equation}
        \max\{\delta, \delta^{2}\} = \delta \iff t \geq C_{2}^{-1}\sqrt{q} - \sqrt{n} \iff s \geq C_{1}^{2}qx^{2}.
    \end{equation}
    Thus we have
    \begin{align}
        &\Pr{\norm{\vM^{\top}\vM - qx^{2}\vI} > s} \\
        &\leq \casework{2\exp\rp{-\bc{\frac{1}{C_{1}^{2}C_{2}\sqrt{q}x^{2}}s - \sqrt{n}}^{2}}, & \text{if}\ C_{1}^{2}C_{2}\sqrt{nq}x^{2} \leq s \leq C_{1}^{2}qx^{2} \\ 2\exp\rp{-\bc{\frac{1}{C_{1}C_{2}x}\sqrt{s} - \sqrt{n}}^{2}}, & \text{if}\ s \geq C_{1}^{2}qx^{2}} \\
        &= \casework{2\exp\rp{-n\bc{\frac{1}{C_{1}^{2}C_{2}\sqrt{nq}x^{2}}s - 1}^{2}}, & \text{if}\ C_{1}^{2}C_{2}\sqrt{nq}x^{2} \leq s \leq C_{1}^{2}qx^{2} \\ 2\exp\rp{-n\bc{\frac{1}{C_{1}C_{2}\sqrt{n}x}\sqrt{s} - 1}^{2}}, & \text{if}\ s \geq C_{1}^{2}qx^{2}.}
    \end{align}
\end{proof}

\subsection{Companion to \Cref{sub:crate_mae_layer}}
\label{app:discretization}

In this section, we justify the scaling applied to \(\nabla R^{c}\) in \Cref{sub:crate_mae_layer} and supply the discretization scheme.

First, suppose that \(\vZ_{\natural}^{\ell}\) satisfies \Cref{model:gaussian_tokens}, and \(\vZ_{t} \doteq \vZ_{\natural}^{\ell} + \sigma_{t}\vW\), where \(\vW\) is a standard Gaussian matrix, so that \(\vZ_{t}\) satisfies \Cref{model:gaussian_tokens_noise} with noise level \(\sigma_{t} > 0\). Let \(q_{t}\) be the density of \(\vZ_{t}\). Theoretical analysis from \citep{lu2023mathematical} and empirical analysis from \citep{Song2019-ww} demonstrates that under generic conditions, we have that 
\begin{equation}
    \norm{\nabla q_{t}(\vZ_{t})}_{2} \propto \frac{1}{\sigma_{t}^{2}},
\end{equation}
ignoring all terms in the right-hand side except for those involving \(\sigma_{t}\). On the other hand, from the proof of \Cref{lem:inverse-term}, we obtain that \(-\nabla R^{c}(\vZ_{t})\) has constant (in \(\sigma_{t}\)) magnitude with high probability. Thus, in order to have them be the same magnitude, we need to divide \(-\nabla R^{c}(\vZ_{t})\) by \(\sigma_{t}^{2}\) to have it be a drop-in replacement for the score function, as alluded to in \Cref{sub:unification,sub:crate_mae_layer}.

Second, we wish to explicitly state our discretization scheme given in \Cref{sub:crate_mae_layer}. To wit, we provide a discretization scheme that turns the structured diffusion ODE \Cref{eq:compression_ode_reverse} into its discretized analogue; the other discretization of the structured denoising ODE \Cref{eq:compression_ode} occurs similarly. To begin with, define the shorthand notation 
\begin{equation}
    f(t, \vY(t)) \doteq \nabla R^{c}(\vY(t) \mid \vU_{[K]}(T - t)),
\end{equation}
so that we have
\begin{equation*}
    \odif{\vY(t)} = \frac{1}{2t}f(t, \vY(t))\odif{t}. \tag{\ref{eq:compression_ode}}
\end{equation*}
Fix \(L\), and let \(0 < t_{1} < t_{2} < \cdots < t_{L} = T\), such that \(t_{1}\) is small. (These will be specified shortly in order to supply the discretization scheme.) A suitable first-order discretization is given by
\begin{equation}
    \vY^{\ell + 1} \approx \vY^{\ell} + \frac{t_{\ell + 1} - t_{\ell}}{2t_{\ell}}f(t_{\ell}, \vY^{\ell}).
\end{equation}
Thus it remains to set \(t_{1}, \dots, t_{L}\) such that
\begin{equation}
    \frac{t_{\ell + 1} - t_{\ell}}{2t_{\ell}} = \kappa
\end{equation}
for some constant \(\kappa\), we observe that we must set 
\begin{equation}
    t_{\ell + 1} = (1 + 2\kappa)t_{\ell},
\end{equation}
so that the time grows exponentially in the index. The reverse process time decays exponentially in the index, which matches practical discretization schemes for ordinary diffusion models \citep{Song2019-ww}. Finally, we have \(T = t_{L} = (1 + 2\kappa)^{L}t_{1}\), so that \(t_{1} = \frac{T}{(1 + 2\kappa)^{L}}\).

\newpage
\section{Additional Experiment Details}\label{app:experiments}

\subsection{Experiment Details and Clarifications}\label{app:experiment_clarifications}

In all setups, as is standard practice \citep{he2022masked}, we append a trainable class token \(\vz_{\cls}^{1} \in \R^{d}\) after masking and linear projection, namely, 
\begin{equation}
    f^{\pre}(\vX) = [\vz_{\cls}^{1}, \vW^{\pre}\vX + \vE^{\pos}].
\end{equation}
Everything else goes through with \(Z\) having \(N + 1\) instead of \(N\) tokens, indexing the \(N\) image tokens' intermediate representations as \(\vz_{i}^{\ell}\), etc., and indexing the class token intermediate representation as \(\vz_{\cls}^{\ell}\). At the end, the post-processing map is 
\begin{equation}
    g^{\post}(\vY^{L}) = \vW^{\post}\vY_{1:N}^{L}
\end{equation}
where \(\vY_{1:N}^{L} \in \R^{d \times N}\) is the matrix whose columns are the columns of \(\vY^{L}\) corresponding to image tokens, i.e., the second through last columns of \(\vY^{L}\). Thus, the class token \(\vz_{\cls}^{1}\) (and its representation \(\vz_{\cls}\), i.e., the first column of \(\vZ\)) are neither masked or reconstructed.

For training using masked autoencoding, we follow a modified recipe of \citep{he2022masked}. We mask a fixed percentage \(\mu \in [0, 1]\) of randomly selected tokens in \(\vX\); that is, we randomly set \((1 - \mu)N\) image tokens \(\vx_{i}\) to \(\vZero \in \R^{D}\), obtaining \(\overline{\vX}\). Then \(\overline{\vX}\) is the input to the encoder. Unlike in \citep{he2022masked}, the decoder receives no special treatment, and operates on all token representations. The loss is computed only on the masked image patches. Refer to \citet{he2022masked} for more MAE implementation details.

For fine-tuning using supervised classification, we use the representation \(\vz_{\cls}\) of the so-far-vestigial class token as the feature used for classification. Namely, we obtain the unnormalized log-probabilities for the classes as \(\vu \doteq \vW^{\head}\operatorname{LN}(\vz_{\cls})\), where \(\operatorname{LN}\) is a trainable layer-norm and \(\vW^{\head} \in \R^{C \times d}\) is a trainable weight matrix, where \(C\) is the number of classes. The output \(\vu \in \R^{C}\) is the input to the softmax cross-entropy loss. All model parameters are trainable during fine-tuning, while in linear probing only the weight matrix \(\vW^{\head}\) is trainable (and in fact learned via full-batch logistic regression). 

In all training setups, we average the loss over all samples in the batch.

We pre-train \ours{} on ImageNet-1K \citep{deng2009imagenet}. We employ the AdamW optimizer \citep{loshchilov2017decoupled}. We configure the learning rate as \(3\times 10^{-5}\), weight decay as \(0.1\), and batch size as \(4,096\). 

We fine-tune and linear probe our pre-trained \ours{} on the following target datasets: CIFAR10/CIFAR100~\citep{krizhevsky2009learning}, Oxford Flowers-102~\citep{nilsback2008automated}, Oxford-IIIT-Pets~\citep{parkhi2012cats}. 
For each fine-tuning task, we employ the AdamW optimizer~\citep{loshchilov2017decoupled}. 
We configure the learning rate as \(5 \times 10^{-5}\), weight decay as \(0.01\), and batch size as \(256\). 
For each linear probing task, we use the linear probing functionality in Scikit-Learn~\citep{scikit-learn}. For each evaluation we choose several regularizers \(C \in \{10^{0}, 10^{1}, 10^{2}, 10^{3}, 10^{4}, 10^{5}\}\), train a logistic regression model on features from the whole dataset, and choose the logistic regression model with the best performance. All numbers are reported on the test sets.

For experiments, we use the model configurations reported in \Cref{tab:model_configs}. From this table, there are two unspecified hyperparameters, namely \(\lambda\) and \(\eta\). In all experiments we fix \(\eta = 0.1\), sincce it only multiplies with a trainable matrix and \(\lambda\). In numerical experiments we use \(\lambda = 0.5\), while figures are generated from models with hyperparameter \(\lambda = 5.0\), though the difference in numerical performance and figure quality is marginal between the two settings (\Cref{tab:performance_for_different_lambda}).

To allow transfer learning, in all training and evaluations setups we first resize our input data to \(224\) height and width. For data augmentations during pre-training and fine-tuning, we also adopt several standard techniques: random cropping, random horizontal flipping, and random augmentation (with number of transformations \(n=2\) and magnitude of transformations \(m=14\)).

\subsection{PyTorch-Like Pseudocode}\label{app:experiment_pseudocode}

\begin{lstlisting}[language=Python, caption=PyTorch-Like Code for MSSA and ISTA]
class ISTA:
    # initialization
    def __init__(self, dim, hidden_dim, dropout = 0., step_size=0.1, lambd=0.1):
        super().__init__()
        self.weight = nn.Parameter(torch.Tensor(dim, dim))
        with torch.no_grad():
            init.kaiming_uniform_(self.weight)
        self.step_size = step_size
        self.lambd = lambd
    # forward pass
    def forward(self, x):
        x1 = F.linear(x, self.weight, bias=None)
        grad_1 = F.linear(x1, self.weight.t(), bias=None)
        grad_2 = F.linear(x, self.weight.t(), bias=None)
        grad_update = self.step_size * (grad_2 - grad_1) - self.step_size * self.lambd
        output = F.relu(x + grad_update)
        return output

class MSSA:
    # initialization
    def __init__(self, dim, heads = 8, dim_head = 64, dropout = 0.):
        inner_dim = dim_head * heads
        project_out = not (heads == 1 and dim_head == dim)
        self.heads = heads
        self.scale = dim_head ** -0.5
        self.attend = Softmax(dim = -1)
        self.dropout = Dropout(dropout)
        self.qkv = Linear(dim, inner_dim, bias=False)
        self.to_out = Sequential(Linear(inner_dim, dim), Dropout(dropout)) if project_out else nn.Identity()
    # forward pass
    def forward(self, x):
        w = rearrange(self.qkv(x), 'b n (h d) -> b h n d', h = self.heads)
        dots = matmul(w, w.transpose(-1, -2)) * self.scale
        attn = self.attend(dots)
        attn = self.dropout(attn)
        out = matmul(attn, w)
        out = rearrange(out, 'b h n d -> b n (h d)')
        return self.to_out(out)
\end{lstlisting}

\begin{lstlisting}[language=Python, caption=PyTorch-Like Code for \ours{} Encoder]
class CRATE_Encoder:
    # initialization
    def __init__(self, dim, depth, heads, dim_head, mlp_dim, dropout = 0.):
        self.layers = []
        self.depth = depth
        for _ in range(depth):
            self.layers.extend([LayerNorm(dim), MSSA(dim, heads, dim_head, dropout)])
            self.layers.extend([LayerNorm(dim), ISTA(dim, mlp_dim, dropout)])
    # forward pass
    def forward(self, x):
        for ln1, attn, ln2, ff in self.layers:
            x_ = attn(ln1(x)) + x
            x = ff(ln2(x_))
        return x
\end{lstlisting}

\begin{lstlisting}[language=Python, caption=PyTorch-Like Code for \ours{} Decoder]
class CRATE_Decoder:
    # initialization
    def __init__(self, dim, depth, heads, dim_head, mlp_dim, dropout = 0.):
        # define layers
        self.layers = []
        self.depth = depth
        for _ in range(depth):
            self.layers.extend([LayerNorm(dim), Linear(in_features=dim, out_features=dim, bias=False)])
            self.layers.extend([LayerNorm(dim), MSSA(dim, heads, dim_head, dropout)])
    # forward pass
    def forward(self, x):
        for ln1, f_linear, ln2, attn in self.layers:
            x_ = f_linear(ln1(x))
            x = ln2(x_) - attn(ln2(x_))
        return x
\end{lstlisting}

\subsection{Visualization Methodology} \label{app:viz_methodology}

In this subsection we formally describe the procedures we used to generate the visualizations used to evaluate the segmentation property of \ours{} in \Cref{fig:pca_attn}. Much of this evaluation is the same as in \citet{yu2023emergence}, which initially demonstrates the emergent segmentation properties of white-box architectures.

\subsubsection{PCA Visualizations}\label{app:pca}

We recapitulate the method to visualize the patch representations in \Cref{fig:pca} using PCA from \citet{amir2021deep, oquab2023dinov2,yu2023emergence}.

We first select $J$ images that belong to the same class, $\{\vX_j\}_{j=1}^{J}$, and extract the token representations for each image at layer \(\ell\), i.e., $\mat{{\vz}_{j, 1}^{\ell}, \dots, \vz_{j, N}^{\ell}}$ for $j \in [J]$. In particular, $\vz_{j, i}^{\ell}$ represents the $i^{\mathrm{th}}$ token representation at the $\ell^{\mathrm{th}}$ layer for the $j^{\mathrm{th}}$ image. We then compute the first principal components of $\widehat{\vZ}^{\ell} = \{\widehat{\vz}_{1, 1}^{\ell}, \dots, \widehat{\vz}_{1, N}^{\ell}, \dots, \widehat{\vz}_{J, 1}^{\ell}, \dots, \widehat{\vz}_{J, N}^{\ell} \}$, and use $\widehat{\vz}_{j, i}^{\ell}$ to denote the aggregated token representation for the $i$-th token of $\vX_j$, i.e., $\widehat{\vz}_{j, i}^{\ell}=[(\vU_1^{*}\widehat{\vz}_{j, i}^{\ell})^{\top}, \dots, (\vU_K^{*}\widehat{\vz}_{j, i}^{\ell})^{\top}]^{\top} \in \R^{(p\cdot K)\times 1}$. 
We denote the first eigenvector of the matrix $\widehat{\vZ}^{*}\widehat{\vZ}$ by $\vu_0$ and compute the projection values as $\{\sigma_{\lambda}(\langle \vu_0, \vz_{j, i}^{\ell} \rangle)\}_{i, j}$,
where $\sigma_{\lambda}(x)=\begin{cases}x, & \abs{x} \geq \lambda \\ 0, & \abs{x} < \lambda\end{cases}$ is the hard-thresholding function. 
We then select a subset of token representations from $\widehat{\vZ}$ with $\sigma_{\lambda}(\langle \vu_0, \vz_{j, i}^{\ell} \rangle)>0$. 
which correspond to non-zero projection values after thresholding, and we denote this subset as $\widehat{\vZ}_{s}\subseteq \widehat{\vZ}$. 
This selection step is used to remove the background~\citep{oquab2023dinov2}.
We then compute the first three right singular vectors of  $\widehat{\vZ}_{s}$ with the first three eigenvectors of the matrix $\widehat{\vZ}_{s}^{*}\widehat{\vZ}_{s}$ denoted as $\{\vu_1, \vu_2, \vu_3\}$. We define the RGB tuple for each token as:
\begin{equation}
    [r_{j,i}, g_{j,i}, b_{j,i}] = [\langle\vu_1, \vz_{j, i}^{\ell} \rangle, \langle \vu_2, \vz_{j, i}^{\ell} \rangle, \langle \vu_3, \vz_{j, i}^{\ell} \rangle], \quad i \in [N], \, j \in [J], \vz_{j, i}^{\ell}\in \widehat{\vZ}_s.
\end{equation}
Next, for each image $\vX_j$ we compute $\bm{R}_j, \bm{G}_j, \bm{B}_j$, where $\bm{R}_j=[r_{j,1}, \dots, r_{j,N}]^{\top}\in \R^{d\times 1}$ (similar for $\bm{G}_j $ and $\bm{B}_j$). 
Then we reshape the three matrices into $\sqrt{N}\times\sqrt{N}$ and visualize the ``principal components'' of image $\vX_j$ via the RGB image $(\bm{R}_j, \bm{G}_j, \bm{B}_j)\in \R^{3 \times \sqrt{N}\times\sqrt{N}}$. 

\subsubsection{Visualizing Attention Maps}\label{app:attn_maps}

We recapitulate the method to visualize attention maps in \citet{abnar2020quantifying, caron2021emerging}.

For the \(k\th\) head at the \(\ell\th\) layer of the encoder of \ours{}, we compute the \textit{self-attention map} $\vA_{k}^{\ell}\in\R^{N}$ defined as follows:
\begin{equation}\label{eq:def_attention_map_crate}
    \vA_{k}^{\ell}= \mat{A_{k, 1}^{\ell} \\ \vdots \\ A_{k, N}^{\ell}} \in \R^{N}, \quad \text{where} \quad A_{k, i}^{\ell} = \frac{\exp(\langle \vU_{k}^{\ell*}\vz_{i}^{\ell}, \vU_{k}^{\ell*}\vz_{\cls}^{\ell}\rangle)}{\sum_{j=1}^{N}\exp(\langle \vU_{k}^{\ell*}\vz_{j}^{\ell}, \vU_{k}^{\ell*}\vz_{\cls}^{\ell}\rangle)}.   
\end{equation}
where $\vz_{\cls}^{\ell}$ is the \(\ell\th\) layer representation of the class token.

For each image, we reshape the attention matrix \(\vA_{k}^{L - 1}\) for the penultimate layer \(L - 1\) into a \(\sqrt{N}\times\sqrt{N}\) matrix and visualize the heatmaps as shown in \Cref{fig:attention_maps}. For example, the \(i\th\) row and the \(j\th\) column element of each heatmap in \Cref{fig:attention_maps} corresponds to the \(m\th\) component of \(\vA_{k}^{\ell}\), where \(m = (i - 1) \cdot \sqrt{N} + j\). 
In \Cref{fig:attention_maps}, for each image we select one attention head \(k\) of \ours{} and visualize the attention matrix \(\vA_{k}^{L - 1}\).

\subsection{Additional Experiments} \label{app:additional_experiments}

In this section we perform more experiments to explore properties of the \ours{} architecture. 

\begin{table*}[t!]
    \centering
    \caption{\small \textbf{Top-1 classification accuracy of \ours{-Base}
        when pre-trained on ImageNet-1K and fine-tuned on classification for various datasets.} We compare a fine-tuned model which was pre-trained on the MAE task with a model trained from scratch on classification (``\textit{random init}'') using exactly the same experimental conditions. Our results show that the representation learning occurring during pre-training substantially improves performance on downstream tasks. 
    }
    \label{tab:comparison_with_random_init}
    \small
    \setlength{\tabcolsep}{13.6pt}
        \begin{tabular}{@{}lccccc@{}}
            \toprule
            \textbf{Classification Performance} & CIFAR 10 & CIFAR 100 & Oxford Flowers & Oxford-Pets
            \\ 
            \midrule
            \midrule
            \ours{-Base} (trained) & 96.8 & 80.3 &
            78.5 & 76.7 \\
            \ours{-Base} (random init) & 85.1 & 58.8 & 38.0  & 28.8 \\ 
            \bottomrule
        \end{tabular}%
\end{table*}

First, in \Cref{tab:comparison_with_random_init}, we compare the fine-tuning performance of an \ours{-Base} model with MAE pretraining against an \ours{-Base} model with no pretraining at all (i.e., randomly initialized). We apply the same fine-tuning process to both models, and we observe a massive disparity in performance, where the pre-trained model succeeds while the randomly initialized model performs poorly. This indicates that the organized representations of a pre-trained \ours{-Base} model are a strong starting point when fine-tuning for downstream tasks.

\begin{table*}[t!]
    \centering
    \caption{\small \textbf{Average reconstruction loss over the training and validation sets of ImageNet-1K for both \ours{-Base} and ViT-MAE-Base}. We see that the performance of \ours{-Base}, while a bit worse than ViT-MAE-Base, obtains promising performance on the challenging masked autoencoding task.
    }
    \label{tab:reconstruction_loss}
    \small
    \setlength{\tabcolsep}{13.6pt}
        \begin{tabular}{@{}lcc@{}}
            \toprule
            \textbf{Reconstruction Loss} & Training Loss & Validation Loss 
            \\ 
            \midrule
            \midrule
            \ours{-Base} & 0.265 & 0.302 \\
            ViT-MAE-Base & 0.240 & 0.267 
            \\
            \bottomrule
        \end{tabular}%
\end{table*}

Next, in \Cref{tab:reconstruction_loss}, we evaluate the reconstruction loss (measured in mean-squared error) of \ours{-Base} versus ViT-MAE-Base, evaluated on the training and test sets of ImageNet-1K. We observe that while \ours{-Base} performs slightly worse at masked reconstruction, the performance is still very reasonable.

\begin{table*}[t!]
    \centering
    \caption{\small \textbf{Top-1 classification accuracy of \ours{-Base} when pre-trained on ImageNet-1K and linear probed for classification on CIFAR-10, when pre-trained using different mask percentage (i.e., number of masked tokens in each sample).} This shows that \ours{} models with 75\% of the tokens masked during training tend to have the most structured representations that are useful for downstream tasks, an empirical conclusion that echoes \citet{he2022masked}, but a wide range of mask percentages result in good representations. (\textit{Note:} This table uses the hyperparameter setting \(\lambda = 5.0\).)
    }
    \label{tab:performance_for_different_mask_sizes}
    \small
    \setlength{\tabcolsep}{13.6pt}
        \begin{tabular}{@{}lcccc@{}}
            \toprule 
            \textbf{Classification Accuracy} & 25\% Masked & 50\% Masked & 75\% Masked & 90\% Masked \\ 
            \midrule 
            \midrule 
            \ours{-Base} & 69.78 & 75.97 & 75.99 & 73.45 \\
            \bottomrule
        \end{tabular}%
\end{table*}

In \Cref{tab:performance_for_different_mask_sizes}, we check the downstream performance and feature learning of \ours{-Base} models when varying the mask size. We use test accuracy of linear probing on CIFAR10 as a proxy for the quality of the learned features. Our results show that the performance is maximized when the number of masked tokens is 75\% of the number of total tokens, i.e., when 75\% of tokens are maxed out. This confirms the experiments in \citet{he2022masked}. 

\begin{table*}[t!]
    \centering
    \caption{\small \textbf{Top-1 classification accuracy of \ours{-Base} when pre-trained on ImageNet-1K and linear probed for classification on various datasets, when pre-trained using different \(\lambda\).} This shows that \ours{} models perform reasonably well at \ours{-Base} scale across different values of \(\lambda\).
    }
    \label{tab:performance_for_different_lambda}
    \small
    \setlength{\tabcolsep}{13.6pt}
        \begin{tabular}{@{}lccc@{}}
            \toprule 
            \textbf{Classification Accuracy} & \(\lambda = 0.1\) & \(\lambda = 0.5\) & \(\lambda = 5.0\) \\ 
            \midrule 
            \midrule 
            \ours{-Base} & 83.33 & 80.87 & 75.99 \\
            \bottomrule
        \end{tabular}%
\end{table*}

In \Cref{tab:performance_for_different_lambda}, we check the downstream performance and feature learning of \ours{-Base} models when varying the hyperparameter \(\lambda\). We again use test accuracy of linear probing on CIFAR10 as a proxy for the quality of the learned features. Our results show that the performance is maximized when \(\lambda = 0.1\), but all models perform reasonably well at \ours{-Base} scale.

\begin{figure}[t!]
    \centering
    \includegraphics[width=\textwidth]{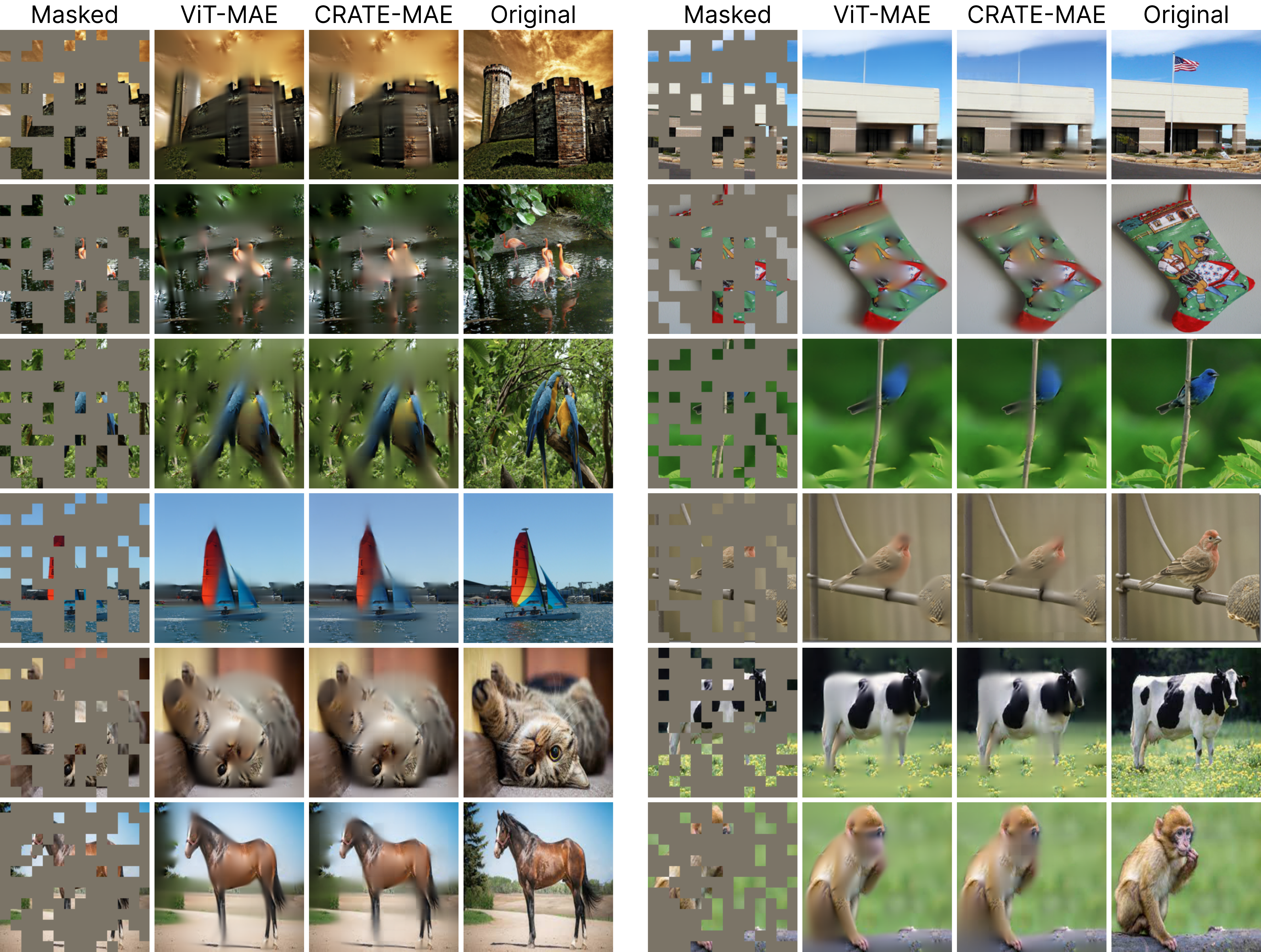}
    \caption{\small \textbf{More instances of parity between \ours{-Base} and ViT-MAE-Base in the masked autoencoding task.} Echoing the message of \Cref{fig:mae_autoencoding}, we find that \ours{-Base} and ViT-MAE-Base have similar performance on the masked autoencoding task, even as the \ours{-Base} model is significantly more parameter-efficient.}
    \label{fig:mae_autoencoding_extended}
\end{figure}

In \Cref{fig:mae_autoencoding_extended} we provide more examples of the masked autoencoding efficacy of both \ours{-Base} and ViT-MAE-Base. Our results confirm those of \Cref{fig:mae_autoencoding}, namely \ours{-Base} achieves parity with the much larger ViT-MAE-Base on the challenging masked autoencoding task.

\begin{figure}[t!]
    \centering 
    \includegraphics[width=\textwidth]{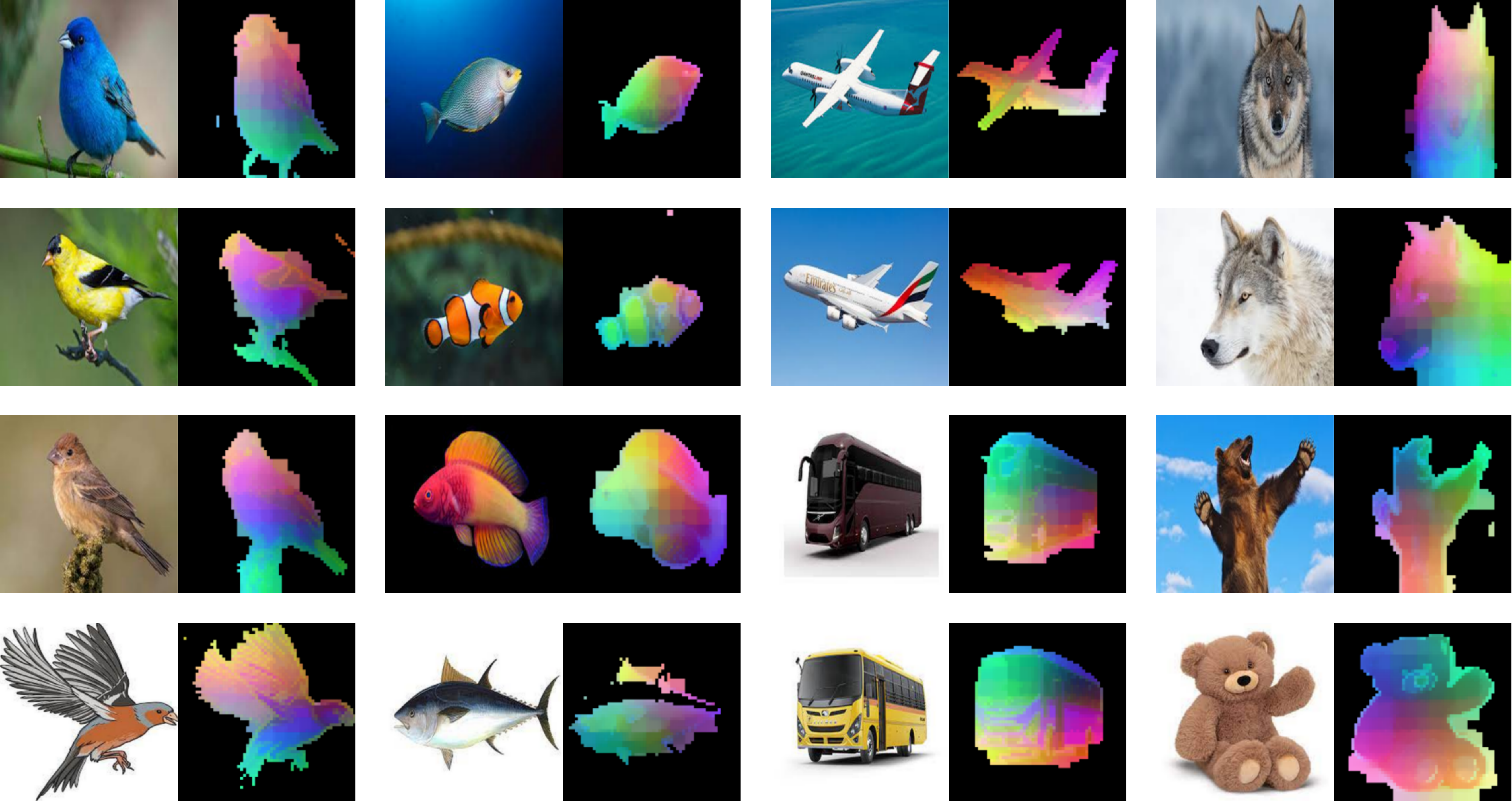}
    \caption{\small \textbf{More examples of linearized representations in \ours{-Base}.} Echoing the message of \Cref{fig:pca}, we find that \ours{-Base} has a linear feature space. In particular, the first three principal components strongly correlate with the main semantic content of the image.}
    \label{fig:pca_full}
\end{figure}

In \Cref{fig:pca_full}, we provide more examples of the linearity of the representations within \ours{-Base}. We see that over a wide variety of images, the first three principal components of each class correlate strongly with semantically meaningful patches of the input image. These results extend \Cref{fig:pca}.

\begin{figure}[t!]
    \centering 
    \includegraphics[width=\textwidth]{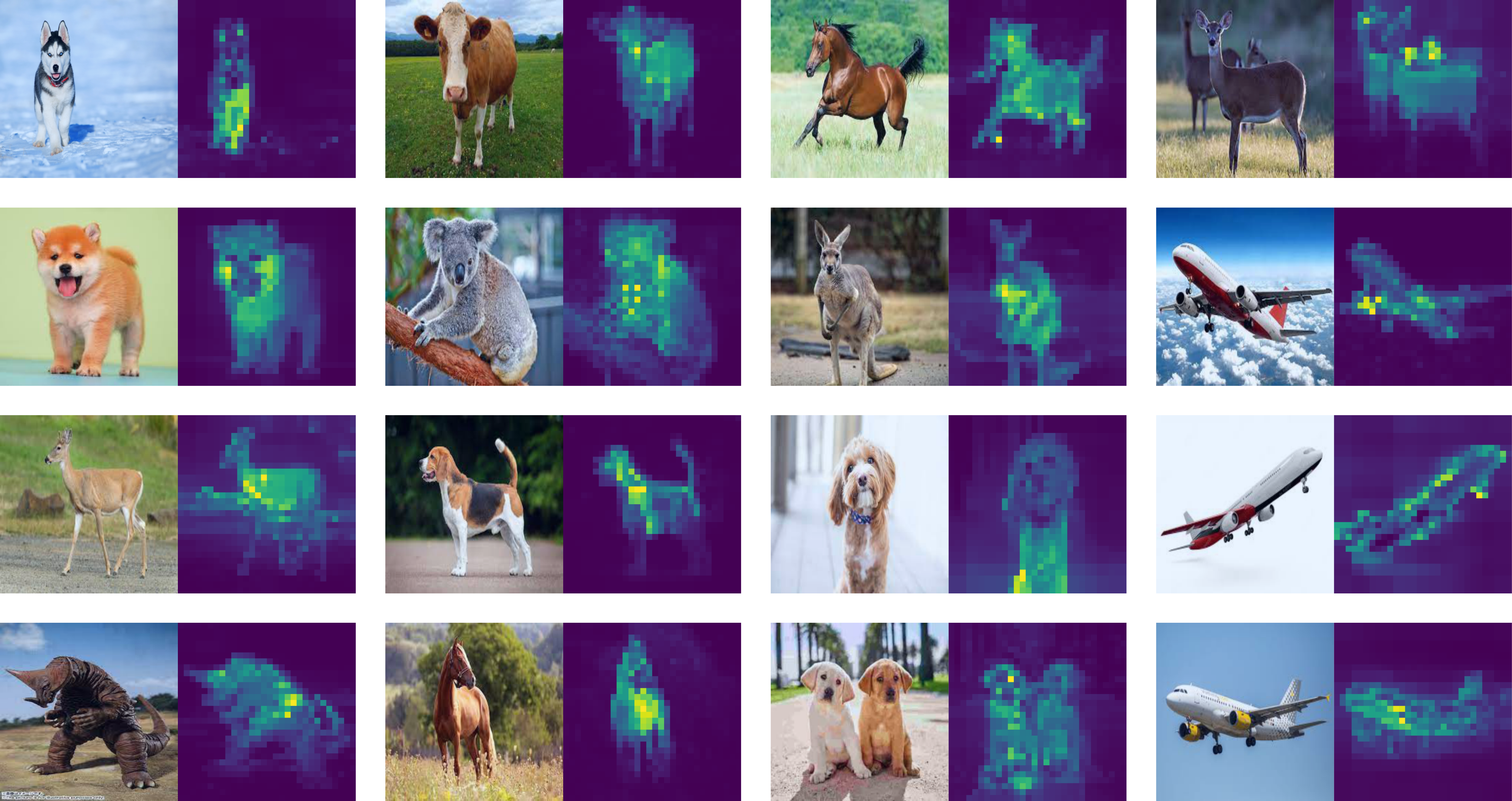}
    \caption{\small \textbf{More examples of interpretable attention maps in \ours{-Base}.} Echoing the message of \Cref{fig:attention_maps}, we find that \ours{-Base} has human-interpretable attention maps which semantically segment the foreground of input images.}
    \label{fig:attention_maps_full}
\end{figure}

In \Cref{fig:attention_maps_full}, we provide more examples of the interpretability of selected attention outputs within \ours{-Base}. We see that over a wide variety of images, the attention map succeeds at capturing many semantics of the input image. These results extend \Cref{fig:attention_maps}.

\begin{figure}[t!]
    \centering 
    \begin{subfigure}{0.475\textwidth}
        \centering 
        \includegraphics[width=\textwidth]{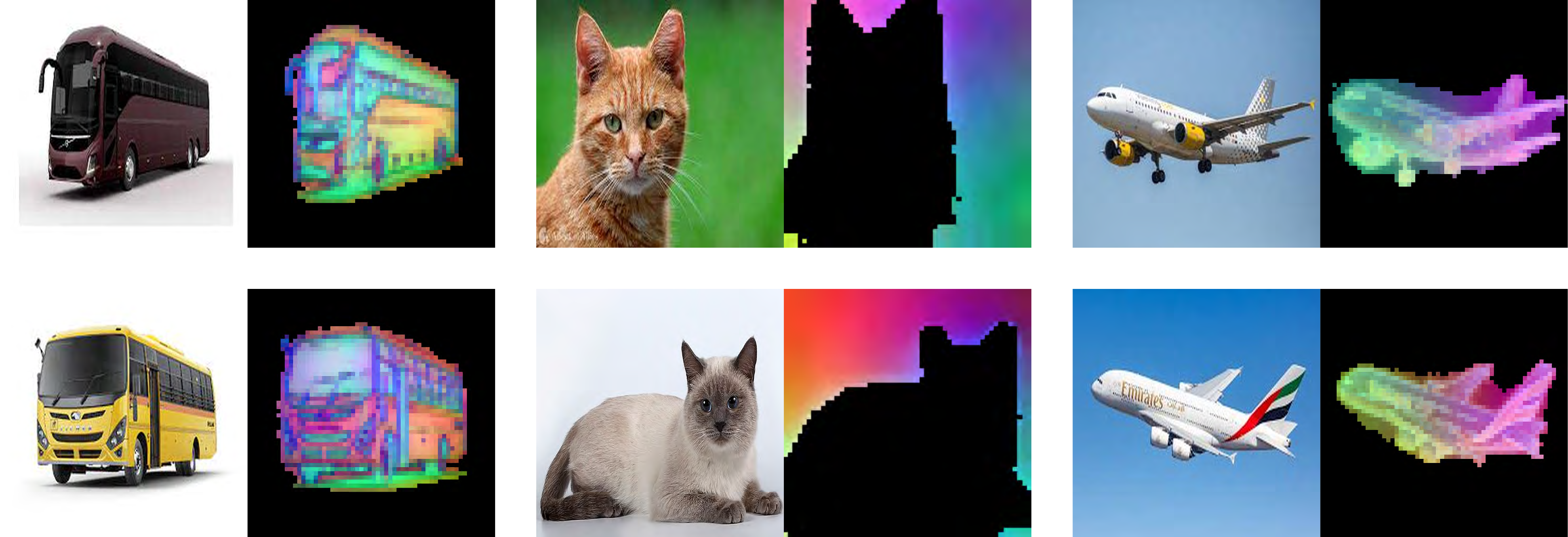}
        \caption{Visualizing PCA of ViT-MAE-Base.}
        \label{fig:vit_pca}
    \end{subfigure}
    \hspace{0.025\textwidth}
    \begin{subfigure}{0.475\textwidth}
        \centering 
        \includegraphics[width=\textwidth]{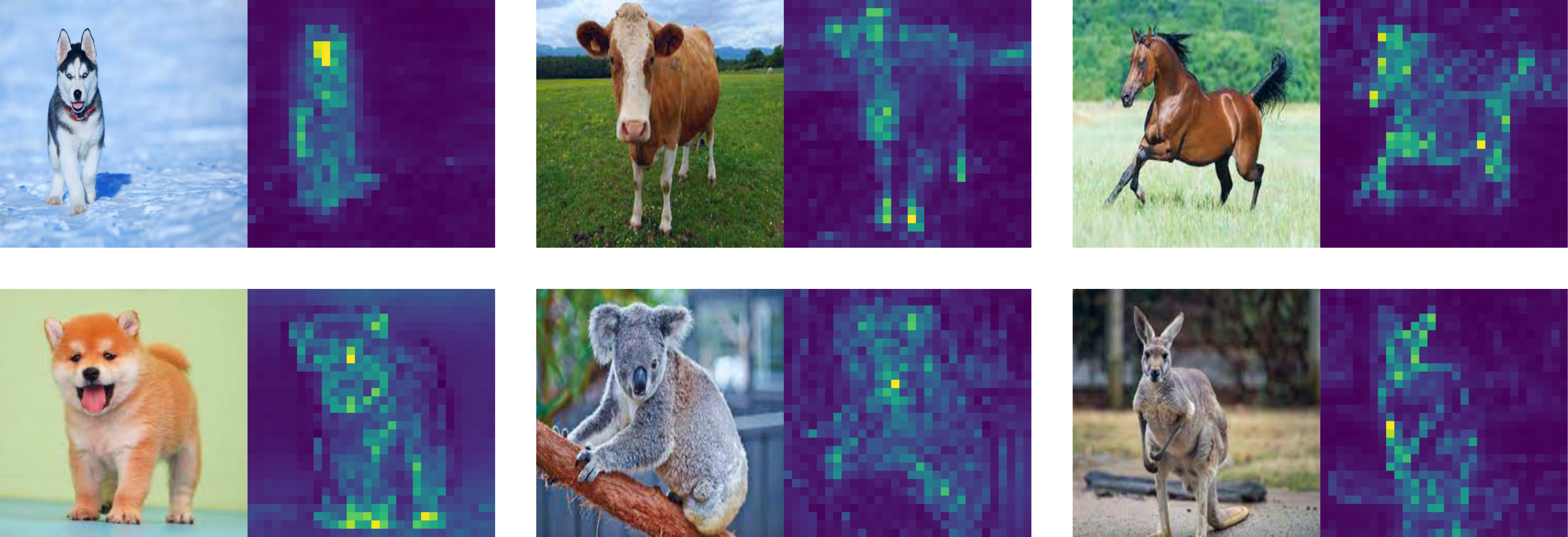}
        \caption{Visualizing attention maps of ViT-MAE-Base.}
        \label{fig:vit_attention}
    \end{subfigure}
    \caption{\small \textbf{A comparison of CRATE to ViT-MAE in the setting of \Cref{fig:pca_attn}.} We use the visualizations of PCA on the token representations and attention maps, introduced in \Cref{fig:pca_attn}, to qualitatively evaluate the representation quality of the ViT-MAE-Base. By comparing this figure (\Cref{fig:vit_pca_attn}) and \Cref{fig:pca_attn}, we observe that \ours{-Base} attention maps contain more clear semantics than those from ViT-MAE-Base, while both \ours{-Base} and ViT-MAE-Base have nearly-linear representation spaces wherein semantic concepts correspond to the first three principal components.}
    \label{fig:vit_pca_attn}
\end{figure}

In \Cref{fig:vit_pca_attn} we apply (nearly) the same methodology involved in constructing \Cref{fig:pca_attn}, a qualitative demonstration of the feature quality of the \ours{-Base} encoder, to evaluate the feature quality of the ViT-MAE-Base encoder. One necessary difference is that to evaluate the attention maps, the ViT does not have the \(\vU_{[K]}^{\ell}\) matrices that CRATE does, but instead has three sets of matrices \(\vQ_{[K]}^{\ell}\), \(\vK_{[K]}^{\ell}\), and \(\vV_{[K]}^{\ell}\); thus, we construct the attention maps via the following equation:
\begin{equation}
    A_{k, i}^{\ell} = \frac{\exp(\langle \vK_{k}^{\ell}\vz_{i}^{\ell}, \vQ_{k}^{\ell}\vz_{\cls}^{\ell} \rangle)}{\sum_{j = 1}^{N}\exp(\langle \vK_{k}^{\ell}\vz_{j}^{\ell}, \vQ_{k}^{\ell}\vz_{\cls}^{\ell} \rangle)}.
\end{equation}
Overall, \Cref{fig:pca_attn} and \Cref{fig:vit_pca_attn} demonstrate that, at least empirically, the attention semantics in \ours{-Base} are significantly better and clearer than ViT-MAE-Base. The reason that \ours{} models have semantically meaningful attention maps may be due to our white-box design, namely the fact that in \ours{} we have \(\vQ_{[K]}^{\ell} = \vK_{[K]}^{\ell} = \vV_{[K]}^{\ell} = \vU_{[K]}^{\ell *}\); indeed, the fact that setting \(\vQ_{[K]}^{\ell} = \vK_{[K]}^{\ell} = \vV_{[K]}^{\ell} = \vU_{[K]}^{\ell *}\) yields semantically meaningful attention maps has been shown in other work, albeit in a different setting \citep{yu2023emergence}. The reason that the ViT-MAE has semantically meaningful attention maps, albeit not as clear and worse than \ours{}, may be due to several different factors such as using the class token as a register \citep{darcet2023vision}. Nevertheless, the features in both models have linear structures, or at least each class' representations have three principal components which correlate strongly with the semantics of the class.

\end{document}